\documentclass{article}

\usepackage{microtype}
\usepackage{graphicx}
\usepackage{subcaption}
\usepackage{booktabs} 
\usepackage{amsmath}

\DeclareMathOperator*{\argmin}{arg\,min}

\usepackage{hyperref}

\usepackage[preprint]{icml2026}

\usepackage{amsmath}
\usepackage{amssymb}
\usepackage{mathtools}
\usepackage{amsthm}

\usepackage[capitalize,noabbrev]{cleveref}

\usepackage{aliascnt}  

\theoremstyle{plain}

\newaliascnt{proposition}{theorem}
\newtheorem{proposition}[proposition]{Proposition}
\aliascntresetthe{proposition}

\newaliascnt{lemma}{theorem}

\aliascntresetthe{lemma}

\newaliascnt{corollary}{theorem}

\aliascntresetthe{corollary}

\theoremstyle{definition}
\newaliascnt{definition}{theorem}

\aliascntresetthe{definition}

\newaliascnt{assumption}{theorem}

\aliascntresetthe{assumption}

\theoremstyle{remark}
\newaliascnt{remark}{theorem}
\newtheorem{remark}[remark]{Remark}
\aliascntresetthe{remark}

\crefname{theorem}{Theorem}{Theorems}
\crefname{proposition}{Proposition}{Propositions}
\crefname{lemma}{Lemma}{Lemmas}
\crefname{corollary}{Corollary}{Corollaries}
\crefname{definition}{Definition}{Definitions}
\crefname{assumption}{Assumption}{Assumptions}
\crefname{remark}{Remark}{Remarks}

\usepackage[textsize=tiny]{todonotes}

\icmltitlerunning{TFTF: Training-Free Targeted Flow}

\begin{document}

\twocolumn[
  \icmltitle{TFTF: Training-Free Targeted Flow for\\Conditional Sampling}



  \icmlsetsymbol{equal}{*}

  \begin{icmlauthorlist}
    \icmlauthor{Qianqian Qu}{zhili}
    \icmlauthor{Jun S. Liu}{thu,harvard}
  \end{icmlauthorlist}

  \icmlaffiliation{zhili}{Zhili College, Tsinghua University, Beijing, China}
  \icmlaffiliation{thu}{Department of Statistics and Data Science, Tsinghua University, Beijing, China}
  \icmlaffiliation{harvard}{Department of Statistics, Harvard University, Cambridge, MA, USA}

  \icmlcorrespondingauthor{Jun S. Liu}{junsliu@mail.tsinghua.edu.cn}


  \vskip 0.3in
]



\printAffiliationsAndNotice{}  

\begin{abstract}
  We propose a training-free conditional sampling method for flow matching models based on importance sampling. Because a naïve application of importance sampling suffers from weight degeneracy in high-dimensional settings, we modify and incorporate a resampling technique in sequential Monte Carlo (SMC) during intermediate stages of the generation process. 
  To encourage generated samples to diverge along distinct trajectories, we derive a stochastic flow with adjustable noise strength to replace the deterministic flow at the intermediate stage. Our framework requires no additional training, while providing theoretical guarantees of asymptotic accuracy. Experimentally, our method significantly outperforms existing approaches on conditional sampling tasks for MNIST and CIFAR-10. We further demonstrate the applicability of our approach in higher-dimensional, multimodal settings through text-to-image generation experiments on CelebA-HQ.
\end{abstract}

\section{Introduction}
\label{sec:intro}


Flow matching models \cite{lipman2022flow, liu2022flow} have emerged as powerful generative frameworks capable of learning complex data distributions. When pre-trained on large-scale datasets, these models can generate new samples that (approximately) follow the underlying distribution $p_{\text{data}}(\cdot)$. However, practical applications often require conditional sampling, that is, generating samples that satisfy specific properties. In such cases, the objective shifts to sampling from the conditional distribution \( p_{\text{data}}(\cdot| y) \), where \( y \) denotes the target property. By Bayes' theorem, this distribution is proportional to \( p_{\text{data}}(\cdot) \, \tilde{p}(y| \cdot) \), where \( \tilde{p}(y| \cdot) \) denotes the likelihood evaluated at the given conditioning value \( y \).

Recent advances in conditional generation can be broadly categorized into two approaches: training-based methods \cite{zheng2023guided, holzschuh2024flow} and training-free methods \cite{ben2024d, wang2024training, patel2025flowchef, song2024flow}. Training-based methods incorporate target properties through fine-tuning or retraining, whereas training-free methods leverage pre-trained unconditional models in conjunction with off-the-shelf, time-independent likelihood functions to achieve conditional generation. While both training-based and training-free methods have achieved considerable success, each approach exhibits fundamental limitations. Training-based methods depend on labeled data, which can be both scarce and costly to acquire, and require considerable computational resources for model training. In contrast, training-free methods circumvent these data and computational requirements. However, existing training-free approaches require differentiating through the ODE solver, demonstrate sensitivity to random seed initialization, and lack theoretical guarantees for sampling from the target distribution $p_{\text{data}}(\cdot| y)$.

In this paper, we propose a training-free conditional sampling method for flow matching models based on importance sampling (IS) \cite{liu2001monte}. The key idea is to construct a proposal distribution by directly combining a pre-trained unconditional model with an off-the-shelf likelihood function \( \tilde{p}(y| \cdot) \), then apply importance weighting to correct for the discrepancy from the target $p_{\text{data}}(\cdot| y)$. However, a na\"ive application of this approach presents two main challenges. First, IS suffers from weight degeneracy in high-dimensional settings \cite{tokdar2010importance}. Specifically, when the dimensionality of the data space $\mathbb{R}^d$ is large, the majority of the generated samples receive disproportionately low weights and thus contribute negligibly to approximating the target distribution. This issue suggests a potential improvement: evaluating the ``quality'' of partially generated samples at intermediate stages and terminating the simulation of low-quality ones before completion. Since this early stopping mechanism reduces the total number of samples generated, a possible remedy is to resample from the existing high-quality partial samples to maintain the desired sample count \cite{liu1998sequential}. This solution, however, introduces a second challenge: the generation process of flow matching is based on ordinary differential equations (ODEs) and follows a fully deterministic dynamics. Consequently, when a partially generated sample is resampled multiple times, the resulting copies will follow the same trajectory and produce identical outputs, failing to explore diverse regions of the target distribution.

To address these challenges, we first adopt the Sequential Monte Carlo (SMC) resampling technique \cite{liu1998sequential}  to prune unpromising partial samples and enrich the particle set with promising ones during generation.
To allow repeatedly sampled particles to traverse distinct generative paths, we derive a stochastic flow with adjustable noise strength to replace the deterministic flow at the intermediate stage. Furthermore, at the final time step, we introduce an extended target distribution that admits $p_{\text{data}}(\cdot| y)$ as its marginal, hence enabling principled reweighting of the generated samples. Through this framework, we obtain a set of properly weighted samples that converge to the target distribution $p_{\text{data}}(\cdot| y)$ with weight balance.

Our approach exhibits several desirable properties. First, it seamlessly integrates the pretrained unconditional model with the time-independent likelihood function $\tilde{p}(y| \cdot)$ in a plug-and-play manner, requiring neither fine-tuning nor retraining. Second, the method is theoretically grounded in IS, enabling the generation of weighted samples that provide an asymptotically exact representation of the target distribution $p_{\text{data}}(\cdot| y)$. Third, by incorporating the SMC resampling technique at intermediate stages, we mitigate the weight degeneracy of IS in high-dimensional settings, thereby ensuring that the generated samples maintain consistently high quality with high effective sample size (ESS) \cite{liu2001monte}. Experimentally, we begin with a toy example to demonstrate that our method produces an accurate particle-based representation of $p_{\text{data}}(\cdot| y)$ in the asymptotic regime. We then evaluate class-conditional sampling on MNIST \cite{lecun2002gradient} and CIFAR-10 \cite{krizhevsky2009learning}, in which our method generates high-fidelity samples that adhere to specified class labels. Finally, we extend to text-to-image generation on CelebA-HQ \cite{karras2017progressive}, demonstrating the applicability of our approach to higher-dimensional, multimodal settings.

\section{Background}

\subsection{Flow Matching Models}
\label{subsec:fm}

Suppose a dataset comprises i.i.d. samples drawn from an unknown distribution $p_{\text{data}}(\cdot)$. The objective of generative modeling is to leverage this dataset to learn a model capable of generating new samples from $p_{\text{data}}(\cdot)$. 

Consider the stochastic process constructed as $\{\tilde{X}(t) = t\tilde{X}(1) + (1-t)\tilde{X}(0): t \in [0,1]\}$, where $\tilde{X}(0) \sim \mathcal{N}(0,I)$ and $\tilde{X}(1) \sim p_{\text{data}}$ are independent. 
This process induces a continuous-time family of marginal densities $\{f(\cdot,t): t \in [0,1]\}$ that interpolates between $\mathcal{N}(0,I)$ at $t=0$ and $p_{\text{data}}(\cdot)$ at $t=1$. Specifically, for $t \in (0,1)$, this interpolation path is defined through the conditional distribution
\begin{equation}
p(\tilde{x}(t)| \tilde{x}(1)) = \mathcal{N}(\tilde{x}(t); t\tilde{x}(1), (1-t)^2I),
\label{eq:cond}
\end{equation}
from which the marginal density follows as
\begin{equation}
f(\tilde{x}(t),t) = \int p(\tilde{x}(t)| \tilde{x}(1))p_{\text{data}}(\tilde{x}(1))d\tilde{x}(1).
\label{eq:int}
\end{equation}

Liu et al.~\yrcite{liu2022flow} propose to learn a velocity field $v_\theta(x,t): \mathbb{R}^d\times[0,1] \to \mathbb{R}^d$ by minimizing the following regression objective:
\begin{equation}
\theta^* = \argmin_\theta \int_0^1 \mathbb{E}\left[\| \tilde{X}(1) - \tilde{X}(0) - v_\theta(\tilde{X}(t),t)\| ^2\right] dt.
\end{equation}
With sufficient data and model capacity, the optimal velocity field $v_{\theta^*}(x, t)$ recovers the conditional expectation $\mathbb{E}[\tilde{X}(1) - \tilde{X}(0)| \tilde{X}(t) = x]$ almost everywhere.

Generating samples from $p_\text{data}(\cdot)$ is done by drawing $x(0)$ from $\mathcal{N}(0,I)$ and integrating the learned velocity field forward in time using the ODE:
\begin{equation}
dx(t) = v_{\theta^*}(x(t),t)dt.
\end{equation}
The continuity equation \cite{oksendal2003stochastic} ensures that the ODE $dX(t) = v_{\theta^*}(X(t),t)dt$ transports an initial random variable $X(0) \sim \mathcal{N}(0,I)$ forward in time such that the evolving marginal density of $X(t)$ coincides with $f(\cdot,t)$ for all $t \in [0,1]$. Consequently, up to numerical integration error, the final state $x(1)$ constitutes an exact sample from the data distribution $p_{\text{data}}(\cdot)$. 

\subsection{Density Function along an ODE Path}
\label{subsec:instantaneous}

Let $\{X(t): t \in [t_1, t_2]\}$ denote a stochastic process governed by the ODE:
\begin{equation}
dX(t) = v(X(t),t)\,dt, \quad X(t_1) \sim p(\cdot,t_1).
\end{equation}
We denote by $p(\cdot, t)$ the marginal density function of $X(t)$, which is jointly determined by the velocity field $v(x,t)$ and the initial distribution $p(\cdot,t_1)$. 

Given a tractable initial distribution $p(\cdot,t_1)$, the instantaneous change of variables formula \cite{chen2018neural} enables exact computation of the probability density $p(x(t_2),t_2)$ through integration of the following coupled differential equations:
\begin{subequations} \label{eq:coupled_diff}
\begin{align}
\frac{dx(t)}{dt} &= v(x(t), t), \label{eq:coupled_diff_a}\\
\frac{d \log p(x(t), t)}{dt} &= -\nabla \cdot v(x(t), t), \label{eq:coupled_diff_b}
\end{align}
\end{subequations}
where $\nabla \cdot v(x(t), t) = \text{Tr}\left(\frac{\partial v(x(t), t)}{\partial x(t)}\right)$. This yields 
\begin{equation}
p(x(t_2), t_2) = p(x(t_1), t_1)e^{-\int_{t_1}^{t_2}\nabla \cdot v(x(t),t)dt}.
\label{eq:instant_final}
\end{equation}

\subsection{Sequential Monte Carlo}
\label{subsec:smc}

SMC~\cite{liu1998sequential, doucet2001sequential} is a computational framework for approximating a sequence of evolving probability distributions $\{\pi_{t_n}(x_{t_0:t_n})\}_{n=0,1,\ldots}$ through sequential simulation of weighted particle sets. 
This framework comprises two fundamental components: (1) a sequence of target distributions $\{\pi_{t_n}(x_{t_0:t_n})\}_{n=0,1,\ldots}$, representing the evolving probability distributions to be approximated, which may be unnormalized; and (2) a sequence of proposal distributions, consisting of an initial distribution $q_{t_0}(x_{t_0})$ and subsequent proposal kernels $\{q_{t_n}(x_{t_n}| x_{t_0:t_{n-1}})\}_{n=1,2,\ldots}$.

The algorithm proceeds as follows. At the initial step $t = t_0$, $K$ particles $\{x_{t_0}^{(k)}\}_{k=1}^K$ are drawn from $q_{t_0}(\cdot)$ and assigned weights $w_{t_0}^{(k)} = \pi_{t_0}(x_{t_0}^{(k)}) / q_{t_0}(x_{t_0}^{(k)})$. For subsequent steps $n = 1, 2, \ldots$, the following operations are applied recursively:
\begin{itemize}
    \item \textbf{Resample:} Resample from $\{x_{t_0:t_{n-1}}^{(k)}\}_{k=1}^K$ with probabilities proportional to $\{w_{t_{n-1}}^{(k)}\}_{k=1}^K$ to produce an equally weighted sample set $\{x_{t_0:t_{n-1}}^{(*k)}\}_{k=1}^K$.
    \item \textbf{Propagate:} For each $k$, draw $x_{t_n}^{(k)}$ from the proposal $q_{t_n}(\cdot| x_{t_0:t_{n-1}}^{(*k)})$ and append it to $x_{t_0:t_{n-1}}^{(*k)}$ to form $x_{t_0:t_n}^{(k)} = (x_{t_0:t_{n-1}}^{(*k)}, x_{t_n}^{(k)})$.
    \item \textbf{Reweight:} Compute the importance weight $w_{t_n}^{(k)} = \pi_{t_n}(x_{t_0:t_n}^{(k)}) / [\pi_{t_{n-1}}(x_{t_0:t_{n-1}}^{(*k)}) \cdot q_{t_n}(x_{t_n}^{(k)}| x_{t_0:t_{n-1}}^{(*k)})]$ to produce the weighted sample set $\{(x_{t_0:t_n}^{(k)}, w_{t_n}^{(k)})\}_{k=1}^K$ for time $t_n$.
\end{itemize}

Through this iterative process, SMC propagates the particle approximation forward in time: at each step $t = t_n$, particles from the previous time $t_{n-1}$ are resampled according to their weights, extended via the proposal distribution, and then reweighted to align with the current target distribution $\pi_{t_n}(\cdot)$. For each time $t_n$, the resulting particle set $\{(x_{t_0:t_n}^{(k)}, w_{t_n}^{(k)})\}_{k=1}^K$ provides a properly weighted sample-based approximation that converges to the target distribution $\pi_{t_n}(\cdot)$ as $K \to \infty$ under appropriate regularity conditions \cite{del2004feynman, chopin2020introduction}. 

\begin{remark}
SMC encompasses a broader family of variants and techniques than the approach presented here; interested readers are referred to \citet{liu1998sequential} for further exposition.
\end{remark}

\section{Training-Free Conditional Sampling}
\label{sec:tfcg}

Suppose we have a pretrained velocity field $v_{\theta^*}$ that enables sampling from $p_{\text{data}}(\cdot)$. The objective of training-free conditional generation is to sample from the conditional distribution $p_{\text{data}}(\cdot| y) \propto p_{\text{data}}(\cdot)\tilde{p}(y| \cdot)$ by leveraging an off-the-shelf model $\tilde{p}(y| \cdot)$ (e.g., a time-independent classifier trained on noise-free images), without requiring any additional training.

Throughout this section, we denote by $\{\tilde{X}(t) : t \in [0,1]\}$ the stochastic process satisfying the following conditions: (i) $\tilde{X}(1) \sim p_{\text{data}}$ and $\tilde{X}(0) \sim \mathcal{N}(0, I)$ are mutually independent; (ii) intermediate states are constructed via linear interpolation, that is, $\tilde{X}(t) = t\tilde{X}(1) + (1 - t)\tilde{X}(0)$; and (iii) conditioned on $\tilde{X}(1) = \tilde{x}(1)$, the variable $Y$ satisfies $p_{Y|\tilde{X}(1)}(y|\tilde{x}(1)) = \tilde{p}(y|\tilde{x}(1))$ and is conditionally independent of $\tilde{X}(t)$ for all $t \in [0,1)$.

\subsection{Importance Sampling-Based Conditional Generation}
\label{subsec:is}

\subsubsection{Constructing Conditional Velocity Field}
\label{subsubsec:construct_vc}

In the ideal scenario, the ODE that progressively transforms the noise distribution $\mathcal{N}(0,I)$ at $t = 0$ into the target distribution $p_{\text{data}}(\cdot| y)$ at $t = 1$ takes the form $dX(t) = v^*_c(X(t),t)dt$, where the optimal conditional velocity field $v^*_c(x(t),t)$ is defined as $\mathbb{E}[\tilde{X}(1) - \tilde{X}(0)| \tilde{X}(t)=x(t), Y=y]$ (see \cref{appendix:deriv_op} for the proof). To perform training-free conditional sampling building upon the pretrained unconditional model $v_{\theta^*}$, we decompose the optimal conditional velocity field $v^*_c$ as stated in the following proposition.

\begin{proposition}[Proof in \cref{appendix:proof_prop1}]
For $t \in (0,1]$, the optimal conditional velocity field 
$v^*_c(x(t),t)$ can be decomposed as\footnote{At the boundary, $v^*_c(x,0) = \lim\limits_{t \to 0^+} v^*_c(x,t)$.}
\begin{align}
\begin{split}
v^*_c(x(t),t) = &v_{\theta^*}(x(t), t) +\\
& \frac{1 - t}{t} \cdot \nabla_{x(t)} \log p_{Y | \tilde{X}(t)}(y | x(t)).
\end{split}
\label{eq:opvc}
\end{align}
\label{prop:decomp}
\end{proposition}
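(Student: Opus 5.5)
The plan is to express both the unconditional and the conditional velocity in terms of scores, using Tweedie's formula for the Gaussian interpolation \eqref{eq:cond}. Then the Bayes decomposition of the conditional score produces the extra term $\frac{1-t}{t}\nabla\log p_{Y|\tilde X(t)}$.

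\textbf{Step 1: velocity as a function of the denoiser.} Since $\tilde X(t)=t\tilde X(1)+(1-t)\tilde X(0)$, we have $\tilde X(0)=(\tilde X(t)-t\tilde X(1))/(1-t)$ for $t<1$. Hence
\[
\tilde X(1)-\tilde X(0)=\frac{\tilde X(1)-\tilde X(t)}{1-t}.
\]
Taking conditional expectations given $\tilde X(t)=x$ yields
\[
v_{\theta^*}(x,t)=\frac{\mathbb E[\tilde X(1)\mid \tilde X(t)=x]-x}{1-t}.
\]
Condition (iii) ensures that the conditional law of $(\tilde X(0),\tilde X(1))$ given $\tilde X(t)$ and $Y=y$ is well defined. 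The same computation then gives
\[
v^*_c(x,t)=\frac{\mathbb E[\tilde X(1)\mid \tilde X(t)=x,Y=y]-x}{1-t}.
\]

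\textbf{Step 2: Tweedie's formula.} Conditional on $\tilde X(1)$, we have $\tilde X(t)\sim\mathcal N(t\tilde X(1),(1-t)^2I)$. Differentiating \eqref{eq:int} under the integral sign gives
\[
\nabla_x\log f(x,t)=\frac{t\,\mathbb E[\tilde X(1)\mid\tilde X(t)=x]-x}{(1-t)^2}.
\]
Therefore
\[
\mathbb E[\tilde X(1)\mid x]=\frac{x+(1-t)^2\nabla\log f(x,t)}{t},
\]
and so
\[
v_{\theta^*}(x,t)=\frac{x+(1-t)\nabla\log f(x,t)}{t}
\]
after simplification.

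For the conditional case, I use condition (iii): $Y$ depends on $\tilde X(t)$ only through $\tilde X(1)$. So the joint law of $(\tilde X(1),\tilde X(t))$ given $Y=y$ is $p_{\text{data}}(x_1\mid y)\,\mathcal N(x;tx_1,(1-t)^2I)$. This is the same Gaussian corruption, with prior $p_{\text{data}}(\cdot\mid y)$. Tweedie's formula applied to the conditional marginal $f_y(x,t)=p_{\tilde X(t)\mid Y}(x\mid y)$ then gives
\[
v^*_c(x,t)=\frac{x+(1-t)\nabla\log f_y(x,t)}{t}.
\]

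\textbf{Step 3: Bayes.} We have
\[
f_y(x,t)=\frac{f(x,t)\,p_{Y\mid\tilde X(t)}(y\mid x)}{p(y)},
\]
so $\nabla\log f_y=\nabla\log f+\nabla\log p_{Y\mid\tilde X(t)}(y\mid x)$. Subtracting the two expressions from Step 2 gives \eqref{eq:opvc} for $t\in(0,1)$. The case $t=1$ needs separate handling, since the formula $(\tilde X(1)-\tilde X(t))/(1-t)$ degenerates there. I would handle it either by continuity, taking the limit $t\to1^-$, or by noting that the correction factor $(1-t)/t$ vanishes at $t=1$. In the latter case one must argue that $v^*_c(x,1)=v_{\theta^*}(x,1)$, which holds under the limit interpretation.

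\textbf{Main obstacle.} The substantive step is the conditional Tweedie argument in Step 2. It requires using condition (iii) carefully so that conditioning on $Y$ only reweights the prior on $\tilde X(1)$ and leaves the Gaussian kernel \eqref{eq:cond} unchanged. It also requires regularity assumptions that justify differentiating under the integral sign, such as $p_{\text{data}}$ having finite second moment and $p_{Y\mid\tilde X(t)}(y\mid x)>0$. I would state these assumptions explicitly. Everything else is algebra.
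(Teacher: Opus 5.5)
For $t\in(0,1)$ your argument is the paper's proof. The paper writes $v^*_c$ as $\frac{x}{t}+\frac{1-t}{t}\,\mathbb{E}\bigl[-\frac{\tilde X(t)-t\tilde X(1)}{(1-t)^2}\,\big|\,\tilde X(t)=x,Y=y\bigr]$, which is your $(\mathbb{E}[\tilde X(1)\mid x,y]-x)/(1-t)$ rearranged. It then uses condition (iii) to write the posterior of $\tilde X(1)$ as the Gaussian kernel times $p_{\text{data}}(\cdot\mid y)$, and applies the Gaussian score identity to obtain $\frac{x}{t}+\frac{1-t}{t}\nabla\log p_t(x\mid y)$. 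Finally it splits the score by Bayes and undoes the same Tweedie step on the unconditional part to recover $v_{\theta^*}$.

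The one real gap is the endpoint $t=1$, which the statement includes. There $v^*_c(x,1)$ is defined directly as $\mathbb{E}[\tilde X(1)-\tilde X(0)\mid \tilde X(1)=x,Y=y]$, not as a limit; the footnote gives a limit convention only at $t=0$. So your appeal to a ``limit interpretation'' does not establish the claim. Passing $t\to1^-$ in the $(0,1)$ formula would require three things you have not shown: continuity of $v^*_c(x,\cdot)$ at $t=1$, continuity of $v_{\theta^*}(x,\cdot)$ at $t=1$, and $\frac{1-t}{t}\nabla\log p_{Y\mid\tilde X(t)}(y\mid x)\to0$.

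No limit is needed. By (i), $\tilde X(0)$ is independent of $\tilde X(1)$, and by (iii) at $t=0$, $Y$ is independent of $\tilde X(0)$ given $\tilde X(1)$. Hence $\tilde X(0)$ is independent of the pair $(\tilde X(1),Y)$, so $\mathbb{E}[\tilde X(0)\mid \tilde X(1)=x,Y=y]=\mathbb{E}[\tilde X(0)\mid\tilde X(1)=x]=0$. This gives $v^*_c(x,1)=v_{\theta^*}(x,1)=x$, and the formula holds because the coefficient $\frac{1-t}{t}$ vanishes at $t=1$. This is exactly how the paper closes that case.
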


However, computing the term $p_{Y | \tilde{X}(t)}(y | x(t))$ requires direct evaluation of the intractable dependency between the conditioning variable $Y$ and the intermediate state $\tilde{X}(t)$. To address this issue, we propose to project the noise-corrupted intermediate state $x(t)$ to a predicted final state $\hat{x}_1(x(t))$ via a single Euler step:
\begin{align}\label{eq:euler}
\hat{x}_1(x(t)) := x(t) + (1-t) \cdot v_{\theta^*}(x(t),t),
\end{align}
and replace $p_{Y | \tilde{X}(t)}(y | x(t))$ with $p_{Y | \tilde{X}(1)}(y | \hat{x}_1(x(t))) = \tilde{p}(y| \hat{x}_1(x(t)))$. This yields the practical conditional velocity field
\begin{equation}
\begin{split}
v_c&(x(t),t) := v_{\theta^*}(x(t), t) + \\& \frac{1 - t}{t} \cdot \beta(t) \cdot \nabla_{x(t)} \log \tilde{p}(y| \hat{x}_1(x(t))),
\end{split}
\label{eq:vc}
\end{equation}
where $\beta(t)$ represents a user-defined guidance scale. Intuitively, the pretrained velocity term $v_{\theta^*}(x(t), t)$ of \cref{eq:vc} directs particles toward regions of natural-like data (i.e., the manifold underlying $p_{\text{data}}(\cdot)$), while the gradient term further steers particles toward subregions that satisfy the desired conditioning property $y$.

\subsubsection{Importance Weighting and Weight Degeneracy}

Let $p_1(\cdot)$ denote the terminal distribution at $t = 1$ obtained by solving the ODE $dX(t) = v_c(X(t), t)dt$ with initial distribution $\mathcal{N}(0,I)$ at $t = 0$. Due to the gap between $v_c$~\eqref{eq:vc} and $v^*_c$~\eqref{eq:opvc}, the resulting distribution $p_1(\cdot)$ does not perfectly match the target distribution $p_{\text{data}}(\cdot| y)$. 

To correct for this discrepancy, IS~\cite{liu2001monte} can be applied to reweight the generated samples. Concretely, let $\{x(0)^{(k)}\}_{k=1}^K$ be initial states drawn from $\mathcal{N}(0, I)$, and let $\{x(1)^{(k)}\}_{k=1}^K$ denote the corresponding terminal states obtained by solving $\mathrm{d}x(t) = v_c(x(t), t)\mathrm{d}t$ from $t = 0$ to $t = 1$. Each terminal sample is then assigned a weight $$w^{(k)} = \frac{p_{\text{data}}(x(1)^{(k)} | y)}{p_1(x(1)^{(k)})} \propto \frac{p_{\text{data}}(x(1)^{(k)}) \cdot \tilde{p}(y | x(1)^{(k)}) }{p_1(x(1)^{(k)})}.$$ Here, $p_{\text{data}}(\cdot)$ in the numerator and $p_1(\cdot)$ in the denominator can both be computed via the instantaneous change of variables formula \cite{chen2018neural} presented in \cref{subsec:instantaneous}. The resulting samples $\{(x(1)^{(k)}, w^{(k)})\}_{k=1}^K$ are then properly weighted with respect to the target distribution $p_{\text{data}}(\cdot | y)$.

However, direct application of this approach is precluded by weight degeneracy \cite{tokdar2010importance}, which refers to the tendency for importance weights to become highly skewed in high-dimensional settings. In an extreme yet common scenario, a single particle's weight dominates all others, causing the ESS, defined as $(\sum w^{(k)})^2 / \sum (w^{(k)})^2$ \cite{liu2001monte}, to approach unity.  This can be interpreted as the $K$ weighted samples being effectively equivalent to a single sample drawn from the target distribution, which is a clearly inefficient use of computational resources.

\subsection{Training-Free Targeted Flow}
\label{subsec:tftf}

To mitigate the aforementioned weight degeneracy problem inherent to high-dimensional IS, we propose to incorporate a resampling technique in SMC \cite{liu1998sequential} at intermediate stages of the generation process. The central insight is that appropriate reweighting and resampling strategies at intermediate stages enable early termination of unpromising partial samples while enriching the particle set with high-quality candidates. This mechanism concentrates computational resources on propagating promising trajectories, ultimately yielding high-fidelity, condition-compliant samples with well-balanced weights.

\textbf{Introducing stochasticity into flow matching.} As introduced in \cref{subsec:fm}, the generation process in the original flow matching framework is ODE-based and therefore fully deterministic once the initial point is specified. This determinism prevents resampled partial trajectories from branching into varied areas of the target distribution.

To overcome this obstacle, we propose to transform the deterministic flow into a stochastic flow by establishing a connection through the Fokker-Planck-Kolmogorov equation \cite{sarkka2019applied}. Our approach is inspired by \citet{song2020score}, who derived the Probability Flow ODE (PF-ODE) from the forward SDE in score-based diffusion models \cite{sohl2015deep, song2019generative, ho2020denoising}. In the present work, we pursue the converse direction: we convert the deterministic ODE of flow matching models \cite{lipman2022flow,liu2022flow} into a family of SDEs, thereby introducing the necessary stochasticity for incorporating resampling.

\begin{proposition}[Proof in \cref{appendix:proof_prop2}]\label{prop:odesde}
Let $\{X_1(t): t \in [0,1]\}$ denote the stochastic process defined by the ODE:
\begin{equation}
dX_1(t) = v_{\theta^*}(X_1(t), t)dt,
\label{eq:ode}
\end{equation}
where $v_{\theta^*}$ is the pretrained velocity field and $X_1(0) \sim \mathcal{N}(0, I)$. This process induces a continuous-time family of marginal probability densities $\{f_t(\cdot): t \in [0,1]\}$ satisfying $f_1(\cdot) = p_\text{data}(\cdot)$. Consider the following SDE:
\begin{equation}
\begin{split}
dX_2(t) = &\left\{\alpha(t)\left[-X_2(t) + tv_{\theta^*}(X_2(t), t)\right] \right.\\
&\left. + v_{\theta^*}(X_2(t), t)\right\}dt + \sqrt{2(1-t)\alpha(t)}\,dB(t),
\end{split}
\label{eq:sde}
\end{equation}
where $B(t)$ denotes standard Brownian motion. Then, given initial condition $X_2(0) \sim \mathcal{N}(0, I)$, this SDE evolves the process forward in time such that the marginal density of $X_2(t)$ coincides with $f_t(\cdot)$ for all $t \in [0,1]$, and thus matches $p_{\text{data}}(\cdot)$ at the terminal time $t = 1$. 
\label{prop2}
\end{proposition}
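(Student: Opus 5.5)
The plan is to compare the Fokker--Planck--Kolmogorov equation of the SDE \eqref{eq:sde} with the continuity equation of the ODE \eqref{eq:ode}. Since both processes start from $\mathcal{N}(0,I)$, the equality $p_{X_2(t)} = f_t$ will follow from uniqueness of solutions to the FPK equation.

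\textbf{Continuity equation for the ODE.} The marginals of \eqref{eq:ode} satisfy
\[
\partial_t f_t = -\nabla\cdot(f_t v_{\theta^*}).
\]
For the SDE with drift $b$ and scalar diffusion coefficient $\sigma(t)=\sqrt{2(1-t)\alpha(t)}$, the FPK equation reads
\[
\partial_t p_t = -\nabla\cdot(p_t b) + (1-t)\alpha(t)\,\Delta p_t .
\]
I would write $\Delta p_t = \nabla\cdot(p_t\nabla\log p_t)$. This puts the SDE into continuity form with effective velocity
\[
\tilde b = b - (1-t)\alpha(t)\nabla\log p_t .
\]
It therefore suffices to show that, when $p_t=f_t$, the effective velocity satisfies $\tilde b = v_{\theta^*}$. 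That is, I need
\[
\alpha(t)\bigl[-x + t v_{\theta^*}(x,t) - (1-t)\nabla\log f_t(x)\bigr] = 0 .
\]
The whole proof reduces to the identity
\[
(1-t)\nabla\log f_t(x) = -x + t\,v_{\theta^*}(x,t).
\]

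\textbf{Proving the score identity.} I would use the interpolation $\tilde X(t)=t\tilde X(1)+(1-t)\tilde X(0)$ from \eqref{eq:cond}--\eqref{eq:int}. Tweedie's formula applied to the Gaussian conditional $\mathcal{N}(t\tilde x(1),(1-t)^2I)$ gives
\[
\nabla\log f_t(x) = -\frac{x - t\,\mathbb{E}[\tilde X(1)\mid \tilde X(t)=x]}{(1-t)^2}
= -\frac{\mathbb{E}[\tilde X(0)\mid \tilde X(t)=x]}{1-t}.
\]
This is obtained by differentiating under the integral in \eqref{eq:int}. With optimal training, $v_{\theta^*}(x,t)=\mathbb{E}[\tilde X(1)-\tilde X(0)\mid \tilde X(t)=x]$. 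Combining this with $x = t\,\mathbb{E}[\tilde X(1)\mid x] + (1-t)\,\mathbb{E}[\tilde X(0)\mid x]$ gives
\[
x - t v_{\theta^*} = \mathbb{E}[\tilde X(0)\mid x],
\]
and hence $(1-t)\nabla\log f_t = -\mathbb{E}[\tilde X(0)\mid x] = -x + t v_{\theta^*}$, as required. I also need $f_t$ to be the marginal of $\tilde X(t)$. This is the continuity-equation fact stated in \cref{subsec:fm}.

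\textbf{Uniqueness and the main obstacle.} The main obstacle is the final step: passing from ``$f_t$ solves the FPK equation of $X_2$'' to ``$f_t$ is the law of $X_2(t)$.'' This needs uniqueness for the linear FPK equation under regularity assumptions on $v_{\theta^*}$, such as local Lipschitz continuity and linear growth. These same assumptions give well-posedness of the SDE. Boundary behaviour also needs care: the diffusion vanishes at $t=1$, and the score may degenerate as $t\to1$. I would assume the standard regularity conditions, prove the claim on $[0,1-\epsilon]$, and extend it to $t=1$ by continuity of the marginals in $\epsilon$.
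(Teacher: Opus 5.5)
Your proposal is correct and follows essentially the same route as the paper. The paper also adds and subtracts $(1-t)\alpha(t)\Delta_x f$ in the continuity equation, rewrites $\Delta_x f = \nabla_x\cdot(f\,\nabla_x\log f)$, and uses the same Tweedie-type identity $(1-t)\nabla_x\log f(x,t) = -x + t\,v_{\theta^*}(x,t)$, obtained by differentiating under the integral in \eqref{eq:int}, to recognize the Fokker--Planck--Kolmogorov equation of \eqref{eq:sde}. You are more careful than the paper in two places: it takes uniqueness of the FPK solution for granted, and it treats $t=1$ by checking directly that $-x + v_{\theta^*}(x,1) = 0$ rather than by a limiting argument.
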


The nonnegative function $\alpha(t)$ in \cref{eq:sde} controls the level of stochasticity: setting $\alpha(t) = 0$ recovers the original deterministic flow matching framework, while $\alpha(t) > 0$ introduces stochasticity into the generation process through the Brownian motion term. This stochasticity is essential for performing resampling, as it enables the particles to spread in different directions. Remarkably, a particular specification of $\alpha(t)$ corresponds to Anderson's reverse-time SDE \yrcite{anderson1982reverse}; see \cref{appendix:anderson} for the proof.

\textbf{Strategic placement of resampling at intermediate stages.} 
The generation dynamics exhibit distinct characteristics across different periods, 
which motivates our design of the sampling procedure:

\begin{itemize}
    \item \textbf{Early period} $[0, t_{n_1}]$: Samples are too noisy to provide 
    reliable information about the final state.
    \item \textbf{Middle period} $[t_{n_1}, t_{n_2}]$: Key features emerge gradually, 
    making this the critical period for conditional sampling.
    \item \textbf{Late period} $[t_{n_2}, 1]$: Coarse-scale features have been resolved, 
    with only fine-scale details yet to be refined.
\end{itemize}

Based on these observations, we implement resampling exclusively over the intermediate interval 
$[t_{n_1}, t_{n_2}]$, where $0 < t_{n_1} < t_{n_2} < 1$. During the early and late periods, we employ 
ODE-based simulation in lieu of SDE, since the Brownian motion term becomes superfluous when resampling is not performed. Furthermore, 
we utilize the pretrained velocity field $v_{\theta^*}$ rather than the conditional 
velocity field $v_c$ to solve the ODE during the early and late periods, thereby circumventing the 
computational overhead associated with the gradient computation in \cref{eq:vc}.

\textbf{Designing target and proposal distributions at intermediate stages.} 
For conditional generation, our goal is to sample from $p_{\text{data}}(\cdot| y) \propto p_{\text{data}}(\cdot)\tilde{p}(y| \cdot)$, where the first factor $p_{\text{data}}(\cdot)$ assigns density to regions consistent with the unconditional distribution (measuring naturalness) and the second factor $\tilde{p}(y| \cdot)$ assigns density to regions satisfying the conditioning constraint $y$ (measuring alignment). In light of this, we design the intermediate target distribution $\pi_{t_n}(x_{t_{n_1}:t_n})$ to measure both naturalness and alignment of partially generated samples. Specifically, we adopt the following form:
\begin{equation}\label{eq:target_dist}
f_{t_{n_1}}(x_{t_{n_1}}) \left[\prod_{i=n_1+1}^{n} h_{t_i}(x_{t_i}| x_{t_{i-1}})\right] \tilde{p}(y|\hat{x}_1(x_{t_n})),
\end{equation}
where $f_{t_{n_1}}(\cdot)$ denotes the marginal density at $t = t_{n_1}$ induced by ODE~\eqref{eq:ode} with initial distribution $\mathcal{N}(0, I)$, $h_{t_i}(\cdot| x_{t_{i-1}})$ denotes the conditional distribution of $X_2(t_i)$ given $X_2(t_{i-1})=x_{t_{i-1}}$ under SDE~\eqref{eq:sde}, and $\hat{x}_1(x_{t_n})$ denotes the one-step Euler projection of the intermediate state $x_{t_n}$ to the terminal time $t = 1$:
\begin{equation}
\hat{x}_1(x_{t_n}) = x_{t_n} + (1-t_n) \cdot v_{\theta^*}(x_{t_n},t_n).
\end{equation}

The decomposition in \cref{eq:target_dist} has an intuitive interpretation. The product $f_{t_{n_1}}(x_{t_{n_1}})\left[\prod_{i=n_1+1}^{n} h_{t_i}(x_{t_i}| x_{t_{i-1}})\right]$ measures the naturalness of the trajectory $x_{t_{n_1}:t_n}$, i.e., its consistency with the unconditional generation dynamics prescribed by SDE~\eqref{eq:sde}. Meanwhile, the term $\tilde{p}(y|\hat{x}_1(x_{t_n}))$ serves as a look-ahead function that estimates the alignment of the final state with the conditioning constraint $y$ by projecting the current state $x_{t_n}$ forward to completion.
 
The proposal distributions at intermediate stages emerge naturally from our framework. For $t = t_{n_1}$, we set the proposal distribution $q_{t_{n_1}}(\cdot)$ to $f_{t_{n_1}}(\cdot)$, which is obtained by evolving ODE~\eqref{eq:ode} over $[0, t_{n_1}]$. For subsequent steps where $t_{n_1} < t_n \leq t_{n_2}$, the proposal kernel $q_{t_n}(\cdot| x_{t_{n-1}})$ is given by the conditional distribution of $X_2(t_n)$ given $X_2(t_{n-1})=x_{t_{n-1}}$ under SDE~\eqref{eq:sde}, with $v_{\theta^*}$ replaced by the conditional velocity field $v_c$ constructed in \cref{eq:vc}. This design ensures that the proposal kernel incorporates the conditioning information $y$ when proposing next states, thereby guiding particles toward regions of high target density.

\textbf{The TFTF algorithm and final reweighting.} The preceding analysis yields our algorithm, the Training-Free Targeted Flow (TFTF). As outlined in \cref{alg:tftf}, we initialize TFTF by sampling from $\mathcal{N}(0,I)$ and integrating ODE~\eqref{eq:ode} over the interval $[0, t_{n_1}]$, yielding samples from the initial proposal distribution $q_{t_{n_1}}(\cdot) = f_{t_{n_1}}(\cdot)$. The particles are then propagated over the interval $[t_{n_1}, t_{n_2}]$ using the intermediate proposal and target distributions constructed above, with iterative reweighting and resampling. Finally, we revert to the deterministic flow~\eqref{eq:ode} over $[t_{n_2}, 1]$. At the final time step $t = 1$, we define the final extended target distribution as
\begin{equation}
Z(x_{t_{n_1}:t_{n_2}}) \cdot e^{-\int_{t_{n_2}}^{1}\nabla \cdot v_{\theta^*}(x_t,t) \, dt} \cdot \tilde{p}(y| x_1),
\label{eq:final}
\end{equation}
where $Z(x_{t_{n_1}:t_{n_2}})$ is given by
\begin{equation}
\begin{split}
f_{t_{n_1}}(x_{t_{n_1}}) \left[\prod_{i=n_1+1}^{n_2} h_{t_i}(x_{t_i}| x_{t_{i-1}})\right].
\end{split}
\end{equation}
This yields the final incremental weight
\begin{equation}
\begin{split}
\frac{Z(x_{t_{n_1}:t_{n_2}}) \cdot e^{-\int_{t_{n_2}}^{1} \nabla \cdot v_{\theta^*}(x_t,t) \, dt} \cdot \tilde{p}(y| x_1)}{Z(x_{t_{n_1}:t_{n_2}}) \cdot \tilde{p}(y|\hat{x}_1(x_{t_{n_2}})) \cdot e^{-\int_{t_{n_2}}^{1} \nabla \cdot v_{\theta^*}(x_t,t) \, dt}} 
\\= \frac{\tilde{p}(y| x_1)}{\tilde{p}(y|\hat{x}_1(x_{t_{n_2}}))},
\end{split}
\label{eq:weight}
\end{equation}
which corrects for the proposal-target mismatch (see \cref{appendix:final_weight} for the complete derivation). Notably, the integrals in the numerator and denominator of \cref{eq:weight} cancel because we use the pretrained velocity field $v_{\theta^*}$ rather than $v_c$ over $[t_{n_2}, 1]$, thereby avoiding the need for divergence calculation.

Crucially, the final extended target distribution~\eqref{eq:final} contains $p_{\text{data}}(\cdot| y)$ as its marginal; specifically, 
\begin{equation}
\begin{split}
\int Z(x_{t_{n_1}:t_{n_2}}) \cdot e^{-\int_{t_{n_2}}^{1}\nabla \cdot v_{\theta^*}(x_t,t) \, dt} \cdot \tilde{p}(y| x_1) \, dx_{t_{n_1}:t_{n_2-1}} 
\\= p_{\text{data}}(x_1)\tilde{p}(y| x_1), 
\end{split}
\end{equation}
with the full derivation provided in \cref{appendix:marginalize}. Consequently, \cref{alg:tftf} can approximate $p_{\text{data}}(\cdot| y)$ with asymptotic accuracy, as established in the following proposition.

\begin{proposition}[Proof in \cref{appendix:proof_prop3}]
\label{prop:asym}   
For \cref{alg:tftf}, suppose that the intermediate weight functions $\{W_{t_n}\}_{n=n_1}^{n_2}$ as well as the final incremental weight $\frac{\tilde{p}(y | X_1)}{\tilde{p}(y | \hat{x}_1(X_{t_{n_2}}))}$ are bounded. Then, for any $\phi \in \mathcal{C}_b(\mathbb{R}^d)$,
\begin{equation}
\sum_{k=1}^{K} \bar{W}_1^{(k)} \phi(X_1^{(k)}) \xrightarrow{\text{a.s.}} \mathbb{E}_{p_{\text{data}}(\cdot| y)}[\phi] \quad \text{as } K \to \infty.
\end{equation}
\end{proposition}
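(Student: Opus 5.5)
The plan is to recast \cref{alg:tftf} as a standard SMC sampler with multinomial resampling on a path space, and then invoke the classical almost-sure convergence theorem for bounded-weight particle approximations \cite{del2004feynman, chopin2020introduction}. I would index the SMC steps by $n = n_1, \dots, n_2$, followed by one terminal step. Steps $n_1,\dots,n_2$ use the targets $\pi_{t_n}$ of \cref{eq:target_dist} and the proposals $q_{t_{n_1}} = f_{t_{n_1}}$ and $q_{t_n}(\cdot\mid x_{t_{n-1}})$ from the conditional-SDE kernel. The terminal step's proposal is the deterministic ODE~\eqref{eq:ode} map on $[t_{n_2},1]$, and its target is the extended distribution~\eqref{eq:final} on $(x_{t_{n_1}:t_{n_2}}, x_1)$. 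Because $x_1$ is a deterministic function of $x_{t_{n_2}}$, this last step is best written as a change of variables. Its incremental weight is then exactly \cref{eq:weight}, as derived in \cref{appendix:final_weight}. Hence every incremental weight has the form $\pi_{t_n}/(\pi_{t_{n-1}} q_{t_n})$, and the weights are bounded by hypothesis.

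The core is an induction over steps establishing, for every bounded continuous (or bounded measurable) test function $\psi$ on the current path space,
\[
\frac{\sum_k w_{t_n}^{(k)} \psi(X^{(k)}_{t_{n_1}:t_n})}{\sum_k w_{t_n}^{(k)}} \xrightarrow{\text{a.s.}} \frac{\int \psi\, \pi_{t_n}}{\int \pi_{t_n}}.
\]
The base case is the strong law of large numbers applied to i.i.d.\ draws from $f_{t_{n_1}}$ with bounded weight $W_{t_{n_1}}$. For the inductive step I would split the error into three terms: the resampling error, the propagation error, and the reweighting error. The resampling error is a conditional average of $K$ i.i.d.\ draws from the current weighted empirical measure. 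The propagation error comes from drawing from $q_{t_n}$ given the resampled particles. The reweighting error comes from normalizing by $\sum_k w$. The first two terms are conditionally independent, bounded, mean-zero sums given the previous generation. A conditional fourth-moment bound gives $\mathbb{E}|\cdot|^4 = O(K^{-2})$, so Borel--Cantelli yields a.s.\ convergence. The third term is handled by applying the induction hypothesis to the function $x \mapsto \int q_{t_n}(dx'\mid x)\, W_{t_n}(x,x')\,\psi(x,x')$, which is bounded because $W$ and $\psi$ are bounded. Normalization is handled by the ratio of two a.s.\ convergent quantities, with the denominator converging to a positive limit.

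Applying this at the terminal step with $\psi(x_{t_{n_1}:t_{n_2}}, x_1) = \phi(x_1)$ gives convergence of $\sum_k \bar W_1^{(k)}\phi(X_1^{(k)})$ to the expectation of $\phi(x_1)$ under the normalized extended target~\eqref{eq:final}. By the marginalization identity of \cref{appendix:marginalize}, the $x_1$-marginal of that target is proportional to $p_{\text{data}}(x_1)\tilde p(y\mid x_1)$, i.e.\ it equals $p_{\text{data}}(\cdot\mid y)$, which is the claim.

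I expect the main obstacle to be the bookkeeping for the terminal deterministic step and, more generally, the regularity needed in the inductive step. Specifically, the test function obtained after integrating against $q_{t_n}$ must remain in the class for which the induction hypothesis holds. I would avoid continuity issues by proving the induction for all bounded measurable functions, since nothing in the fourth-moment and SLLN argument requires continuity. Continuity of $\phi$ is then only the specialization needed in the statement. For the deterministic terminal map, I would treat the ODE flow as a measurable bijection and push the measures forward rather than use densities on $x_1$, so the argument needs no Lebesgue density for the degenerate kernel.
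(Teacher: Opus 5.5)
Your proposal is correct. It shares the paper's skeleton: an SMC law of large numbers up to $t_{n_2}$, then pushing $f_{t_{n_2}}$ forward through the ODE map $T^*$ to $p_{\text{data}}$. It differs in how that law is obtained and in the regularity it needs.

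The paper does not prove SMC convergence itself. It stops the particle system at $t_{n_2}$ and absorbs the final incremental weight $G(x) := \tilde p(y\mid T^*(x))/\tilde p(y\mid \hat x_1(x))$ into two test functions, $\phi_1 = G\cdot(\phi\circ T^*)$ and $\phi_2 = G$. It invokes \citet[Proposition~11.4]{chopin2020introduction} for each, takes the ratio, and changes variables via $x_1 = T^*(x_{t_{n_2}})$. To place $\phi_1,\phi_2$ in $\mathcal{C}_b(\mathbb{R}^d)$, it must appeal to continuity of $\tilde p(y\mid\cdot)$ and of $T^*$, which is not among the hypotheses of the statement.

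Your induction over bounded measurable test functions removes that continuity requirement. It handles the sampling errors with conditional fourth moments and Borel--Cantelli, and the remaining term with the induction hypothesis applied to the fixed function $x\mapsto\int q_{t_n}(dx'\mid x)\,W_{t_n}(x,x')\,\psi(x,x')$. It is self-contained, at the price of length. Your extended-target bookkeeping, together with the appeal to \cref{appendix:marginalize}, is the same change-of-variables computation written in measure form.

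One detail to adjust: \cref{alg:tftf} does not resample before the final ODE stage, since the final weight multiplies $\bar w_{t_{n_2}}^{(k)}$, and that stage is deterministic. So your terminal step has no resampling or propagation error at all. It reduces to applying the induction hypothesis at $t_{n_2}$ to $G\cdot(\phi\circ T^*)$ and to $G$, then taking the ratio, which is exactly the paper's argument.
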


\begin{remark}
The implementation details of \cref{alg:tftf}, along with verification of the boundedness conditions in \cref{prop:asym}, are provided in \cref{appendix:regularity}.
\end{remark}

Given the asymptotic accuracy guarantee of \cref{alg:tftf}, approximation quality with respect to the target distribution improves as the number of samples $K$ increases. To scale TFTF to larger sample sizes, it is therefore desirable to integrate distributed sampling results in a principled manner. To this end, we propose in \cref{appendix:SMC^2} a nested implementation of our method, termed Nested TFTF (\cref{alg:nested_tftf}), that retains asymptotic accuracy under finite particle count constraints for each inner-level sampling operation.

\begin{algorithm}[t]
\caption{Training-Free Targeted Flow (TFTF)}
\label{alg:tftf}
\begin{algorithmic}[1]
\REQUIRE Pretrained velocity field $v_{\theta^*}$, likelihood function $\tilde{p}(y|\cdot)$, intermediate proposal distributions $\{q_{t_n}\}_{n=n_1}^{n_2}$, intermediate target distributions $\{\pi_{t_n}\}_{n=n_1}^{n_2}$, number of samples $K$
\STATE Sample $x_0^{(k)} \sim \mathcal{N}(0, I)$
\STATE Solve $dx_t = v_{\theta^*}(x_t, t)dt$ from $t = 0$ to $t = t_{n_1}$ to obtain $x_{t_{n_1}}^{(k)}$
\STATE Compute $w_{t_{n_1}}^{(k)} = \frac{\pi_{t_{n_1}}(x_{t_{n_1}}^{(k)})}{q_{t_{n_1}}(x_{t_{n_1}}^{(k)})}$
\STATE Normalize $\bar{w}_{t_{n_1}}^{(k)} = \frac{w_{t_{n_1}}^{(k)}}{\sum_{j=1}^K w_{t_{n_1}}^{(j)}}$
\FOR{$n = n_1 + 1, \ldots, n_2$}
    \STATE Resample $\{x_{t_{n_1}:t_{n-1}}^{(k)}\}_{k=1}^K$ using $\{\bar{w}_{t_{n-1}}^{(k)}\}_{k=1}^K$ to obtain $\{x_{t_{n_1}:t_{n-1}}^{(*k)}\}_{k=1}^K$
    \STATE Sample $x_{t_n}^{(k)} \sim q_{t_n}(\cdot | x_{t_{n-1}}^{(*k)})$
    \STATE Set $x_{t_{n_1}:t_n}^{(k)} = (x_{t_{n_1}:t_{n-1}}^{(*k)}, x_{t_n}^{(k)})$
    \STATE Compute $w_{t_n}^{(k)} = \frac{\pi_{t_n}(x_{t_{n_1}:t_n}^{(k)})}{q_{t_n}(x_{t_n}^{(k)} | x_{t_{n-1}}^{(*k)}) \cdot \pi_{t_{n-1}}(x_{t_{n_1}:t_{n-1}}^{(*k)})}$
    \STATE Normalize $\bar{w}_{t_n}^{(k)} = \frac{w_{t_n}^{(k)}}{\sum_{j=1}^K w_{t_n}^{(j)}}$
\ENDFOR
\STATE Solve $dx_t = v_{\theta^*}(x_t, t)dt$ from $t = t_{n_2}$ to $t = 1$ to obtain $x_1^{(k)}$
\STATE Compute $w_1^{(k)} = \bar{w}_{t_{n_2}}^{(k)} \cdot \frac{\tilde{p}(y | x_1^{(k)})}{\tilde{p}(y | \hat{x}_1(x_{t_{n_2}}^{(k)}))}$
\STATE Normalize $\bar{w}_1^{(k)} = \frac{w_1^{(k)}}{\sum_{j=1}^K w_1^{(j)}}$
\STATE \textbf{return} Weighted samples $\{(x_1^{(k)}, \bar{w}_1^{(k)})\}_{k=1}^K$
\end{algorithmic}
\end{algorithm}

To demonstrate the pruning and enrichment effects achieved by incorporating resampling at intermediate stages, we present results for conditional sampling targeting the ``automobile'' class on the CIFAR-10 dataset in \cref{fig:3}. \cref{fig:3a} shows the results obtained through direct application of IS while \cref{fig:3b} displays the results obtained using \cref{alg:tftf} with $K=16$. Comparison of the two panels reveals that the incorporation of resampling effectively increases the ESS and enriches the generated sample set with diverse, high-fidelity, and condition-adherent samples.

\begin{figure}[t]
\centering
\subfloat[Direct IS (ESS: 1.37/16)\label{fig:3a}]{%
    \includegraphics[width=0.45\linewidth]{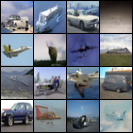}
}
\hspace{0.02\linewidth}
\subfloat[TFTF (ESS: 16.00/16)\label{fig:3b}]{%
    \includegraphics[width=0.45\linewidth]{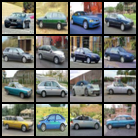}
}
\caption{Effect of incorporating resampling at intermediate stages of the generation process.}
\label{fig:3}
\end{figure}

\section{Experiments}

We first verify the asymptotic accuracy of our method on a toy example (\cref{subsec:2d}), then evaluate class-conditional sampling on MNIST \cite{lecun2002gradient} and CIFAR-10 \cite{krizhevsky2009learning} (\cref{subsec:class_conditional}), and finally conduct text-to-image generation on CelebA-HQ \cite{karras2017progressive} (\cref{subsec:t2i_main_body_part}). Additional experimental details are provided in \cref{sec:exp_details}.


\subsection{Toy Example}
\label{subsec:2d}

We consider a toy example where $p_{\text{data}}$ is specified as $\frac{3}{4} \mathcal{N}((-\sqrt{2}, -\sqrt{2}), \frac{1}{4} I) + \frac{1}{4} \mathcal{N}((\sqrt{2}, \sqrt{2}), \frac{1}{4} I)$. This specification admits an analytical form for $v_{\theta^*}(x,t) = \mathbb{E}[\tilde{X}(1) - \tilde{X}(0)| \tilde{X}(t) = x]$, which exactly pushes forward $\mathcal{N}(0,I)$ to $p_{\text{data}}$. By employing this closed-form velocity field, we eliminate the impact of neural network approximation errors from unconditional model pre-training on our evaluation.

\begin{figure*}[t]
\centering
\subfloat[Ground Truth\label{fig:toy_gt}]{%
    \includegraphics[width=0.17\linewidth]{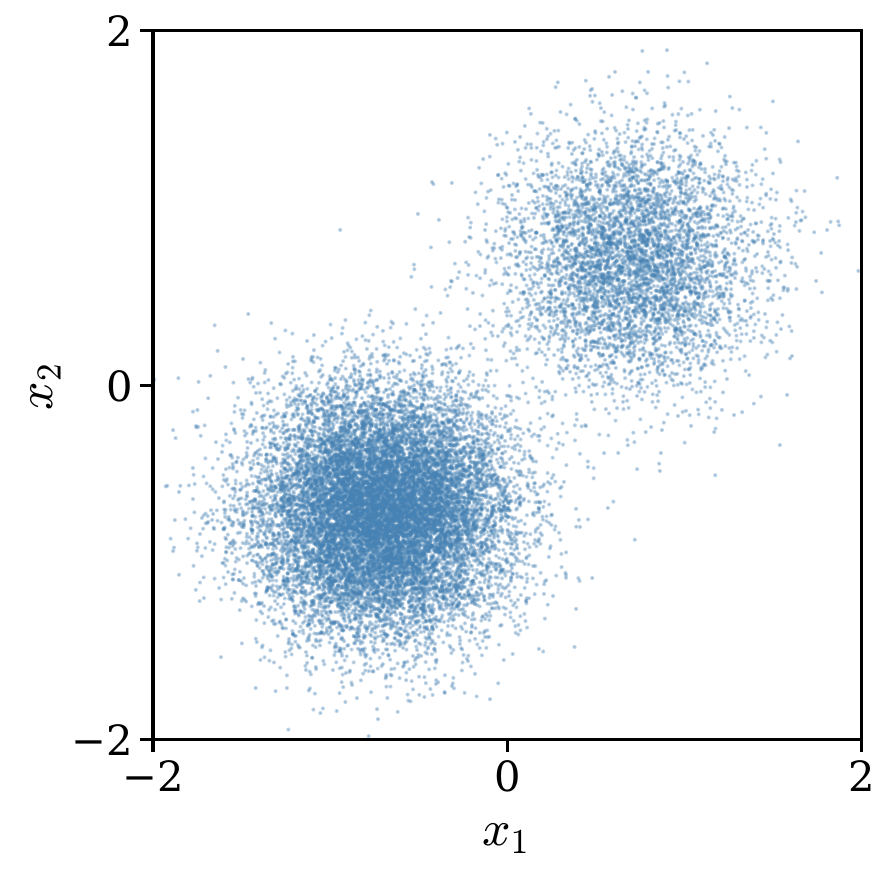}
}
\hfill
\subfloat[TFTF (Ours)\label{fig:toy_tftf}]{%
    \includegraphics[width=0.17\linewidth]{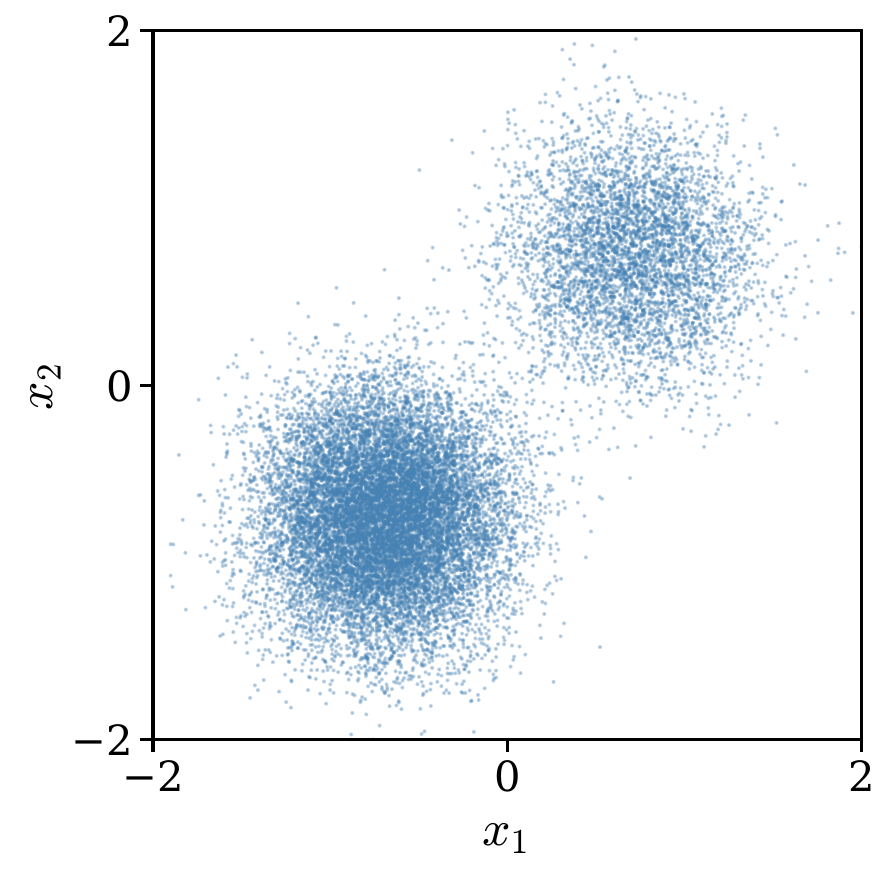}
}
\hfill
\subfloat[FMPS\label{fig:toy_fmps}]{%
    \includegraphics[width=0.17\linewidth]{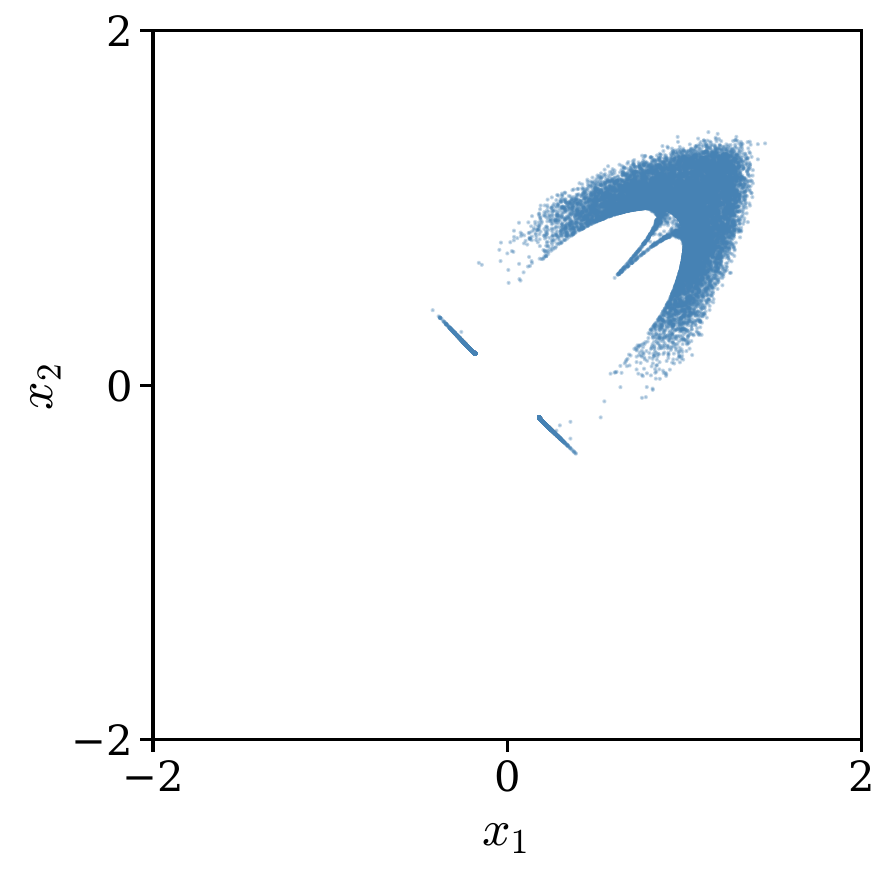}
}
\hfill
\subfloat[D-Flow\label{fig:toy_dflow}]{%
    \includegraphics[width=0.17\linewidth]{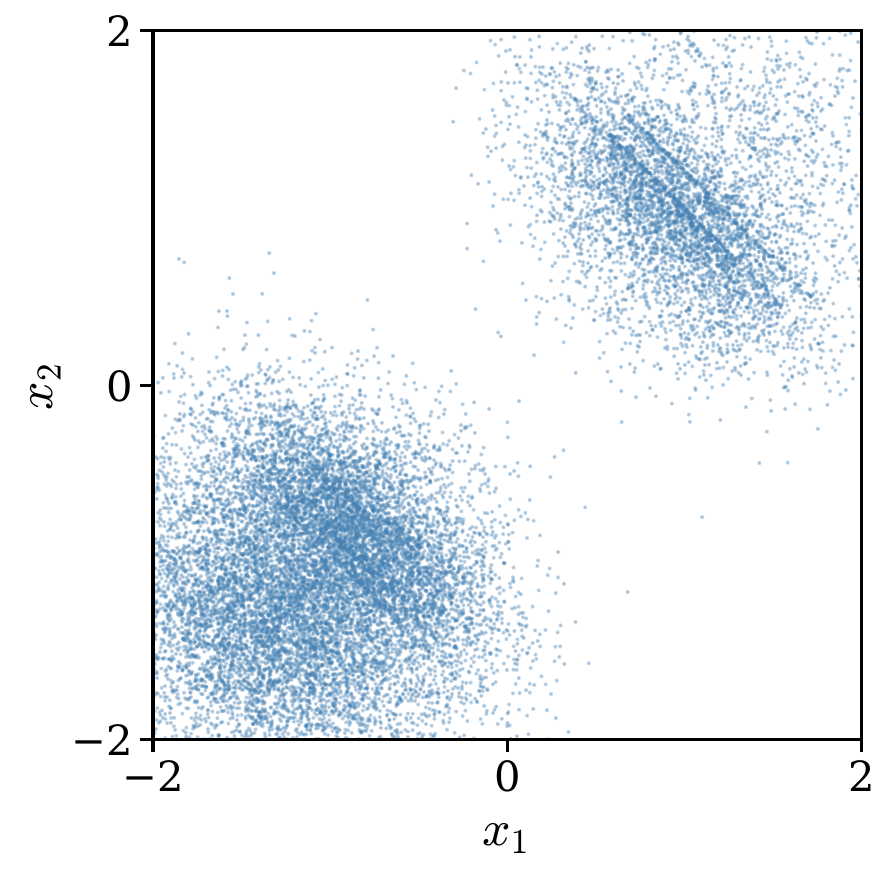}
}
\hfill
\subfloat[FlowChef\label{fig:toy_fc}]{%
    \includegraphics[width=0.17\linewidth]{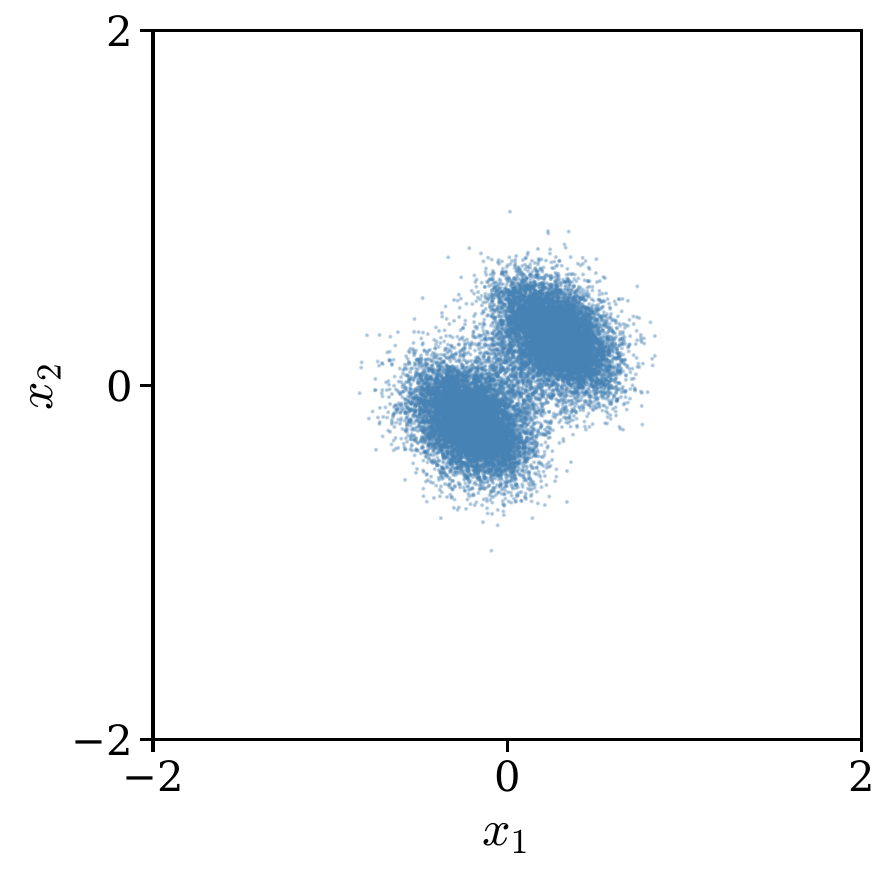}
}
\caption{Samples generated by different methods on the toy example.}
\label{fig:4}
\end{figure*}

We define the conditional distribution of $Y$ given $\tilde{X}(1)$ as $\tilde{p}(y|\tilde{x}(1)) := \mathcal{N}(y; \tilde{x}(1), \frac{1}{4} I)$. Given $y = (0, 0)$, we can analytically compute the ground-truth conditional distribution $p_{\text{data}}(\cdot| y)$ as $\frac{3}{4} \mathcal{N}((-\frac{\sqrt{2}}{2}, -\frac{\sqrt{2}}{2}), \frac{1}{8} I) + \frac{1}{4} \mathcal{N}((\frac{\sqrt{2}}{2}, \frac{\sqrt{2}}{2}), \frac{1}{8} I)$. From this distribution, we directly draw 20,000 i.i.d. samples as the gold standard (cf. \cref{fig:toy_gt}). For our proposed method (\cref{alg:tftf}) and the baseline approaches, we assume access only to $v_{\theta^*}(x,t)$ and $\tilde{p}(y|\cdot)$, and generate 20,000 samples with each method. As illustrated in \cref{fig:toy_tftf}, the samples generated by our method correctly recover the means, variances, and relative weights between the two Gaussian components of $p_{\text{data}}(\cdot| y)$. In contrast, the sampling distributions produced by other baseline methods demonstrate varying degrees of bias (cf.\ \cref{fig:toy_fmps,fig:toy_dflow,fig:toy_fc}).

\subsection{Class-Conditional Sampling}
\label{subsec:class_conditional}

\begin{figure}[t]
\centering
\subfloat[TFTF (Ours)\label{fig:mnist_tftf}]{%
    \includegraphics[width=0.30\linewidth]{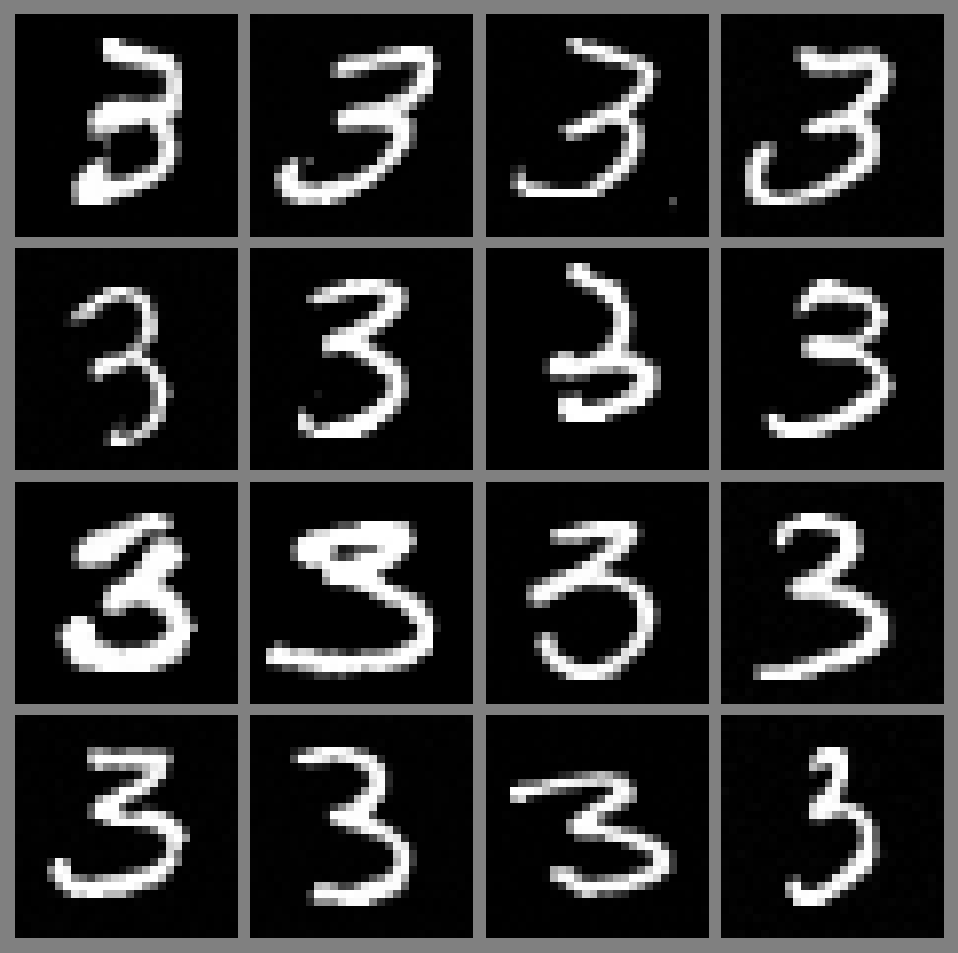}
}
\hfill
\subfloat[FMPS\label{fig:mnist_fmps}]{%
    \includegraphics[width=0.30\linewidth]{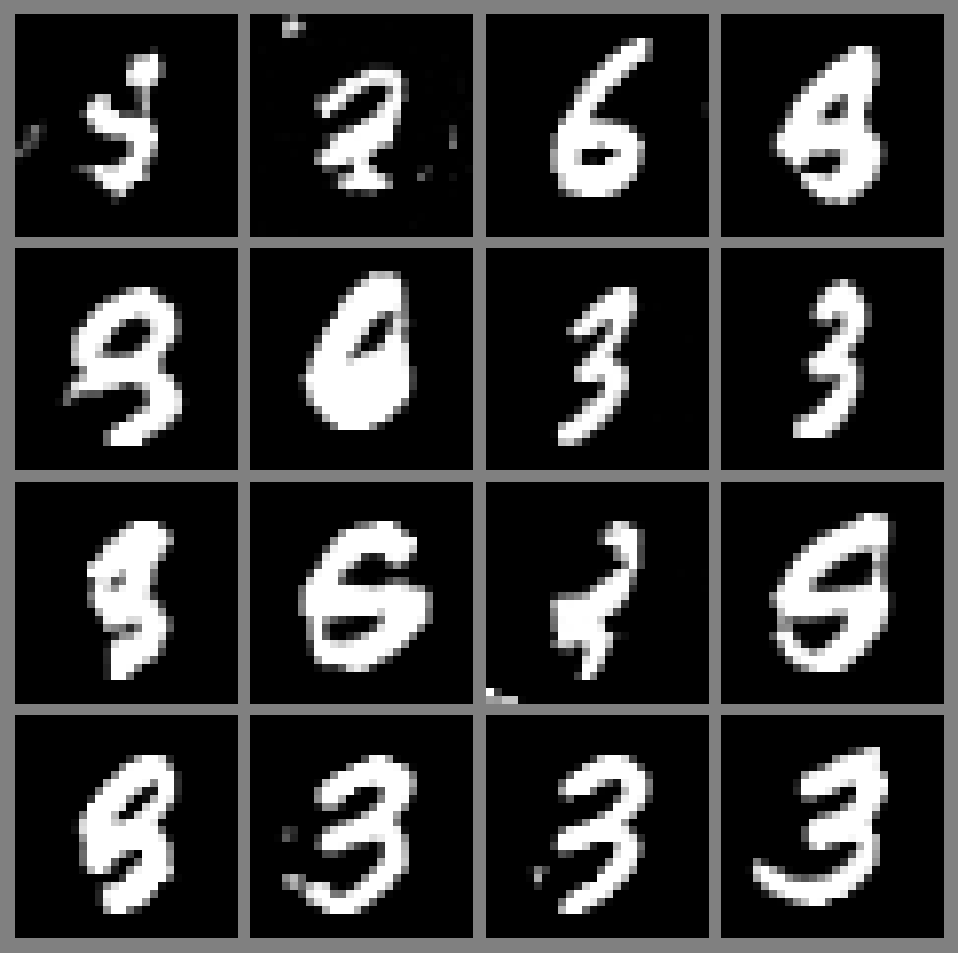}
}
\hfill
\subfloat[FlowChef\label{fig:mnist_fc}]{%
    \includegraphics[width=0.30\linewidth]{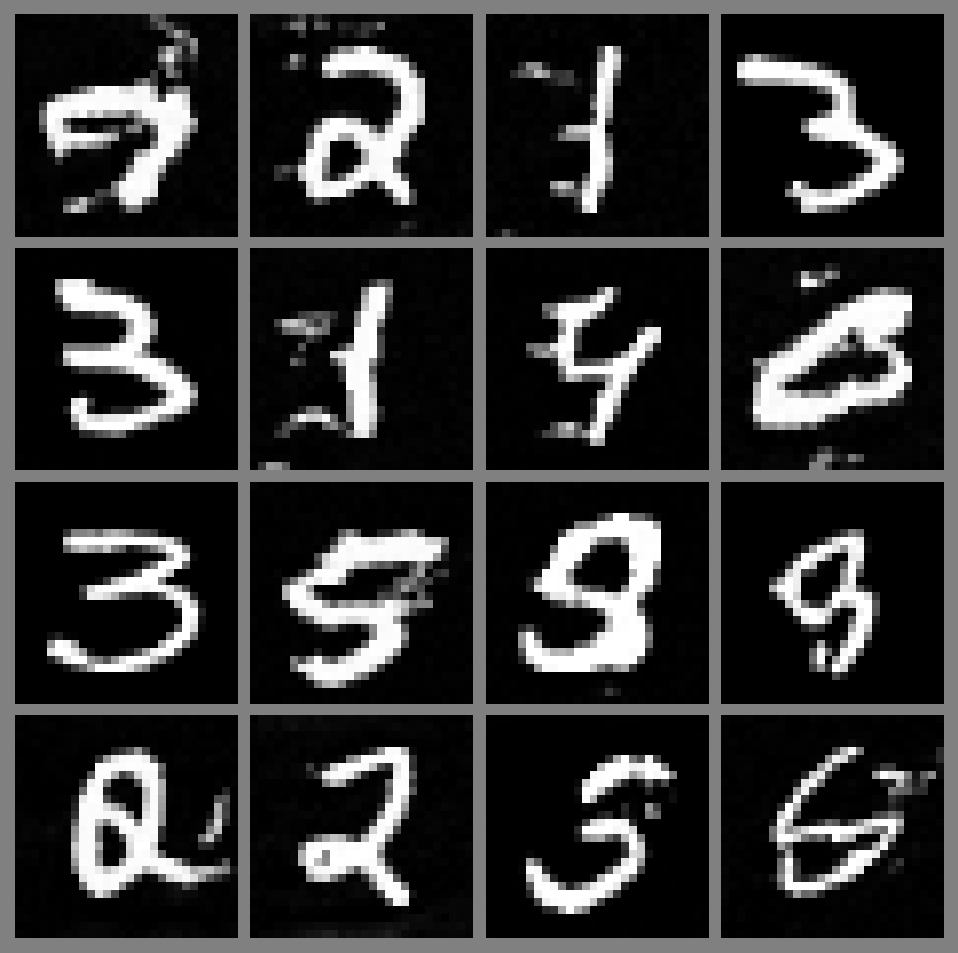}
}
\caption{Samples generated by different methods on MNIST for class-conditional sampling targeting class 3.}
\label{fig:mnist_pic}
\end{figure}

\begin{table}[t]
\centering
\caption{Class-conditional sampling results on MNIST. ACC$_\text{L}$: likelihood-specifying classifier accuracy; 
ACC$_\text{E}$: external classifier accuracy; ESS: effective sample size.}
\label{tab:mnist_results}
\vskip 0.1in
\begin{small}
\begin{sc}
\begin{tabular}{@{}lccc@{}}
\toprule
Method & ACC$_\text{L}$ & ACC$_\text{E}$$\uparrow$ & ESS \\
\midrule
FMPS        & 62.86\% & 39.24\% & -- \\
FlowChef    & 91.15\% & 23.57\% & -- \\
TFTF (Ours)  & 99.66\% & \textbf{97.12\%} & 15.99/16 \\
\bottomrule
\end{tabular}
\end{sc}
\end{small}
\vskip -0.1in
\end{table}

For the MNIST dataset, we evaluate the performance of TFTF (\cref{alg:tftf}) with $K = 16$. 
\cref{fig:mnist_tftf} presents samples generated by our algorithm within a single batch, which are consistently authentic and conform to the specified class label, whereas baseline methods exhibit inconsistent generation quality (cf.\ \cref{fig:mnist_fc,fig:mnist_fmps}).

Because the conditional velocity field $v_c$~\eqref{eq:vc} combines the pretrained velocity $v_{\theta^*}$ with the likelihood gradient, one might worry the samples simply exploit the likelihood-specifying classifier (i.e., the guidance classifier) rather than reflect true class semantics. We therefore assess classification accuracy with an external classifier (denoted $\text{ACC}_E$). 
As shown in \cref{tab:mnist_results}, our method maintains high accuracy for both the likelihood-specifying classifier (denoted $\text{ACC}_L$) and the external classifier, whereas all baselines show large drops in $\text{ACC}_E$ relative to $\text{ACC}_L$.


\begin{figure*}[t]
\centering
\subfloat[Diversity vs. $\alpha(t)$\label{fig:vs_a}]{%
    \includegraphics[width=0.32\linewidth]{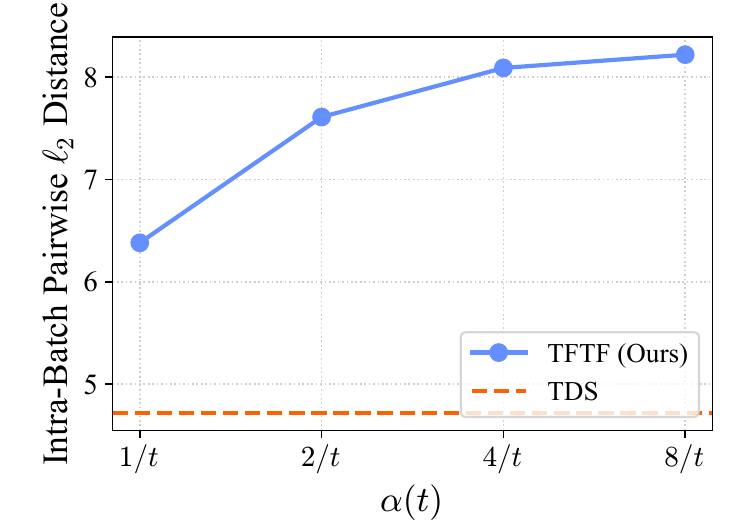}
}
\hfill
\subfloat[$\text{ACC}_E$ vs. $\alpha(t)$\label{fig:vs_b}]{%
    \includegraphics[width=0.32\linewidth]{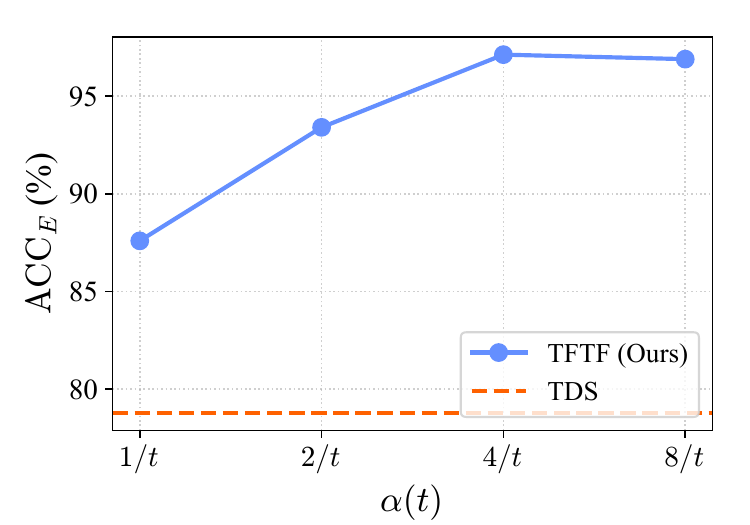}
}
\hfill
\subfloat[$\text{ACC}_E$ vs. $K$\label{fig:vs_c}]{%
    \includegraphics[width=0.32\linewidth]{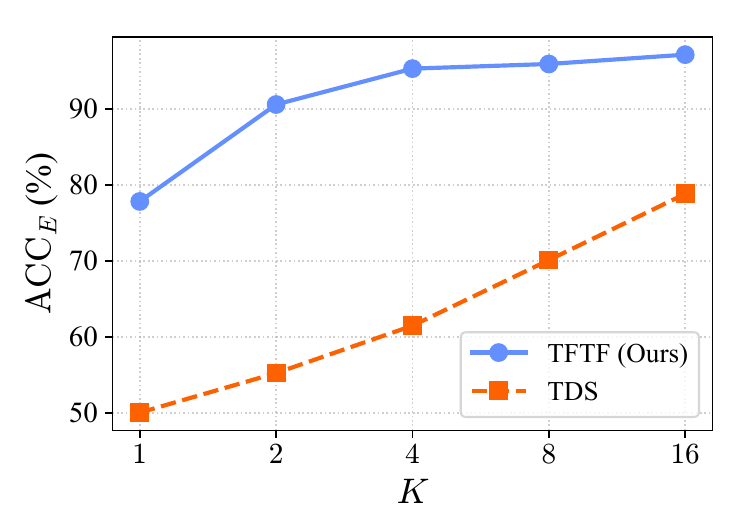}
}
\caption{Ablation studies on the stochasticity parameter $\alpha(t)$ and particle count $K$.}
\label{fig:ablation}
\end{figure*}

We further conduct ablation studies examining the effect of $\alpha(t)$ in SDE~\eqref{eq:sde} and the particle count $K$. 
For comparison, we also include results from Twisted Diffusion Sampler (TDS)~\cite{wu2023practical}, another method that generates correlated samples in a single batch, albeit based on the score-based diffusion framework. \cref{fig:vs_a,fig:vs_b} present the intra-batch diversity (measured by mean pairwise $\ell_2$ distance) and $\text{ACC}_E$ for $\alpha(t) \in \{1/t, 2/t, 4/t, 8/t\}$. As shown, both metrics generally increase with the scale of $\alpha(t)$, since larger values enable the particles to diverge more substantially and exhibit stronger exploratory behavior. However, these benefits are not unbounded, as excessively large $\alpha(t)$ values induce unstable intermediate weights and accumulate greater numerical errors during SDE integration. \cref{fig:vs_c} plots $\text{ACC}_E$ against varying particle counts. As illustrated, our method's $\text{ACC}_E$ increases with the particle count, and with a single particle, it surpasses all baseline methods. Samples generated under different $\alpha(t)$ configurations are provided in \cref{fig:sample_mnist}.

\begin{table}[t]
\centering
\caption{Class-conditional sampling results on CIFAR-10. 
ACC$_\text{L}$: likelihood-specifying classifier accuracy; 
ACC$_\text{E}$: external classifier accuracy; IS: Inception Score;
$\mathcal{W}_2$: intra-class feature-space Wasserstein-2 distance (averaged across classes).}
\label{tab:cifar10_results}
\vskip 0.1in
\begin{small}
\begin{sc}
\begin{tabular}{@{}lcccc@{}}
\toprule
Method & ACC$_\text{L}$ & ACC$_\text{E}$$\uparrow$ & IS$\uparrow$ & $\mathcal{W}_2$$\downarrow$ \\
\midrule
FMPS        & 33.81\% & 31.81\% & 7.22 \scriptsize{$\pm$0.08} & 99.59 \\
FlowChef    & 98.04\% & 50.13\% & 7.54 \scriptsize{$\pm$0.05} & 63.69 \\
TFTF (Ours)  & 92.56\% & \textbf{92.82\%} & \textbf{8.88 \scriptsize{$\pm$0.08}} & \textbf{32.61} \\
\bottomrule
\end{tabular}
\end{sc}
\end{small}
\vskip -0.1in
\end{table}

For the CIFAR-10 dataset, we assess the ability of our method to approximate the target distribution at higher particle counts, using Nested TFTF (\cref{alg:nested_tftf}) with $K = 16$ and $M = 1000$. As reported in \cref{tab:cifar10_results}, for a fixed total sample size, our algorithm markedly outperforms the baselines in terms of $\text{ACC}_\text{E}$, $IS$, and feature-space Wasserstein-2 distance. 
Additionally, \cref{appendix:acc_ver} presents an accelerated approach for computing $v_c$~\eqref{eq:vc} that removes the need for backpropagation through the velocity-field network while preserving both theoretical convergence guarantees and empirical performance. It also provides a detailed comparison of computational complexity across methods (cf. \cref{tab:computation_compare}). \cref{appendix:ab_interval_cifar} further reports sensitivity analysis of the resampling interval $[t_{n_1}, t_{n_2}]$ using TFTF (\cref{alg:tftf}) with a finite particle budget.

\subsection{Text-to-Image Generation}
\label{subsec:t2i_main_body_part}

\begin{table}[t]
\centering
\caption{Text-to-image generation results on CelebA-HQ. 
CLIP$_\text{L}$: similarity score measured under the likelihood-specifying CLIP model;
ACC$_\text{E}$: average all-attributes accuracy under the external classifier;
ESS: effective sample size (averaged across prompts).}
\label{tab:celebahq_results}
\vskip 0.1in
\begin{small}
\begin{sc}
\begin{tabular}{@{}lccc@{}}
\toprule
Method & CLIP$_\text{L}$ & ACC$_\text{E}$$\uparrow$ & ESS \\
\midrule
FMPS        & 0.307 & 68.69\% & -- \\
FlowChef    & 0.305 & 41.50\% & -- \\
TFTF (Ours) & 0.295 & \textbf{80.99\%} & 21.77/25 \\
\bottomrule
\end{tabular}
\end{sc}
\end{small}
\vskip -0.1in
\end{table}

For the CelebA-HQ dataset ($256 \times 256 \times 3$), we conduct text-to-image generation experiments using the accelerated version (cf.\ \cref{appendix:acc_ver}) of \cref{alg:tftf} with $K = 25$. Given the significantly higher dimensionality of this dataset, additional diversification is required following the resampling phase. To address this, we propose to replace the ODE~\eqref{eq:ode} with the SDE~\eqref{eq:sde} over $[t_{n_2}, t_{n_2} + \delta]$ in \cref{alg:tftf}; the convergence properties of the algorithm are preserved under this modification, as justified by \cref{prop:odesde}. We employ the CLIP model~\cite{radford2021learning, cherti2023reproducible} to specify the likelihood function and utilize an external attribute classifier for validation. As demonstrated in \cref{tab:celebahq_results}, our method achieves optimal $\text{ACC}_E$ performance while maintaining comparable CLIP scores. Generated samples are presented in \cref{fig:sample_celeb}.


\section{Related Work}

Recent work has investigated training-free conditional sampling for flow matching models. One category of existing approaches attempts to minimize a cost function measuring the discrepancy between generated samples and desired properties by optimizing either the initial point or the solution trajectory of the ODE~\cite{ben2024d, wang2024training, patel2025flowchef}. However, these optimization-based methods introduce bias between the sampling distribution and the target distribution $p_{\text{data}}(\cdot| y)$ (cf.\ \cref{fig:toy_dflow,fig:toy_fc}), and tend to produce visually implausible images. 
A second category of approaches can be viewed as employing surrogate velocity fields to approximate the optimal conditional velocity field~\cite{pokle2023training, song2024flow}. However, approximation error introduces a discrepancy between the sampling and target distributions (cf.\ \cref{fig:toy_fmps}), and the generated samples can exhibit sensitivity to random seeds in practice. In contrast, our method corrects approximation error through theoretically grounded sample weighting, thereby achieving asymptotic accuracy.

SMC techniques have been explored in prior work for conditional sampling under different contexts~\cite{trippe2022diffusion,  cardoso2023monte, wu2023practical}. For instance, TDS~\cite{wu2023practical} applies the twisting technique from SMC to the reverse SDE of score-based diffusion models to achieve conditional sampling. By contrast, our method is based on flow matching models, which exhibit entirely deterministic generative dynamics. We propose an ODE-SDE-ODE sampling strategy that selectively introduces stochasticity at intermediate stages to facilitate the incorporation of resampling, while preserving deterministic dynamics during the initial and terminal phases. Empirically, this approach yields superior intra-batch diversity and classification accuracy (cf.\ \cref{fig:ablation}).

\section{Conclusion}
We have presented a principled, training-free conditional sampling method for flow matching models based on importance sampling. By incorporating a resampling technique in SMC during intermediate stages, we enrich the particle set with promising partial samples and mitigate the weight degeneracy issue inherent to high-dimensional IS. The proposed framework is theoretically guaranteed to be asymptotically exact and has been empirically demonstrated to outperform existing approaches on conditional sampling tasks across several
benchmark datasets.

\bibliography{references}
\bibliographystyle{icml2026}

\newpage
\appendix
\onecolumn

\section{Optimal Conditional Velocity Field}

\subsection{Derivation of the Optimal Conditional Velocity Field}
\label{appendix:deriv_op}

Below we demonstrate that the ordinary differential equation $dX(t) = v^*_c(X(t),t)dt$ transforms the initial noise distribution $\mathcal{N}(0,I)$ at $t = 0$ into the target distribution $p_{\text{data}}(\cdot| y)$ at $t = 1$, where the optimal conditional velocity field $v^*_c(x(t),t)$ takes the form $\mathbb{E}[\tilde{X}(1) - \tilde{X}(0)|\tilde{X}(t)=x(t), Y=y]$. The following proof adapts the approach of Liu et al.~\yrcite{liu2022flow} to the conditional sampling setting.

Consider the stochastic process $\{\tilde{X}(t) = t\tilde{X}(1) + (1 - t)\tilde{X}(0): t \in [0,1]\}$ defined in \cref{sec:tfcg}. Conditioned on $Y = y$, we have $\tilde{X}(0) | Y = y \sim \mathcal{N}(0,I)$ and $\tilde{X}(1) | Y = y \sim p_{\text{data}}(\cdot| y)$. For any compactly supported continuously differentiable test function $h(\cdot): \mathbb{R}^d \to \mathbb{R}$, it holds that
\begin{equation}
\frac{d\mathbb{E}[h(\tilde{X}(t)) | Y = y]}{dt} = \mathbb{E}[\nabla h(\tilde{X}(t))^T \cdot (\tilde{X}(1) - \tilde{X}(0)) | Y=y] = \mathbb{E}[\nabla h(\tilde{X}(t))^T \cdot v^*_c(\tilde{X}(t),t) | Y=y],
\label{eq:testfunc}
\end{equation}
where the second equality follows from the law of iterated expectations.

Let $p(\cdot, t | y)$ denote the probability density function of $\tilde{X}(t) | Y=y$. From \cref{eq:testfunc}, we obtain
\begin{align}
&\int h(x) \{\frac{\partial p(x,t | y)}{\partial t} + \nabla_x \cdot [v^*_c(x,t)p(x,t | y)]\} dx \nonumber\\
&= \int h(x) \frac{\partial p(x,t | y)}{\partial t} dx - \int \nabla h(x)^T \cdot v^*_c(x,t) p(x,t | y) dx \nonumber\\
&= \frac{d\mathbb{E}[h(\tilde{X}(t)) | Y=y]}{dt} - \mathbb{E}[\nabla h(\tilde{X}(t))^T \cdot v^*_c(\tilde{X}(t),t) | Y=y] = 0,
\end{align}
where the first equality employs integration by parts.

By the arbitrariness of the test function $h(\cdot)$, we conclude that
\begin{equation}
\frac{\partial p(x,t | y)}{\partial t} = - \nabla_x \cdot [v^*_c(x,t)p(x,t| y)] \quad \text{almost everywhere}.
\label{eq:cont_eq}
\end{equation}

Note that \cref{eq:cont_eq} coincides with the continuity equation~\cite{oksendal2003stochastic} governing the probability evolution induced by the ODE
\begin{equation}
dX(t) = v^*_c(X(t),t)dt.
\end{equation}
Consequently, we have established that the ODE $dX(t) = v^*_c(X(t),t)dt$ transforms $\mathcal{N}(0,I)$ at $t = 0$ into $p_{\text{data}}(\cdot| y)$ at $t = 1$ with the marginal density of $X(t)$ coinciding with $p(\cdot, t | y)$ for all $t \in [0,1]$.

\subsection{Proof of \cref{prop:decomp}}
\label{appendix:proof_prop1}

\begin{proof}
For notational convenience, we denote by $p_t(\cdot)$ the density function of $\tilde{X}(t)$ and by $p_t(\cdot| y)$ the density function of $\tilde{X}(t)| Y=y$.

By definition, 
\begin{equation}
v^*_c(\tilde{x}(t),t) = \mathbb{E}[\tilde{X}(1) - \tilde{X}(0)|\tilde{X}(t)=\tilde{x}(t), Y=y].
\end{equation}

For $t \in (0,1)$, exploiting the linear interpolation relationship $\tilde{X}(t) = t\tilde{X}(1) + (1 - t)\tilde{X}(0)$, we can express the equation as 
\begin{align} 
v^*_c(\tilde{x}(t),t) &= \mathbb{E}[\tilde{X}(1) - \tilde{X}(0)| \tilde{X}(t)=\tilde{x}(t), Y=y]\nonumber\\
&=\frac{\tilde{x}(t)}{t} + \frac{1 - t}{t} \cdot\mathbb{E}\left[-\frac{\tilde{X}(t) - t\tilde{X}(1)}{(1-t)^2} \bigg | \tilde{X}(t)=\tilde{x}(t), Y=y\right]\nonumber\\
&= \frac{\tilde{x}(t)}{t} + \frac{1 - t}{t} \cdot\int -\frac{\tilde{x}(t) - t\tilde{x}(1)}{(1-t)^2} p(\tilde{x}(1) | \tilde{x}(t),y)\, d\tilde{x}(1).
\label{eq:step1}
\end{align}

Invoking the conditional independence of $\tilde{X}(t)$ and $Y$ given $\tilde{X}(1)$, we can reformulate \cref{eq:step1} as 
\begin{align} 
v^*_c(\tilde{x}(t),t) &= \frac{\tilde{x}(t)}{t} + \frac{1 - t}{t} \cdot\int -\frac{\tilde{x}(t) - t\tilde{x}(1)}{(1-t)^2} p(\tilde{x}(1)|\tilde{x}(t),y)\, d\tilde{x}(1)\nonumber\\
&=\frac{\tilde{x}(t)}{t} + \frac{1 - t}{t} \cdot\frac{\int -\frac{\tilde{x}(t) - t\tilde{x}(1)}{(1-t)^2} p(\tilde{x}(t)|\tilde{x}(1)) p_1(\tilde{x}(1)| y)\, d\tilde{x}(1)}{p_t(\tilde{x}(t)| y)}.
\label{eq:step2}
\end{align}

Noting that $\tilde{X}(t) | \tilde{X}(1) = \tilde{x}(1) \sim \mathcal{N}(t\tilde{x}(1), (1-t)^2 I)$, it follows that $-\frac{\tilde{x}(t) - t\tilde{x}(1)}{(1-t)^2} p(\tilde{x}(t)|\tilde{x}(1)) = \nabla_{\tilde{x}(t)} p(\tilde{x}(t)|\tilde{x}(1))$. Building upon this observation, we can continue the reformulation of \cref{eq:step2} to obtain 
\begin{align} 
v^*_c(x(t),t) &= \frac{\tilde{x}(t)}{t} + \frac{1 - t}{t} \cdot\frac{\int -\frac{\tilde{x}(t) - t\tilde{x}(1)}{(1-t)^2} p(\tilde{x}(t)|\tilde{x}(1)) p_1(\tilde{x}(1)| y)\, d\tilde{x}(1)}{p_t(\tilde{x}(t)| y)}\nonumber\\
&= \frac{\tilde{x}(t)}{t} + \frac{1 - t}{t} \cdot\frac{\int \nabla_{\tilde{x}(t)} p(\tilde{x}(t)| \tilde{x}(1)) p_1(\tilde{x}(1)| y)\, d\tilde{x}(1)}{p_t(\tilde{x}(t)| y)}\nonumber\\
&= \frac{\tilde{x}(t)}{t} + \frac{1 - t}{t} \cdot\frac{\nabla_{\tilde{x}(t)} \int p(\tilde{x}(t)| \tilde{x}(1)) p_1(\tilde{x}(1)| y)\, d\tilde{x}(1)}{p_t(\tilde{x}(t)| y)}\nonumber\\
&= \frac{\tilde{x}(t)}{t} + \frac{1 - t}{t} \cdot\nabla_{\tilde{x}(t)} \log p_t(\tilde{x}(t)| y).
\label{eq:step3}
\end{align}

Applying Bayes' theorem, we can rewrite \cref{eq:step3} as 
\begin{align} 
v^*_c(\tilde{x}(t),t) &= \frac{\tilde{x}(t)}{t} + \frac{1 - t}{t} \cdot\nabla_{\tilde{x}(t)} \log p_t(\tilde{x}(t)| y)\nonumber\\
&= \frac{\tilde{x}(t)}{t} + \frac{1 - t}{t} \cdot\nabla_{\tilde{x}(t)} \log p_t(\tilde{x}(t)) + \frac{1 - t}{t} \cdot\nabla_{\tilde{x}(t)} \log p_{Y | \tilde{X}(t)}(y | \tilde{x}(t))\nonumber\\
&= \frac{\tilde{x}(t)}{t} + \frac{1 - t}{t} \cdot\frac{\nabla_{\tilde{x}(t)} \int p(\tilde{x}(t)| \tilde{x}(1)) p_1(\tilde{x}(1))\, d\tilde{x}(1)}{p_t(\tilde{x}(t))} + \frac{1 - t}{t} \cdot\nabla_{\tilde{x}(t)} \log p_{Y | \tilde{X}(t)}(y | \tilde{x}(t))\nonumber\\
&= \frac{\tilde{x}(t)}{t} + \frac{1 - t}{t} \cdot\frac{\int \nabla_{\tilde{x}(t)} p(\tilde{x}(t)| \tilde{x}(1)) p_1(\tilde{x}(1))\, d\tilde{x}(1)}{p_t(\tilde{x}(t))} + \frac{1 - t}{t} \cdot\nabla_{\tilde{x}(t)} \log p_{Y | \tilde{X}(t)}(y | \tilde{x}(t)).
\label{eq:step4}
\end{align}

Utilizing the identity $\nabla_{\tilde{x}(t)} p(\tilde{x}(t)| \tilde{x}(1)) = -\frac{\tilde{x}(t) - t\tilde{x}(1)}{(1-t)^2} p(\tilde{x}(t)| \tilde{x}(1))$, we can reformulate \cref{eq:step4} as 
\begin{align} 
v^*_c(\tilde{x}(t),t) &= \frac{\tilde{x}(t)}{t} + \frac{1 - t}{t} \cdot\frac{\int \nabla_{\tilde{x}(t)} p(\tilde{x}(t)| \tilde{x}(1)) p_1(\tilde{x}(1))\, d\tilde{x}(1)}{p_t(\tilde{x}(t))} + \frac{1 - t}{t} \cdot\nabla_{\tilde{x}(t)} \log p_{Y | \tilde{X}(t)}(y | \tilde{x}(t))\nonumber\\
&= \frac{\tilde{x}(t)}{t} + \frac{1 - t}{t} \cdot\frac{\int -\frac{\tilde{x}(t) - t\tilde{x}(1)}{(1-t)^2} p(\tilde{x}(t)| \tilde{x}(1)) p_1(\tilde{x}(1))\, d\tilde{x}(1)}{p_t(\tilde{x}(t))} + \frac{1 - t}{t} \cdot\nabla_{\tilde{x}(t)} \log p_{Y | \tilde{X}(t)}(y | \tilde{x}(t))\nonumber\\
&= \frac{\tilde{x}(t)}{t} + \frac{1 - t}{t} \cdot\int -\frac{\tilde{x}(t) - t\tilde{x}(1)}{(1-t)^2} p(\tilde{x}(1)| \tilde{x}(t))\, d\tilde{x}(1)+ \frac{1 - t}{t} \cdot\nabla_{\tilde{x}(t)} \log p_{Y | \tilde{X}(t)}(y | \tilde{x}(t))\nonumber\\
&= \frac{\tilde{x}(t)}{t} + \frac{1 - t}{t} \cdot\mathbb{E}\left[-\frac{\tilde{X}(t) - t\tilde{X}(1)}{(1-t)^2} \bigg|  \tilde{X}(t) = \tilde{x}(t)\right] + \frac{1 - t}{t} \cdot\nabla_{\tilde{x}(t)} \log p_{Y | \tilde{X}(t)}(y | \tilde{x}(t))\nonumber\\
&= \mathbb{E}[\tilde{X}(1) - \tilde{X}(0) |  \tilde{X}(t) = \tilde{x}(t)] + \frac{1 - t}{t} \cdot\nabla_{\tilde{x}(t)} \log p_{Y | \tilde{X}(t)}(y | \tilde{x}(t))\nonumber\\
&= v_{\theta^*}(\tilde{x}(t), t) + \frac{1 - t}{t} \cdot\nabla_{\tilde{x}(t)} \log p_{Y | \tilde{X}(t)}(y | \tilde{x}(t)),
\end{align} 
which is exactly the decomposition of the optimal conditional velocity field $v^*_c$ presented in \cref{eq:opvc}.

For $t = 1$, by the conditional independence of $\tilde{X}(0)$ and $Y$ given $\tilde{X}(1)$, we have 
\begin{align}
v^*_c(\tilde{x}(1),1) &= \mathbb{E}[\tilde{X}(1) - \tilde{X}(0)|\tilde{X}(1)=\tilde{x}(1), Y=y]\nonumber\\
&= \mathbb{E}[\tilde{X}(1) - \tilde{X}(0)|\tilde{X}(1)=\tilde{x}(1)]\nonumber\\
&= v_{\theta^*}(\tilde{x}(1),1),
\end{align}
which also satisfies \cref{eq:opvc}.

\end{proof}

\section{From Deterministic to Stochastic Flow}

\subsection{Proof of \cref{prop2}}
\label{appendix:proof_prop2}

\begin{proof}
Consider the ordinary differential equation presented in \cref{eq:ode}, which takes the form
\begin{equation*}
dX_1(t) = v_{\theta^*}(X_1(t), t)dt,
\end{equation*}
with initial condition $X_1(0) \sim \mathcal{N}(0, I)$. In what follows, we employ the notation $f(\cdot,t)$ to denote the marginal probability density of $X_1(t)$, thereby representing its dependence on both state and time. According to the continuity equation~\cite{oksendal2003stochastic}, the marginal density evolves as follows:
\begin{equation}
\frac{\partial f(x,t)}{\partial t} = -\nabla_x \cdot [v_{\theta^*}(x, t) f(x,t)].
\label{eq:cont}
\end{equation}

To introduce stochasticity into the original ODE-based framework, we reformulate \cref{eq:cont} by adding zero in a strategically useful form:
\begin{align}
\frac{\partial f(x,t)}{\partial t} &= -\nabla_x \cdot [v_{\theta^*}(x, t) f(x,t)] \nonumber\\
&= -\nabla_x \cdot [v_{\theta^*}(x, t) f(x,t)] - (1-t)\alpha(t)\Delta_x f(x,t) + (1-t)\alpha(t)\Delta_x f(x,t) \nonumber\\
&= -\nabla_x \cdot [v_{\theta^*}(x, t) f(x,t)] - (1-t)\alpha(t)\nabla_x \cdot [\nabla_x f(x,t)] + (1-t)\alpha(t)\Delta_x f(x,t) \nonumber\\
&= -\nabla_x \cdot [v_{\theta^*}(x, t) f(x,t)] - (1-t)\alpha(t)\nabla_x \cdot [(\nabla_x \log f(x,t)) f(x,t)] + (1-t)\alpha(t)\Delta_x f(x,t) \nonumber\\
&= -\nabla_x \cdot \{[v_{\theta^*}(x, t) + (1-t)\alpha(t)\nabla_x \log f(x,t)] f(x,t)\} + (1-t)\alpha(t)\Delta_x f(x,t),
\label{eq:fpk_1}
\end{align}
where $\Delta_x$ denotes the Laplacian operator, defined as $\Delta_x f = \frac{\partial^2 f}{\partial x_1^2} + \frac{\partial^2 f}{\partial x_2^2} + \cdots + \frac{\partial^2 f}{\partial x_d^2}$.

For $t \in [0,1)$, recall from \cref{eq:int} that
\begin{equation*}
f(\tilde{x}(t),t) = \int p(\tilde{x}(t)| \tilde{x}(1))p_{\text{data}}(\tilde{x}(1))d\tilde{x}(1),
\end{equation*}
where $p(\tilde{x}(t)| \tilde{x}(1)) = \mathcal{N}(\tilde{x}(t); t\tilde{x}(1), (1-t)^2I)$. This yields $\nabla_{\tilde{x}(t)} p(\tilde{x}(t)| \tilde{x}(1)) = -\frac{\tilde{x}(t) - t\tilde{x}(1)}{(1-t)^2} p(\tilde{x}(t)| \tilde{x}(1))$. Consequently, we obtain
\begin{align}
(1-t)\alpha(t)\nabla_{\tilde{x}(t)} \log f(\tilde{x}(t),t) &= \frac{(1-t)\alpha(t)\nabla_{\tilde{x}(t)} f(\tilde{x}(t),t)}{f(\tilde{x}(t),t)} \nonumber\\
&= \frac{(1-t)\alpha(t)\nabla_{\tilde{x}(t)} \int p(\tilde{x}(t)| \tilde{x}(1))p_{\text{data}}(\tilde{x}(1))d\tilde{x}(1)}{f(\tilde{x}(t),t)} \nonumber\\
&= \frac{(1-t)\alpha(t) \int \nabla_{\tilde{x}(t)} p(\tilde{x}(t)| \tilde{x}(1))p_{\text{data}}(\tilde{x}(1))d\tilde{x}(1)}{f(\tilde{x}(t),t)} \nonumber\\
&= \frac{(1-t)\alpha(t) \int -\frac{\tilde{x}(t) - t\tilde{x}(1)}{(1-t)^2} p(\tilde{x}(t)| \tilde{x}(1))p_{\text{data}}(\tilde{x}(1))d\tilde{x}(1)}{f(\tilde{x}(t),t)} \nonumber\\
&= \alpha(t) \int -\frac{\tilde{x}(t) - t\tilde{x}(1)}{1-t} p(\tilde{x}(1)| \tilde{x}(t)) d\tilde{x}(1) \nonumber\\
&= \alpha(t) \mathbb{E}\left[-\frac{\tilde{X}(t) - t\tilde{X}(1)}{1-t} \bigg|  \tilde{X}(t) = \tilde{x}(t)\right] \nonumber\\
&= \alpha(t) \{-\tilde{x}(t) + t\mathbb{E}[\tilde{X}(1) - \tilde{X}(0)|  \tilde{X}(t) = \tilde{x}(t)]\} \nonumber\\
&= \alpha(t) [-\tilde{x}(t) + tv_{\theta^*}(\tilde{x}(t), t)].
\label{eq:score_vel}
\end{align}

For $t = 1$, we have
\begin{align*}
&(1-t)\alpha(t)\nabla_{\tilde{x}(t)} \log f(\tilde{x}(t),t)\big|_{t=1} = 0 \\
&= \alpha(1)\{-\tilde{x}(1) + \mathbb{E}\left[\tilde{X}(1) - \tilde{X}(0)\mid\tilde{X}(1) = \tilde{x}(1)\right]\} \\
&= \alpha(t)\left[-\tilde{x}(t) + tv_{\theta^*}(\tilde{x}(t), t)\right]\big|_{t=1}.
\end{align*}

By the above derivation, we can reformulate \cref{eq:fpk_1} as
\begin{align}
\frac{\partial f(x,t)}{\partial t} &= -\nabla_x \cdot \{[v_{\theta^*}(x, t) + (1-t)\alpha(t)\nabla_x \log f(x,t)] f(x,t)\} + (1-t)\alpha(t)\Delta_x f(x,t) \nonumber\\
&= -\nabla_x \cdot \{[v_{\theta^*}(x, t) + \alpha(t)(-x + tv_{\theta^*}(x, t))] f(x,t)\} + (1-t)\alpha(t)\Delta_x f(x,t).
\end{align}

This partial differential equation corresponds to the Fokker-Planck-Kolmogorov equation~\cite{sarkka2019applied} associated with the stochastic differential equation presented in \cref{eq:sde}, which takes the form
\begin{equation*}
dX_2(t) = \left\{\alpha(t)\left[-X_2(t) + tv_{\theta^*}(X_2(t), t)\right] + v_{\theta^*}(X_2(t), t)\right\}dt + \sqrt{2(1-t)\alpha(t)}\,dB(t).
\end{equation*}

Therefore, we have established that, given an initial distribution $\mathcal{N}(0,I)$, the stochastic processes corresponding to \cref{eq:ode} and \cref{eq:sde} share the same marginal distributions.

\end{proof}

\subsection{Connecting to Anderson's Reverse-Time SDE}
\label{appendix:anderson}

Consider a stochastic process $\{X(t): t \in [0,1]\}$ modeled as the solution to the following SDE:
\begin{equation}
dX(t) = f(X(t),t)\,dt + g(t)\,d\bar{B}(t),
\label{eq:prespecify}
\end{equation}
where $f: \mathbb{R}^d \times [0,1] \rightarrow \mathbb{R}^d$ represents the drift coefficient, $g: \mathbb{R} \rightarrow \mathbb{R}$ represents the diffusion coefficient, $\bar{B}(t)$ denotes a standard Wiener process with time flowing backwards from 1 to 0, and the boundary condition is given by $X(1) \sim p_{\text{data}}$.

Anderson's seminal work~\yrcite{anderson1982reverse} establishes that this stochastic process can equivalently be characterized by the following SDE (known as Anderson's reverse-time SDE):
\begin{equation}
dX(t) = \left[f(X(t),t) + g(t)^2 \nabla_{x(t)} \log p(X(t),t)\right]dt + g(t)\,dB(t),
\label{eq:reverse}
\end{equation}
where $B(t)$ denotes a standard Wiener process, and $p(\cdot,t)$ represents the marginal density of $X(t)$.

To establish the connection between the stochastic flow given by \cref{eq:sde} in \cref{prop2} and Anderson's reverse-time SDE discussed above, we specify $f(x,t) = \frac{x}{t}$ and $g(t) = \sqrt{\frac{2(1-t)}{t}}$ in \cref{eq:prespecify}. Since the drift coefficient $f$ is affine in $x$, the conditional distribution of $X(t)$ given $X(1)$ under \cref{eq:prespecify} can be derived analytically \citep[see Chapter~5 of][]{sarkka2019applied}:
\begin{equation}
p(x(t)| x(1)) = \mathcal{N}\left(x(t);\, tx(1),\, (1-t)^2 I\right).
\end{equation}
Notably, this distribution coincides with the conditional distribution of $\tilde{X}(t)| \tilde{X}(1)$ specified in \cref{eq:cond}. Consequently, the marginal probability density function $p(\cdot,t)$ of $X(t)$ is identical to the marginal density function $f(\cdot,t)$ of $\tilde{X}(t)$ given in \cref{eq:int}. For $t \in [0,1)$, following a derivation analogous to that presented in \cref{eq:score_vel}, the term $\nabla_{x(t)} \log p(x(t),t)$ in \cref{eq:reverse} can therefore be computed as:
\begin{equation}
\nabla_{x(t)} \log p(x(t),t) = \nabla_{x(t)} \log f(x(t),t) = \frac{-x(t) + t v_\theta^*(x(t),t)}{1-t}.
\end{equation}
Substituting this expression, along with $f(x,t) = \frac{x}{t}$ and $g(t) = \sqrt{\frac{2(1-t)}{t}}$, into \cref{eq:reverse} yields:
\begin{equation}
dX(t) = \left[-\frac{X(t)}{t} + 2v_\theta^*(X(t),t)\right]dt + \sqrt{\frac{2(1-t)}{t}}\,dB(t),
\end{equation}
which is equivalent to setting $\alpha(t) = \frac{1}{t}$ in \cref{eq:sde}. This demonstrates that Anderson's reverse-time SDE constitutes a special case of the stochastic flow given by \cref{eq:sde} presented in \cref{prop2}.

\begin{remark}
At $t = 1$, we have $g(1) = 0$, which gives
\[
[f(X(t),t) + g(t)^2 \nabla_{X(t)} \log p(X(t),t) ]\big|_{t=1} = X(1) = \left[-\frac{X(t)}{t} + 2v_\theta^*(X(t),t)\right]\bigg|_{t=1}.
\]
\end{remark}

\begin{remark}
The functions $f(x,t) = \frac{x}{t}$ and $g(t) = \sqrt{\frac{2(1-t)}{t}}$ are singular at $t = 0$. To handle this rigorously, the preceding proof should be conducted on the interval $[\varepsilon, 1]$ for some $\varepsilon \in (0,1)$. Accordingly, $\mathcal{N}(0, I)$ should be interpreted as the limiting distribution of the solution to the stochastic differential equation as $t \to 0^+$.
\end{remark}

\section{Extended Target Distribution and Sample Reweighting at the Final Time Step}

\subsection{Marginalizing the Final Extended Target Distribution}
\label{appendix:marginalize}

Below we demonstrate that the final extended target distribution contains the target distribution $p_{\text{data}}(x_1| y)$ as its marginal. As defined in \cref{eq:final}, the final extended target distribution is given by
\begin{equation}
f_{t_{n_1}}(x_{t_{n_1}}) \cdot \left[\prod_{i=n_1+1}^{n_2} h_{t_i}(x_{t_i}| x_{t_{i-1}})\right] \cdot e^{-\int_{t_{n_2}}^{1}\nabla \cdot v_{\theta^*}(x_t,t) \, dt} \cdot \tilde{p}(y| x_1),
\end{equation}
where $h_{t_i}(x_{t_i}| x_{t_{i-1}})$ denotes the conditional distribution of $X_2(t_i)$ given $X_2(t_{i-1})=x_{t_{i-1}}$ under SDE~\eqref{eq:sde}. By \cref{prop2}, the evolving marginal density governed by SDE~\eqref{eq:sde} matches $f_t(\cdot)$ for all $t$, which yields
\begin{align}
&\int f_{t_{n_1}}(x_{t_{n_1}}) \cdot \left[\prod_{i=n_1+1}^{n_2} h_{t_i}(x_{t_i}| x_{t_{i-1}})\right] \cdot e^{-\int_{t_{n_2}}^{1}\nabla \cdot v_{\theta^*}(x_t,t) \, dt} \cdot \tilde{p}(y| x_1) \, dx_{t_{n_1}:t_{n_2-1}} \notag \\
&= \int f_{t_{n_1}}(x_{t_{n_1}}) \cdot \left[\prod_{i=n_1+1}^{n_2} h_{t_i}(x_{t_i}| x_{t_{i-1}})\right]  dx_{t_{n_1}:t_{n_2-1}} \cdot e^{-\int_{t_{n_2}}^{1}\nabla \cdot v_{\theta^*}(x_t,t) \, dt} \cdot \tilde{p}(y| x_1)  \notag \\
&= f_{t_{n_2}}(x_{t_{n_2}}) \cdot e^{-\int_{t_{n_2}}^{1}\nabla \cdot v_{\theta^*}(x_t,t) \, dt} \cdot \tilde{p}(y| x_1)
\end{align}
Since the ODE $dX(t) = v_{\theta^*}(X(t), t)dt$ transforms the marginal density from $f_{t_{n_2}}(\cdot)$ to $p_{\text{data}}(\cdot)$ over $[t_{n_2}, 1]$, applying \cref{eq:instant_final} yields
\begin{equation}
f_{t_{n_2}}(x_{t_{n_2}}) \cdot e^{-\int_{t_{n_2}}^{1}\nabla \cdot v_{\theta^*}(x_t,t) \, dt} \cdot \tilde{p}(y| x_1) = p_{\text{data}}(x_1) \cdot \tilde{p}(y| x_1),
\end{equation}
which is proportional to the target distribution $p_{\text{data}}(x_1| y)$.

\subsection{Derivation of the Final Incremental Weight}
\label{appendix:final_weight}

Below we derive the final incremental weight given in \cref{eq:weight}. At the terminal step of the intermediate stage in \cref{alg:tftf}, we obtain a collection of weighted samples $\{(x_{t_{n_1}:t_{n_2}}^{(k)}, \bar{w}_{t_{n_2}}^{(k)})\}_{k=1}^K$ that are properly weighted with respect to the intermediate target distribution at time $t = t_{n_2}$:
\begin{equation}
    \pi_{t_{n_2}}(x_{t_{n_1}:t_{n_2}}) = f_{t_{n_1}}(x_{t_{n_1}}) \cdot \left[\prod_{i=n_1+1}^{n_2} h_{t_i}(x_{t_i} \mid x_{t_{i-1}})\right] \cdot \tilde{p}(y \mid \hat{x}_1(x_{t_{n_2}})).
\end{equation}
Over the interval $[t_{n_2}, 1]$, we solve the ordinary differential equation $\mathrm{d}x_t = v_{\theta^*}(x_t, t)\,\mathrm{d}t$ from $t = t_{n_2}$ to $t = 1$ to obtain $x_1$. This ODE integration induces a change of variables transformation
\begin{equation}
T: \mathbb{R}^{d \cdot (n_2-n_1+1)} \to \mathbb{R}^{d \cdot (n_2-n_1+1)}, \quad (x_{t_{n_1}}, \ldots, x_{t_{n_2-1}}, x_{t_{n_2}}) \mapsto (x_{t_{n_1}}, \ldots, x_{t_{n_2-1}}, x_1), \quad x_{t_i} \in \mathbb{R}^d.
\end{equation}
Applying this transformation to each sample yields
\begin{equation}
(x_{t_{n_1}:t_{n_2-1}}, x_1)^{(k)} = T(x_{t_{n_1}:t_{n_2}}^{(k)}), \quad k=1,\ldots,K.
\end{equation}

By the change of variables formula, the transformed samples $\{(T(x_{t_{n_1}:t_{n_2}}^{(k)}), \bar{w}_{t_{n_2}}^{(k)})\}_{k=1}^K$ are properly weighted with respect to $\pi_{t_{n_2}}(x_{t_{n_1}:t_{n_2}}) \cdot | \det J_T| ^{-1}$, where $J_T$ denotes the Jacobian matrix of $T$. Since the transformation $T$ leaves the first $n_2 - n_1$ dimensions unchanged, we have
\begin{equation}
| \det J_T| ^{-1} = \left| \det\frac{dx_{1}}{dx_{t_{n_2}}}\right| ^{-1}.
\end{equation}
Applying \cref{eq:instant_final} yields
\begin{equation}
\left| \det \frac{dx_{1}}{dx_{t_{n_2}}}\right| ^{-1} = e^{-\int_{t_{n_2}}^{1} \nabla \cdot v_{\theta^*}(x_t,t) \, dt}.
\end{equation}
Consequently, the samples $\{(T(x_{t_{n_1}:t_{n_2}}^{(k)}), \bar{w}_{t_{n_2}}^{(k)})\}_{k=1}^K$ are properly weighted with respect to
\begin{equation}
\pi_{t_{n_2}}(x_{t_{n_1}:t_{n_2}}) \cdot e^{-\int_{t_{n_2}}^{1} \nabla \cdot v_{\theta^*}(x_t,t) \, dt}.
\end{equation}

To correct for the discrepancy between this distribution and the desired final extended target distribution given in \cref{eq:final}, the final incremental weight should be computed as
\begin{align}
&\frac{f_{t_{n_1}}(x_{t_{n_1}}) \cdot \left[\prod_{i=n_1+1}^{n_2} h_{t_i}(x_{t_i}| x_{t_{i-1}})\right] \cdot e^{-\int_{t_{n_2}}^{1}\nabla \cdot v_{\theta^*}(x_t,t) \, dt} \cdot \tilde{p}(y| x_1)}{\pi_{t_{n_2}}(x_{t_{n_1}:t_{n_2}}) \cdot e^{-\int_{t_{n_2}}^{1} \nabla \cdot v_{\theta^*}(x_t,t) \, dt}} \nonumber\\
&= \frac{f_{t_{n_1}}(x_{t_{n_1}}) \cdot \left[\prod_{i=n_1+1}^{n_2} h_{t_i}(x_{t_i}| x_{t_{i-1}})\right] \cdot \tilde{p}(y| x_1)}{f_{t_{n_1}}(x_{t_{n_1}}) \cdot \left[\prod_{i=n_1+1}^{n_2} h_{t_i}(x_{t_i}| x_{t_{i-1}})\right] \cdot \tilde{p}(y| \hat{x}_1(x_{t_{n_2}}))} \nonumber\\
&= \frac{\tilde{p}(y| x_1)}{\tilde{p}(y| \hat{x}_1(x_{t_{n_2}}))},
\end{align}
which is precisely the final incremental weight specified in \cref{eq:weight}. We verify in \cref{appendix:proof_prop3} that this sample reweighting formula yields weighted samples that converge to the target distribution $p_\text{data}(\cdot|y)$ asymptotically.

\section{Asymptotic Accuracy of \cref{alg:tftf}}

\subsection{Proof of \cref{prop:asym}}
\label{appendix:proof_prop3}

\begin{proof}

We regard the process of solving the ODE $dx_t = v_{\theta^*}(x_t, t)\,dt$ on the interval $[t_{n_2}, 1]$ in \cref{alg:tftf} as a deterministic transformation
\[
T^*: \mathbb{R}^d \to \mathbb{R}^d, \quad x_{t_{n_2}} \mapsto x_1.
\]
Substituting into the expression for the final incremental weight, we obtain
\begin{align}
\sum_{k=1}^{K} \bar{W}_1^{(k)} \phi(X_1^{(k)}) 
&= \frac{\sum_{k=1}^{K} W_1^{(k)} \phi(X_1^{(k)})}{\sum_{k=1}^{K} W_1^{(k)}} \nonumber\\
&= \frac{\sum_{k=1}^{K} \bar{W}_{t_{n_2}}^{(k)} \frac{\tilde{p}(y| T^*(X_{t_{n_2}}^{(k)}))}{\tilde{p}(y|  \hat{x}_1(X_{t_{n_2}}^{(k)}))} \phi(T^*(X_{t_{n_2}}^{(k)}))}{\sum_{k=1}^{K} \bar{W}_{t_{n_2}}^{(k)}  \frac{\tilde{p}(y| T^*(X_{t_{n_2}}^{(k)}))}{\tilde{p}(y|  \hat{x}_1(X_{t_{n_2}}^{(k)}))}} \nonumber\\
&= \frac{\sum_{k=1}^{K} \bar{W}_{t_{n_2}}^{(k)} \phi_1(X_{t_{n_2}}^{(k)})}{\sum_{k=1}^{K} \bar{W}_{t_{n_2}}^{(k)} \phi_2(X_{t_{n_2}}^{(k)})},
\end{align}
where we define
\[
\phi_1(x_{t_{n_2}}) := \frac{\tilde{p}(y| T^*(x_{t_{n_2}}))}{\tilde{p}(y| \hat{x}_1(x_{t_{n_2}}))}\phi(T^*(x_{t_{n_2}})), \quad \phi_2(x_{t_{n_2}}) := \frac{\tilde{p}(y| T^*(x_{t_{n_2}}))}{\tilde{p}(y| \hat{x}_1(x_{t_{n_2}}))}.
\]
Under the continuity assumptions on the likelihood function $\tilde{p}(y| \cdot)$ and the transformation $T^*$ (which hold under general statistical model assumptions and neural network architectures), both $\phi_1$ and $\phi_2$ belong to $\mathcal{C}_b(\mathbb{R}^d)$.

As specified in \cref{eq:target_dist}, the intermediate target distribution at $t = t_{n_2}$ is given by
\[
\pi_{t_{n_2}}(x_{t_{n_1}:t_{n_2}}) = f_{t_{n_1}}(x_{t_{n_1}}) \cdot \left[\prod_{i=n_1+1}^{n_2} h_{t_i}(x_{t_i}| x_{t_{i-1}})\right] \cdot \tilde{p}(y| \hat{x}_1(x_{t_{n_2}})),
\]
where $h_{t_i}(x_{t_i}| x_{t_{i-1}})$ denotes the conditional distribution of $X_2(t_i)$ given $X_2(t_{i-1})=x_{t_{i-1}}$ under SDE~\eqref{eq:sde}. By \cref{prop2}, the evolving marginal density governed by SDE~\eqref{eq:sde} coincides with $f_t(\cdot)$ for all $t$. Consequently, integrating out $x_{t_{n_1}:t_{n_2-1}}$ yields the unnormalized marginal density with respect to $x_{t_{n_2}}$:
\begin{align}
\int \pi_{t_{n_2}}(x_{t_{n_1}:t_{n_2}}) \, dx_{t_{n_1}:t_{n_2-1}} 
&= \int f_{t_{n_1}}(x_{t_{n_1}}) \cdot \left[\prod_{i=n_1+1}^{n_2} h_{t_i}(x_{t_i}| x_{t_{i-1}})\right] \cdot \tilde{p}(y| \hat{x}_1(x_{t_{n_2}})) \, dx_{t_{n_1}:t_{n_2-1}} \nonumber\\
&= f_{t_{n_2}}(x_{t_{n_2}}) \tilde{p}(y| \hat{x}_1(x_{t_{n_2}})).
\end{align}
We denote the corresponding normalized marginal density by
\begin{align}
\pi^*_{t_{n_2}}(x_{t_{n_2}}) := \frac{f_{t_{n_2}}(x_{t_{n_2}}) \tilde{p}(y| \hat{x}_1(x_{t_{n_2}}))}{\int f_{t_{n_2}}(x_{t_{n_2}}) \tilde{p}(y| \hat{x}_1(x_{t_{n_2}})) \, dx_{t_{n_2}}}.
\label{eq:pi*}
\end{align}

Since $\phi_1, \phi_2 \in \mathcal{C}_b(\mathbb{R}^d)$, by \citet[Proposition~11.4]{chopin2020introduction}, as $K \to \infty$,
\begin{align}
\sum_{k=1}^{K} \bar{W}_{t_{n_2}}^{(k)} \phi_1(X_{t_{n_2}}^{(k)}) &\xrightarrow{\text{a.s.}} \int \phi_1(x_{t_{n_2}}) \pi^*_{t_{n_2}}(x_{t_{n_2}}) \, dx_{t_{n_2}}, \\
\sum_{k=1}^{K} \bar{W}_{t_{n_2}}^{(k)} \phi_2(X_{t_{n_2}}^{(k)}) &\xrightarrow{\text{a.s.}} \int \phi_2(x_{t_{n_2}}) \pi^*_{t_{n_2}}(x_{t_{n_2}}) \, dx_{t_{n_2}}.
\end{align}
It follows that the ratio converges as well:
\begin{align}
\frac{\sum_{k=1}^{K} \bar{W}_{t_{n_2}}^{(k)} \phi_1(X_{t_{n_2}}^{(k)})}{\sum_{k=1}^{K} \bar{W}_{t_{n_2}}^{(k)} \phi_2(X_{t_{n_2}}^{(k)})} \xrightarrow{\text{a.s.}} \frac{\int \phi_1(x_{t_{n_2}}) \pi^*_{t_{n_2}}(x_{t_{n_2}}) \, dx_{t_{n_2}}}{\int \phi_2(x_{t_{n_2}}) \pi^*_{t_{n_2}}(x_{t_{n_2}}) \, dx_{t_{n_2}}}.
\end{align}

It remains to show that the limiting ratio equals $\mathbb{E}_{p_{\text{data}}(\cdot| y)}[\phi]$. Substituting the definitions of $\phi_1$, $\phi_2$, and $\pi^*_{t_{n_2}}$, we have
\begin{align}
&\frac{\int \phi_1(x_{t_{n_2}}) \pi^*_{t_{n_2}}(x_{t_{n_2}}) \, dx_{t_{n_2}}}{\int \phi_2(x_{t_{n_2}}) \pi^*_{t_{n_2}}(x_{t_{n_2}}) \, dx_{t_{n_2}}} \nonumber\\
&= \frac{\int \frac{\tilde{p}(y| T^*(x_{t_{n_2}}))}{\tilde{p}(y| \hat{x}_1(x_{t_{n_2}}))}  \phi(T^*(x_{t_{n_2}})) f_{t_{n_2}}(x_{t_{n_2}}) \tilde{p}(y| \hat{x}_1(x_{t_{n_2}})) \, dx_{t_{n_2}}}{\int \frac{\tilde{p}(y| T^*(x_{t_{n_2}}))}{\tilde{p}(y| \hat{x}_1(x_{t_{n_2}}))} f_{t_{n_2}}(x_{t_{n_2}}) \tilde{p}(y| \hat{x}_1(x_{t_{n_2}})) \, dx_{t_{n_2}}} \nonumber\\
&= \frac{\int \tilde{p}(y| T^*(x_{t_{n_2}})) \phi(T^*(x_{t_{n_2}})) f_{t_{n_2}}(x_{t_{n_2}}) \, dx_{t_{n_2}}}{\int \tilde{p}(y| T^*(x_{t_{n_2}})) f_{t_{n_2}}(x_{t_{n_2}}) \, dx_{t_{n_2}}}.
\label{eq:before_change}
\end{align}

Finally, we perform the change of variables $x_1 = T^*(x_{t_{n_2}})$, which allows us to rewrite \cref{eq:before_change} as
\begin{align}
\frac{\int \tilde{p}(y| T^*(x_{t_{n_2}})) \phi(T^*(x_{t_{n_2}})) f_{t_{n_2}}(x_{t_{n_2}}) \, dx_{t_{n_2}}}{\int \tilde{p}(y| T^*(x_{t_{n_2}})) f_{t_{n_2}}(x_{t_{n_2}}) \, dx_{t_{n_2}}} 
&= \frac{\int \tilde{p}(y| x_1) \phi(x_1) p_{\text{data}}(x_1) \, dx_1}{\int \tilde{p}(y| x_1) p_{\text{data}}(x_1) \, dx_1} \nonumber\\
&= \int \phi(x_1) p_{\text{data}}(x_1| y) \, dx_1 \nonumber\\
&= \mathbb{E}_{p_{\text{data}}(\cdot| y)}[\phi],
\label{eq:after_change}
\end{align}
where the second equality follows from Bayes' theorem. This completes the proof.
\end{proof}

\subsection{Implementation Details and Verification of Boundedness Conditions}
\label{appendix:regularity}

This section first describes the implementation of \cref{alg:tftf}, then verifies that this implementation satisfies the boundedness conditions required by \cref{prop:asym} (namely, the intermediate weight functions as well as the final increment weight are bounded).

As stated in \cref{subsec:tftf}, for $t_{n_1} \leq t \leq t_{n_2}$, we set the intermediate target distribution $\pi_{t_n}(x_{t_{n_1}:t_n})$ to be
\begin{equation}
f_{t_{n_1}}(x_{t_{n_1}}) \left[\prod_{i=n_1+1}^{n} h_{t_i}(x_{t_i}| x_{t_{i-1}})\right] \tilde{p}(y| \hat{x}_1(x_{t_n})),
\end{equation}
where $h_{t_i}(x_{t_i}| x_{t_{i-1}})$ denotes the conditional distribution of $X_2(t_i)$ given $X_2(t_{i-1}) = x_{t_{i-1}}$ under SDE~\eqref{eq:sde}. The proposal distribution at $t = t_{n_1}$ is set to $f_{t_{n_1}}(x_{t_{n_1}})$. For the subsequent proposal kernels, we first define $v_c$ as in \cref{eq:vc}:
\begin{equation*}
v_c(x(t),t) = v_{\theta^*}(x(t), t) + \beta(t) \cdot \frac{1 - t}{t} \cdot \nabla_{x(t)} \log \tilde{p}(y| \hat{x}_1(x(t))).
\end{equation*}
The proposal kernel $q_{t_n}(x_{t_n} |  x_{t_{n-1}})$ for $n_1 < n \leq n_2$ is then defined as the conditional distribution of $X_2(t_n)$ given $X_2(t_{n-1}) = x_{t_{n-1}}$ under SDE~\eqref{eq:sde}, with $v_{\theta^*}$ replaced by $v_c$.

In our implementation, we compute $h_{t_n}(x_{t_n}| x_{t_{n-1}})$ using the Euler--Maruyama discretization:
\begin{equation}
h_{t_n}(x_{t_n}| x_{t_{n-1}}) \approx \mathcal{N}(x_{t_n}; \mu_{t_{n-1}}(x_{t_{n-1}}), \sigma_{t_{n-1}}^2 I),
\label{eq:compute_h}
\end{equation}
where
\begin{align}
\mu_{t_{n-1}}(x_{t_{n-1}}) &:= x_{t_{n-1}} + \{\alpha(t_{n-1}) \left[-x_{t_{n-1}} + t_{n-1}v_{\theta^*}(x_{t_{n-1}}, t_{n-1})\right] + v_{\theta^*}(x_{t_{n-1}}, t_{n-1})\}(t_n - t_{n-1}) , \\
\sigma_{t_{n-1}}^2 &:= 2(1-t_{n-1})\alpha(t_{n-1})(t_n - t_{n-1}).
\end{align}
Similarly, we compute $q_{t_n}(x_{t_n}| x_{t_{n-1}})$ as
\begin{equation}
q_{t_n}(x_{t_n}| x_{t_{n-1}}) \approx \mathcal{N}(x_{t_n}; \tilde{\mu}_{t_{n-1}}(x_{t_{n-1}}), \tilde{\sigma}_{t_{n-1}}^2 I),
\label{eq:compute_q}
\end{equation}
where
\begin{align}
\tilde{\mu}_{t_{n-1}}(x_{t_{n-1}}) &:= x_{t_{n-1}} + \{\alpha(t_{n-1}) \left[-x_{t_{n-1}} + t_{n-1}v_c(x_{t_{n-1}}, t_{n-1})\right] + v_c(x_{t_{n-1}}, t_{n-1})\}(t_n - t_{n-1}) ,  \label{eq:mu}\\
\tilde{\sigma}_{t_{n-1}}^2 &:= \sigma_{t_{n-1}}^2 + \epsilon_1, \quad \text{with } \quad 0 < \epsilon_1 \ll 1. \label{eq:sigma}
\end{align}

To ensure that the boundedness conditions of \cref{prop:asym} are satisfied, we apply three measures: first, a small constant $0 < \epsilon_1 \ll 1$ is added to the proposal kernel variance $\tilde{\sigma}_{t_{n-1}}^2$, as shown in \cref{eq:sigma}; second, a small constant $0 < \epsilon_2 \ll 1$ is added to $\tilde{p}(y| \hat{x}_1(x_{t_n}))$ for $t_{n_1} \leq t \leq t_{n_2}$ (note that no such constant is added in the computation of $\tilde{p}(y| x_1)$); third, the functions $\alpha(t)$ and $\beta(t)$ are chosen to be bounded on the interval $[t_{n_1}, t_{n_2}]$, and norm clipping is applied to the gradient term $\nabla_{x(t)} \log \tilde{p}(y| \hat{x}_1(x(t)))$ in $v_c$.

Importantly, these modifications do not affect the asymptotic accuracy of \cref{alg:tftf} for sampling from the target distribution $p_\text{data}(x_1| y) \propto p_\text{data}(x_1)\tilde{p}(y| x_1)$. To verify this, one need only replace $\tilde{p}(y|  \hat{x}_1(X_{t_{n_2}}^{(k)}))$ with $\tilde{p}(y|  \hat{x}_1(X_{t_{n_2}}^{(k)})) + \epsilon_2$ in the proof presented in \cref{appendix:proof_prop3}. 

Under this implementation, the intermediate weight functions are given by
\begin{equation}
w_{t_n} = \begin{cases} 
\tilde{p}(y| \hat{x}_1(x_{t_{n_1}})) + \epsilon_2, & \text{for } n = n_1, \\[6pt]
\displaystyle\frac{\left(\tilde{p}(y| \hat{x}_1(x_{t_{n}})) + \epsilon_2\right) h(x_{t_n}| x_{t_{n-1}})}{\left(\tilde{p}(y| \hat{x}_1(x_{t_{n-1}})) + \epsilon_2\right) q(x_{t_n}| x_{t_{n-1}})}, & \text{for } n_1 < n \leq n_2.
\end{cases}
\label{eq:smc_weight}
\end{equation}
The final increment weight is given by
\begin{equation}
\frac{\tilde{p}(y| x_1)}{\tilde{p}(y| \hat{x}_1(x_{t_{n_2}})) + \epsilon_2}.
\label{eq:final_increment}
\end{equation}

We now verify that the intermediate weight functions~\eqref{eq:smc_weight} and the final increment weight~\eqref{eq:final_increment} are bounded. According to \cref{eq:compute_h} and \cref{eq:compute_q}, we have
\begin{equation}
\log \frac{h(x_{t_n}| x_{t_{n-1}})}{q(x_{t_n}| x_{t_{n-1}})} = -\frac{1}{2}\left[\sigma_{t_{n-1}}^{-2}\| x_{t_n} - \mu_{t_{n-1}}(x_{t_{n-1}})\| ^2 - \tilde{\sigma}_{t_{n-1}}^{-2}\| x_{t_n} - \tilde{\mu}_{t_{n-1}}(x_{t_{n-1}})\| ^2\right] + \log \frac{| 2\pi\sigma_{t_{n-1}}^2 I| ^{-1/2}}{| 2\pi\tilde{\sigma}_{t_{n-1}}^2 I| ^{-1/2}}.
\label{eq:log_ratio}
\end{equation}
Applying the identity
\begin{equation}
\begin{aligned}
\| x_{t_n} - \tilde{\mu}_{t_{n-1}}(x_{t_{n-1}})\| ^2 &= \| x_{t_n} - \mu_{t_{n-1}}(x_{t_{n-1}})\| ^2 \\
&\quad + 2\langle x_{t_n} - \mu_{t_{n-1}}(x_{t_{n-1}}), \mu_{t_{n-1}}(x_{t_{n-1}}) - \tilde{\mu}_{t_{n-1}}(x_{t_{n-1}})\rangle \\
&\quad + \| \mu_{t_{n-1}}(x_{t_{n-1}}) - \tilde{\mu}_{t_{n-1}}(x_{t_{n-1}})\| ^2
\end{aligned}
\end{equation}
yields
\begin{equation}
\begin{aligned}
&\sigma_{t_{n-1}}^{-2}\| x_{t_n} - \mu_{t_{n-1}}(x_{t_{n-1}})\| ^2 - \tilde{\sigma}_{t_{n-1}}^{-2}\| x_{t_n} - \tilde{\mu}_{t_{n-1}}(x_{t_{n-1}})\| ^2 \\
&= \left(\sigma_{t_{n-1}}^{-2} - \tilde{\sigma}_{t_{n-1}}^{-2}\right) \| x_{t_n} - \mu_{t_{n-1}}(x_{t_{n-1}})\| ^2 \\
&\quad - 2\tilde{\sigma}_{t_{n-1}}^{-2}\langle x_{t_n} - \mu_{t_{n-1}}(x_{t_{n-1}}), \mu_{t_{n-1}}(x_{t_{n-1}}) - \tilde{\mu}_{t_{n-1}}(x_{t_{n-1}})\rangle \\
&\quad - \tilde{\sigma}_{t_{n-1}}^{-2} \| \mu_{t_{n-1}}(x_{t_{n-1}}) - \tilde{\mu}_{t_{n-1}}(x_{t_{n-1}})\| ^2.
\end{aligned}
\end{equation}

Applying the Cauchy--Schwarz inequality, we obtain
\begin{equation}
\begin{aligned}
&\sigma_{t_{n-1}}^{-2}\| x_{t_n} - \mu_{t_{n-1}}(x_{t_{n-1}})\| ^2 - \tilde{\sigma}_{t_{n-1}}^{-2}\| x_{t_n} - \tilde{\mu}_{t_{n-1}}(x_{t_{n-1}})\| ^2 \\
&\leq \left(\sigma_{t_{n-1}}^{-2} - \tilde{\sigma}_{t_{n-1}}^{-2}\right) \| x_{t_n} - \mu_{t_{n-1}}(x_{t_{n-1}})\| ^2 \\
&\quad + 2\tilde{\sigma}_{t_{n-1}}^{-2}\| x_{t_n} - \mu_{t_{n-1}}(x_{t_{n-1}})\|  \cdot \| \mu_{t_{n-1}}(x_{t_{n-1}}) - \tilde{\mu}_{t_{n-1}}(x_{t_{n-1}})\|  \\
&\quad - \tilde{\sigma}_{t_{n-1}}^{-2} \| \mu_{t_{n-1}}(x_{t_{n-1}}) - \tilde{\mu}_{t_{n-1}}(x_{t_{n-1}})\| ^2.
\end{aligned}
\label{expression}
\end{equation}

This expression is bounded from below for the following two reasons: first, the quadratic function in $\| x_{t_n} - \mu_{t_{n-1}}(x_{t_{n-1}})\| $:
\begin{equation}
\left(\sigma_{t_{n-1}}^{-2} - \tilde{\sigma}_{t_{n-1}}^{-2}\right) \| x_{t_n} - \mu_{t_{n-1}}(x_{t_{n-1}})\| ^2 + 2\tilde{\sigma}_{t_{n-1}}^{-2}\| x_{t_n} - \mu_{t_{n-1}}(x_{t_{n-1}})\|  \cdot \| \mu_{t_{n-1}}(x_{t_{n-1}}) - \tilde{\mu}_{t_{n-1}}(x_{t_{n-1}})\| 
\end{equation}
has a positive leading coefficient (since $\sigma_{t_{n-1}}^{-2} > \tilde{\sigma}_{t_{n-1}}^{-2}$ due to the addition of $\epsilon_1$) and therefore admits a lower bound; second, the norm of the mean difference:
\begin{equation}
\| \mu_{t_{n-1}}(x_{t_{n-1}}) - \tilde{\mu}_{t_{n-1}}(x_{t_{n-1}})\|  = \| [\alpha(t_{n-1})t_{n-1} + 1] \cdot (t_n - t_{n-1}) \cdot \beta(t_{n-1}) \cdot \frac{1 - t_{n-1}}{t_{n-1}} \cdot \nabla_{x(t)} \log \tilde{p}(y| \hat{x}_1(x(t)))\| 
\end{equation}
is bounded due to the norm clipping applied to the gradient term $\nabla_{x_{t_{n-1}}} \log \tilde{p}(y| \hat{x}_1(x_{t_{n-1}}))$. It follows that \cref{eq:log_ratio} is upper-bounded and thus $\frac{h(x_{t_n}| x_{t_{n-1}})}{q(x_{t_n}| x_{t_{n-1}})}$ is bounded.

Since the likelihood function $\tilde{p}(y | \cdot)$ is nonnegative and bounded in typical conditional sampling scenarios, we conclude that both the intermediate weight functions~\eqref{eq:smc_weight} and the final increment weight~\eqref{eq:final_increment} are bounded. This verifies that the boundness conditions required by \cref{prop:asym} are satisfied.

\section{Distributed Implementation of TFTF}
\label{appendix:SMC^2}

\subsection{Nested Training-Free Targeted Flow}
\label{appendix:..}

\begin{algorithm}[htbp]
\caption{Nested Training-Free Targeted Flow (Nested TFTF)}
\label{alg:nested_tftf}
\begin{algorithmic}[1]
\REQUIRE Pretrained velocity field $v_{\theta^*}$, likelihood function $\tilde{p}(y| \cdot)$, intermediate proposal distributions $\{q_{t_n}\}_{n=n_1}^{n_2}$, intermediate target distributions $\{\pi_{t_n}\}_{n=n_1}^{n_2}$, number of nodes $M$, outer resampling threshold $M_{\tau}$, number of samples per node $K$

\STATE \textbf{Step 1: Initialization}
\FOR{$m = 1, \ldots, M$}
    \FOR{$k = 1, \ldots, K$}
        \STATE Sample $x_0^{(m,k)} \sim \mathcal{N}(0, I)$
        \STATE Solve $\mathrm{d}x_t = v_{\theta^*}(x_t, t)\mathrm{d}t$ from $t=0$ to $t=t_{n_1}$ to obtain $x_{t_{n_1}}^{(m,k)}$
        \STATE Compute $w_{t_{n_1}}^{(m,k)} = \frac{\pi_{t_{n_1}}(x_{t_{n_1}}^{(m,k)})}{q_{t_{n_1}}(x_{t_{n_1}}^{(m,k)})}$
    \ENDFOR
    \STATE Normalize $\bar{w}_{t_{n_1}}^{(m,k)} = \frac{w_{t_{n_1}}^{(m,k)}}{\sum_{j=1}^K w_{t_{n_1}}^{(m,j)}}$
    \STATE Set $\text{Node}_{t_{n_1}}^{(m)} = \{(x_{t_{n_1}}^{(m,k)}, \bar{w}_{t_{n_1}}^{(m,k)})\}_{k=1}^K$
    \STATE Compute node weight $\lambda_{t_{n_1}}^{(m)} = \frac{\sum_{k=1}^K w_{t_{n_1}}^{(m,k)}}{K}$
\ENDFOR

\STATE \textbf{Step 2: Sequential Importance Sampling with Nested Resampling}
\FOR{$n = n_1 + 1, \ldots, n_2$}
    \STATE Compute $\text{ESS}_{t_{n-1}}^{\text{node}} = \frac{(\sum_{m=1}^M \lambda_{t_{n-1}}^{(m)})^2}{ \sum_{m=1}^M (\lambda_{t_{n-1}}^{(m)})^2}$
    \IF{$\text{ESS}_{t_{n-1}}^{\text{node}} < M_{\tau}$}
        \STATE Resample $\{\text{Node}_{t_{n-1}}^{(m)}\}_{m=1}^M$ with weights proportional to $\{\lambda_{t_{n-1}}^{(m)}\}_{m=1}^M$
        \STATE Set $\lambda_{t_{n-1}}^{(m)} = 1$ for all $m$
    \ENDIF
    \FOR{$m = 1, \ldots, M$}
        \STATE Resample $\{x_{t_{n_1}:t_{n-1}}^{(m,k)}\}_{k=1}^K$ using $\{\bar{w}_{t_{n-1}}^{(m,k)}\}_{k=1}^K$ to obtain $\{x_{t_{n_1}:t_{n-1}}^{(m,*k)}\}_{k=1}^K$
        \FOR{$k = 1, \ldots, K$}
            \STATE Sample $x_{t_n}^{(m,k)} \sim q_{t_n}(\cdot |  x_{t_{n-1}}^{(m,*k)})$
            \STATE Set $x_{t_{n_1}:t_n}^{(m,k)} = (x_{t_{n_1}:t_{n-1}}^{(m,*k)}, x_{t_n}^{(m,k)})$
            \STATE Compute $w_{t_n}^{(m,k)} = \frac{\pi_{t_n}(x_{t_{n_1}:t_n}^{(m,k)})}{q_{t_n}(x_{t_n}^{(m,k)} |  x_{t_{n-1}}^{(m,*k)}) \pi_{t_{n-1}}(x_{t_{n_1}:t_{n-1}}^{(m,*k)})}$
        \ENDFOR
        \STATE Normalize $\bar{w}_{t_n}^{(m,k)} = \frac{w_{t_n}^{(m,k)}}{\sum_{j=1}^K w_{t_n}^{(m,j)}}$
        \STATE Set $\text{Node}_{t_n}^{(m)} = \{(x_{t_{n_1}:t_n}^{(m,k)}, \bar{w}_{t_n}^{(m,k)})\}_{k=1}^K$
        \STATE Compute $\lambda_{t_n}^{(m)} = \lambda_{t_{n-1}}^{(m)} \cdot \frac{\sum_{k=1}^K w_{t_n}^{(m,k)}}{K}$
    \ENDFOR
\ENDFOR

\STATE \textbf{Step 3: Final Propagation}
\FOR{$m = 1, \ldots, M$}
    \FOR{$k = 1, \ldots, K$}
        \STATE Solve $\mathrm{d}x_t = v_{\theta^*}(x_t, t)\mathrm{d}t$ from $t = t_{n_2}$ to $t = 1$ to obtain $x_1^{(m,k)}$
        \STATE Compute $w_1^{(m,k)} = \lambda_{t_{n_2}}^{(m)} \cdot \bar{w}_{t_{n_2}}^{(m,k)} \cdot \frac{\tilde{p}(y |  x_1^{(m,k)})}{\tilde{p}(y |  \hat{x}_1(x_{t_{n_2}}^{(m,k)}))}$
    \ENDFOR
\ENDFOR

\STATE \textbf{Step 4: Output}
\STATE Normalize $\bar{w}_1^{(m,k)} = \frac{w_1^{(m,k)}}{\sum_{i=1}^M \sum_{j=1}^K w_1^{(i,j)}}$
\STATE \textbf{return} Weighted samples $\{(x_1^{(m,k)}, \bar{w}_1^{(m,k)}): m = 1, \ldots, M,\ k = 1, \ldots, K\}$
\end{algorithmic}
\end{algorithm}

As presented in \cref{alg:nested_tftf}, we adopt a nested structure following \citet{verge2015parallel} for distributed sampling. Specifically, the inner level follows the implementation of \cref{alg:tftf} within each node, while the outer level performs reweighting and resampling across nodes. To reduce inter-node communication, we employ adaptive resampling at the outer level: resampling is triggered only when the effective sample size corresponding to the node weights falls below a specified threshold; otherwise, weights are simply accumulated without resampling. (In practice, the $M$ nodes may represent distributed computational nodes, multi-GPU setups, or $M$ sequential executions on a single device.)

A key property of \cref{alg:nested_tftf} is that, for a fixed per-node sample size, the resulting weighted samples converge to the target distribution as the number of nodes increases, as stated in the following proposition.

\begin{proposition}[Proof in \cref{appendix:proof_prop4}]
Consider \cref{alg:nested_tftf} under the same conditions as \cref{alg:tftf}. Assuming resampling is performed at every outer-level iteration, for any fixed $K \in \mathbb{N}_+$ and any $\phi \in \mathcal{C}_b(\mathbb{R}^d)$, we have
\begin{equation}
\sum_{m=1}^{M} \sum_{k=1}^{K} \bar{W}_1^{(m,k)} \phi\bigl(X_1^{(m,k)}\bigr) \xrightarrow{\mathrm{a.s.}} \mathbb{E}_{p_{\mathrm{data}}(\cdot | y)}[\phi] \quad \text{as } M \to \infty.
\end{equation}
\label{prop:nested}
\end{proposition}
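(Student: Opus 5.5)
The plan is to view Nested TFTF, with outer resampling at every iteration, as an ordinary SMC algorithm on an extended state space whose ``particles'' are entire nodes, in the spirit of \citet{verge2015parallel}. Then we apply the same convergence result used for \cref{prop:asym} to the outer level as $M\to\infty$ with $K$ fixed.

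\textbf{Outer particle system.} For $n_1\le n\le n_2$, the outer particle is $\xi_n^{(m)} := \{(x^{(m,k)}_{t_{n_1}:t_n}, \bar w^{(m,k)}_{t_n}, a^{(m,k)}_{n_1:n})\}_{k=1}^K$, which includes the inner ancestor indices. Its outer proposal kernel $Q_n(\xi_n\mid\xi_{n-1})$ is the law of one inner step of \cref{alg:tftf}: inner multinomial resampling followed by propagation through $q_{t_n}$. The outer incremental weight is $G_n(\xi_n)=K^{-1}\sum_k w^{(m,k)}_{t_n}$, and the initial weight is $\lambda^{(m)}_{t_{n_1}}$. Under the boundedness hypotheses of \cref{prop:asym}, each $G_n$ is bounded, being an average of bounded inner weights. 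Hence the outer algorithm is a standard bootstrap-type particle filter with bounded potentials, and \citet[Proposition~11.4]{chopin2020introduction} applies to it.

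\textbf{Identifying the outer target.} I would use the classical unbiasedness property of the inner SMC normalizing-constant estimate. Define the outer target $\Pi_n(d\xi_n)\propto \Psi_n(d\xi_n)\prod_{j=n_1}^{n}G_j(\xi_j)$, where $\Psi_n$ is the law of a full inner run of \cref{alg:tftf} up to time $t_n$ with $K$ particles. By the standard result that $\prod_j K^{-1}\sum_k w_{t_j}^{(k)}$ is an unbiased estimator of $\int\pi_{t_n}$, together with the ``random-weight'' or particle-marginal argument, a node drawn from $\Pi_n$ and then a particle $k$ chosen with probability $\bar w^{(k)}_{t_n}$ yields $x_{t_{n_1}:t_n}^{(k)}$ distributed exactly as normalized $\pi_{t_n}$. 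Equivalently, for test functions of the form $\Phi(\xi)=\sum_k\bar w^{(k)}_{t_n}\psi(x^{(k)})$ we have $\Pi_n(\Phi)=\pi_{t_n}(\psi)/\pi_{t_n}(1)$. I expect this to be the main obstacle. It requires carefully writing the joint density of the inner particles and ancestors and telescoping the weights, as in the proof of unbiasedness in \citet{del2004feynman} or the PMCMC extended-target construction. This is the step I would write out most carefully, proceeding by induction on $n$.

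\textbf{Final step.} The output estimator is a ratio: the numerator is $M^{-1}\sum_m\lambda^{(m)}_{t_{n_2}}\Phi_1(\xi^{(m)}_{n_2})$, with $\Phi_1(\xi)=\sum_k\bar w^{(k)}\phi_1(x^{(k)}_{t_{n_2}})$, and the denominator is the same expression with $\phi_2$. Here $\phi_1$ and $\phi_2$ are exactly as in the proof of \cref{prop:asym}. These functionals are bounded and continuous in $\xi$, so the outer SMC convergence result gives almost-sure convergence of each to $\Pi_{n_2}$-expectations, up to a common constant. The previous step then converts these to $\int\phi_i\,\pi^*_{t_{n_2}}$ from \cref{eq:pi*}. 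Taking the ratio and repeating the change of variables $x_1=T^*(x_{t_{n_2}})$ of \cref{eq:before_change,eq:after_change} gives $\mathbb{E}_{p_{\text{data}}(\cdot|y)}[\phi]$.
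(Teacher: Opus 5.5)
Your proposal is correct and follows essentially the same route as the paper. The paper also treats each node as an outer SMC particle, encoded by its random seed $u_{t_{n_1}:t_{n_2}}$ rather than your $\xi$ with ancestor indices, and reweights a single inner run by $\hat{Z}_{t_n}=\prod_{i=n_1}^{n}K^{-1}\sum_k w^{(k)}_{t_i}$. It then invokes an outer strong law with $K$ fixed (\citet[Theorem~7.4.3]{del2004feynman}), identifies the limit through the same unbiasedness identity you use (\citet[Theorem~7.4.2]{del2004feynman}), and finishes with the ratio and change of variables of \cref{prop:asym}.

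The one point to tidy is that $\Pi_n$ must be defined on the node path $\xi_{n_1:n}$, not on $\xi_n$ alone. This is because $G_j(\xi_j)$ for $j<n$ depends on particles that did not survive resampling, so it is not a function of $\xi_n$; the paper's $\gamma_{t_n}(u_{t_{n_1}:t_n})$ is set up on the path in this way.
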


\begin{remark}
While \cref{alg:nested_tftf} employs ESS-based adaptive resampling at the outer level to reduce inter-node communication, the theoretical guarantee in \cref{prop:nested} is established under the assumption that outer-level resampling is performed at every iteration. Extending convergence results to ESS-based adaptive resampling schemes remains an open problem in SMC theory; see \citet{del2012adaptive} for progress in this direction.
\end{remark}

\begin{figure}[t]
\centering
\includegraphics[width=0.48\textwidth]{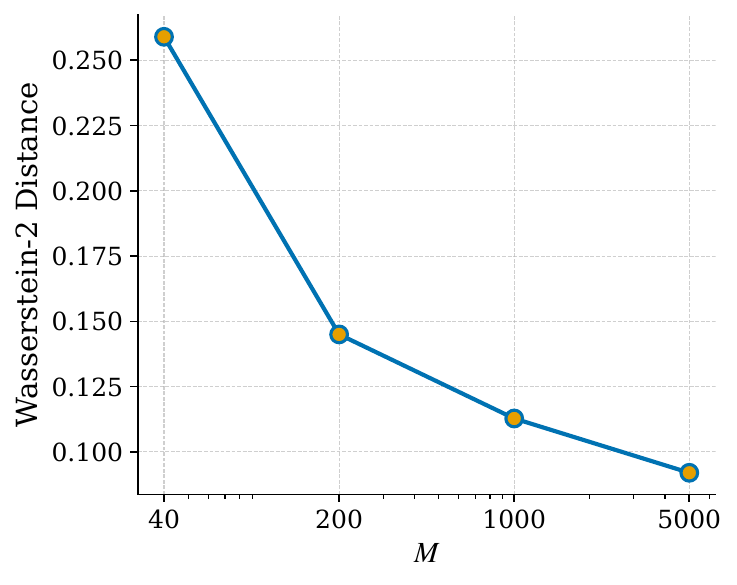}
\caption{Wasserstein-2 distance versus number of nodes $M$ with $K=4$ fixed.}
\label{fig:w_2}
\end{figure}

To demonstrate the asymptotic accuracy of \cref{alg:nested_tftf}, we revisit the toy example from \cref{subsec:2d}, where the target conditional distribution $p_{\text{data}}(\cdot| y)$ is a two-dimensional Gaussian mixture: $\frac{3}{4} \mathcal{N}((-\frac{\sqrt{2}}{2}, -\frac{\sqrt{2}}{2}), \frac{1}{8} I) + \frac{1}{4} \mathcal{N}((\frac{\sqrt{2}}{2}, \frac{\sqrt{2}}{2}), \frac{1}{8} I)$. We fix $K = 4$ in \cref{alg:nested_tftf} and measure the Wasserstein-2 distance between our generated samples and i.i.d.\ ground-truth samples as a function of $M$. As illustrated in \cref{fig:w_2}, the Wasserstein-2 distance decreases progressively as the number of nodes $M$ increases. This demonstrates that the cross-node reweighting and resampling procedure employed by \cref{alg:nested_tftf} effectively mitigates the sampling bias arising from the finite sample sizes at individual nodes.

\subsection{Proof of \cref{prop:nested}}
\label{appendix:proof_prop4}

\begin{proof}
We begin by reorganizing $\sum_{m=1}^{M} \sum_{k=1}^{K} \bar{W}_1^{(m,k)} \phi(X_1^{(m,k)})$ as follows:
\begin{equation}\label{eq:reorganize}
\begin{aligned}
&\sum_{m=1}^{M} \sum_{k=1}^{K} \bar{W}_1^{(m,k)} \phi(X_1^{(m,k)}) \\
&= \frac{\sum_{m=1}^{M} \Lambda_{t_{n_2}}^{(m)} \left[\sum_{k=1}^{K} \bar{W}_{t_{n_2}}^{(m,k)} \frac{\tilde{p}(y | X_1^{(m,k)})}{\tilde{p}(y | \hat{x}_1(X_{t_{n_2}}^{(m,k)}))} \phi(X_1^{(m,k)})\right]}{\sum_{m=1}^{M} \Lambda_{t_{n_2}}^{(m)} \left[\sum_{k=1}^{K} \bar{W}_{t_{n_2}}^{(m,k)} \frac{\tilde{p}(y | X_1^{(m,k)})}{\tilde{p}(y | \hat{x}_1(X_{t_{n_2}}^{(m,k)}))}\right]} \\
&= \frac{\sum_{m=1}^{M} \bar{\Lambda}_{t_{n_2}}^{(m)} \left[\sum_{k=1}^{K} \bar{W}_{t_{n_2}}^{(m,k)} \frac{\tilde{p}(y | X_1^{(m,k)})}{\tilde{p}(y | \hat{x}_1(X_{t_{n_2}}^{(m,k)}))} \phi(X_1^{(m,k)})\right]}{\sum_{m=1}^{M} \bar{\Lambda}_{t_{n_2}}^{(m)} \left[\sum_{k=1}^{K} \bar{W}_{t_{n_2}}^{(m,k)} \frac{\tilde{p}(y | X_1^{(m,k)})}{\tilde{p}(y | \hat{x}_1(X_{t_{n_2}}^{(m,k)}))}\right]},
\end{aligned}
\end{equation}
where $\bar{\Lambda}_{t_{n_2}}^{(m)} := \Lambda_{t_{n_2}}^{(m)} / \sum_{m'=1}^{M} \Lambda_{t_{n_2}}^{(m')}$ denotes the normalized node weight.

For each node $m$, the set of random variables generated during the intermediate stages consists of 
\[
\bigl\{X_{t_n}^{(m,k)} : n = n_1, \ldots, n_2,\, k = 1, \ldots, K\bigr\} \quad \text{and} \quad \bigl\{X_{t_n}^{(m,*k)} : n = n_1, \ldots, n_2-1,\, k = 1, \ldots, K\bigr\},
\]
which arise from sampling from the proposal distributions and resampling. We view this set of random variables as a deterministic function of the random seed associated with node $m$, denoted by $u_{t_{n_1}:t_{n_2}}^{(m)}$. Specifically, $u_{t_{n_1}:t_{n_2}}^{(m)}$ serves as the base random variable that determines 
\[
\bigl\{X_{t_n}^{(m,k)}\bigr\}_{n,k} \quad \text{and} \quad \bigl\{X_{t_n}^{(m,*k)}\bigr\}_{n,k}
\]
via the inverse CDF method applied to the proposal distributions together with the resampling rule and subsequently determines $\{W_{t_n}^{(m,k)} : n = n_1, \ldots, n_2,\, k = 1, \ldots, K\}$ and $\{X_1^{(m,k)} : k = 1, \ldots, K\}$. We denote the distribution of the random seed $U_{t_{n_1}:t_{n_2}}$ by $\prod_{n = n_1}^{n_2} \nu_{t_n}(u_{t_n} | u_{t_{n_1}:t_{n-1}})$. (In fact, it is not restrictive to assume that $\{U_{t_n}\}_{n=n_1}^{n_2}$ are independent; we adopt a more general notation here for flexibility.)

From this perspective, the outer-level reweighting and resampling over the $M$ nodes in \cref{alg:nested_tftf} can be interpreted as sequential importance sampling with proposal distribution $\{\nu_{t_n}(u_{t_n} | u_{t_{n_1}:t_{n-1}})\}_{n=n_1}^{n_2}$. The corresponding target distribution $\gamma_{t_n}(u_{t_{n_1}:t_n})$ is given by
\begin{equation}\label{eq:target}
\gamma_{t_n}(u_{t_{n_1}:t_n}) = \frac{\hat{Z}_{t_n}(u_{t_{n_1}:t_n}) \prod_{i=n_1}^{n} \nu_{t_{i}}(u_{t_{i}} | u_{t_{n_1}:t_{i-1}})}{Z_{t_n}},
\end{equation}
where
\begin{equation}\label{eq:Zhat}
\hat{Z}_{t_n}(u_{t_{n_1}:t_n}) = \prod_{i=n_1}^{n} \frac{1}{K} \sum_{k=1}^{K} w_{t_{i}}^{(k)},
\end{equation}
and
\begin{equation}\label{eq:Z}
Z_{t_n} = \int \hat{Z}_{t_n}(u_{t_{n_1}:t_n}) \prod_{i=n_1}^{n} \nu_{t_{i}}(u_{t_{i}} | u_{t_{n_1}:t_{i-1}}) \, \mathrm{d} u_{t_{n_1}:t_n}.
\end{equation}

The incremental weight of a node at time $t = t_n$, corresponding to this proposal and target specification, is proportional to
\begin{equation}\label{eq:incremental_weight}
\begin{aligned}
&\frac{\hat{Z}_{t_n}(u_{t_{n_1}:t_n}) \prod_{i=n_1}^{n} \nu_{t_{i}}(u_{t_{i}} | u_{t_{n_1}:t_{i-1}})}{\hat{Z}_{t_{n-1}}(u_{t_{n_1}:t_{n-1}}) \prod_{i=n_1}^{n-1} \nu_{t_{i}}(u_{t_{i}} | u_{t_{n_1}:t_{i-1}}) \cdot \nu_{t_n}(u_{t_n} | u_{t_{n_1}:t_{n-1}})} \\
&= \frac{1}{K} \sum_{k=1}^{K} W_{t_n}^{(k)},
\end{aligned}
\end{equation}
which is exactly the node weight update rule specified in \cref{alg:nested_tftf}.

Applying \citet[Theorem~7.4.3]{del2004feynman}, we have that
\begin{equation}\label{eq:convergence1}
\sum_{m=1}^{M} \bar{\Lambda}_{t_{n_2}}^{(m)} \left[\sum_{k=1}^{K} \bar{W}_{t_{n_2}}^{(m,k)} \frac{\tilde{p}(y | X_1^{(m,k)})}{\tilde{p}(y | \hat{x}_1(X_{t_{n_2}}^{(m,k)}))} \phi(X_1^{(m,k)})\right] \xrightarrow{\text{a.s.}} \mathbb{E}_{\gamma_{t_{n_2}}} \left[\sum_{k=1}^{K} \bar{W}_{t_{n_2}}^{(k)} \frac{\tilde{p}(y | X_1^{(k)})}{\tilde{p}(y | \hat{x}_1(X_{t_{n_2}}^{(k)}))} \phi(X_1^{(k)})\right]
\end{equation}
as $M \to \infty$. Similarly,
\begin{equation}\label{eq:convergence2}
\sum_{m=1}^{M} \bar{\Lambda}_{t_{n_2}}^{(m)} \left[\sum_{k=1}^{K} \bar{W}_{t_{n_2}}^{(m,k)} \frac{\tilde{p}(y | X_1^{(m,k)})}{\tilde{p}(y | \hat{x}_1(X_{t_{n_2}}^{(m,k)}))}\right] \xrightarrow{\text{a.s.}} \mathbb{E}_{\gamma_{t_{n_2}}} \left[\sum_{k=1}^{K} \bar{W}_{t_{n_2}}^{(k)} \frac{\tilde{p}(y | X_1^{(k)})}{\tilde{p}(y | \hat{x}_1(X_{t_{n_2}}^{(k)}))}\right],
\end{equation}
as $M \to \infty$.

Following \cref{appendix:proof_prop3}, we write $X_1 = T^*(X_{t_{n_2}})$. Applying \citet[Theorem~7.4.2]{del2004feynman}, we obtain
\begin{equation}\label{eq:expectation1}
\begin{aligned}
&\mathbb{E}_{\gamma_{t_{n_2}}} \left[\sum_{k=1}^{K} \bar{W}_{t_{n_2}}^{(k)} \frac{\tilde{p}(y | X_1^{(k)})}{\tilde{p}(y | \hat{x}_1(X_{t_{n_2}}^{(k)}))} \phi(X_1^{(k)})\right] 
\\&= \frac{\int \hat{Z}_{t_{n_2}}(u_{t_{n_1}:t_{n_2}}) \left[ \sum_{k=1}^{K} \bar{w}_{t_{n_2}}^{(k)} \frac{\tilde{p}(y | T^*(x_{t_{n_2}}^{(k)}))}{\tilde{p}(y | \hat{x}_1(x_{t_{n_2}}^{(k)}))} \phi(T^*(x_{t_{n_2}}^{(k)})) \right] \prod_{n=n_1}^{n_2} \nu_{t_n}(u_{t_{n}} | u_{t_{n_1}:t_{n-1}}) du_{t_{n_1}:t_{n_2}}}{Z_{t_{n_2}}}
\\&= \frac{\int \frac{\tilde{p}(y | T^*(x_{t_{n_2}}))}{\tilde{p}(y | \hat{x}_1(x_{t_{n_2}}))} \phi(T^*(x_{t_{n_2}}))\pi_{t_{n_2}}(x_{t_{n_1}:t_{n_2}}) dx_{t_{n_1}:t_{n_2}}}{Z_{t_{n_2}}},
\end{aligned}
\end{equation}
and
\begin{equation}\label{eq:expectation2}
\begin{aligned}
&\mathbb{E}_{\gamma_{t_{n_2}}} \left[\sum_{k=1}^{K} \bar{W}_{t_{n_2}}^{(k)} \frac{\tilde{p}(y | X_1^{(k)})}{\tilde{p}(y | \hat{x}_1(X_{t_{n_2}}^{(k)}))} \right] 
\\&= \frac{\int \hat{Z}_{t_{n_2}}(u_{t_{n_1}:t_{n_2}}) \left[ \sum_{k=1}^{K} \bar{w}_{t_{n_2}}^{(k)} \frac{\tilde{p}(y | T^*(x_{t_{n_2}}^{(k)}))}{\tilde{p}(y | \hat{x}_1(x_{t_{n_2}}^{(k)}))}  \right] \prod_{n=n_1}^{n_2} \nu_{t_n}(u_{t_{n}} | u_{t_{n_1}:t_{n-1}}) du_{t_{n_1}:t_{n_2}}}{Z_{t_{n_2}}}
\\&= \frac{\int \frac{\tilde{p}(y | T^*(x_{t_{n_2}}))}{\tilde{p}(y | \hat{x}_1(x_{t_{n_2}}))} \pi_{t_{n_2}}(x_{t_{n_1}:t_{n_2}}) dx_{t_{n_1}:t_{n_2}}}{Z_{t_{n_2}}}.
\end{aligned}
\end{equation}

Following exactly the same argument as in \cref{eq:before_change,eq:after_change}, we have
\begin{equation}\label{eq:final_expectation}
\frac{\int \frac{\tilde{p}(y | T^*(x_{t_{n_2}}))}{\tilde{p}(y | \hat{x}_1(x_{t_{n_2}}))} \phi(T^*(x_{t_{n_2}}))\pi_{t_{n_2}}(x_{t_{n_1}:t_{n_2}}) dx_{t_{n_1}:t_{n_2}}}{\int \frac{\tilde{p}(y | T^*(x_{t_{n_2}}))}{\tilde{p}(y | \hat{x}_1(x_{t_{n_2}}))} \pi_{t_{n_2}}(x_{t_{n_1}:t_{n_2}}) dx_{t_{n_1}:t_{n_2}}} = \frac{\int \frac{\tilde{p}(y | T^*(x_{t_{n_2}}))}{\tilde{p}(y | \hat{x}_1(x_{t_{n_2}}))} \phi(T^*(x_{t_{n_2}}))\pi^*_{t_{n_2}}(x_{t_{n_2}}) dx_{t_{n_2}}}{\int \frac{\tilde{p}(y | T^*(x_{t_{n_2}}))}{\tilde{p}(y | \hat{x}_1(x_{t_{n_2}}))}\pi^*_{t_{n_2}}(x_{t_{n_2}}) dx_{t_{n_2}}} = \mathbb{E}_{p_{\mathrm{data}}(\cdot | y)}[\phi].
\end{equation}
where $\pi^*_{t_{n_2}}$ denotes the normalized marginal density of $x_{t_{n_2}}$ under the intermediate target distribution $\pi_{t_{n_2}}(x_{t_{n_1}:t_{n_2}})$.

In conclusion, we have demonstrated that
\begin{equation}\label{eq:conclusion}
\sum_{m=1}^{M} \sum_{k=1}^{K} \bar{W}_1^{(m,k)} \phi(X_1^{(m,k)}) \xrightarrow{\text{a.s.}} \mathbb{E}_{p_{\mathrm{data}}(\cdot | y)}[\phi]
\end{equation}
as $M \to \infty$.
\end{proof}

\begin{figure}[ht]
\centering
\begin{tabular}{cccc}
$\alpha(t)=1/t$ & $\alpha(t)=2/t$ & $\alpha(t)=4/t$ & $\alpha(t)=8/t$ \\
\includegraphics[width=0.20\linewidth]{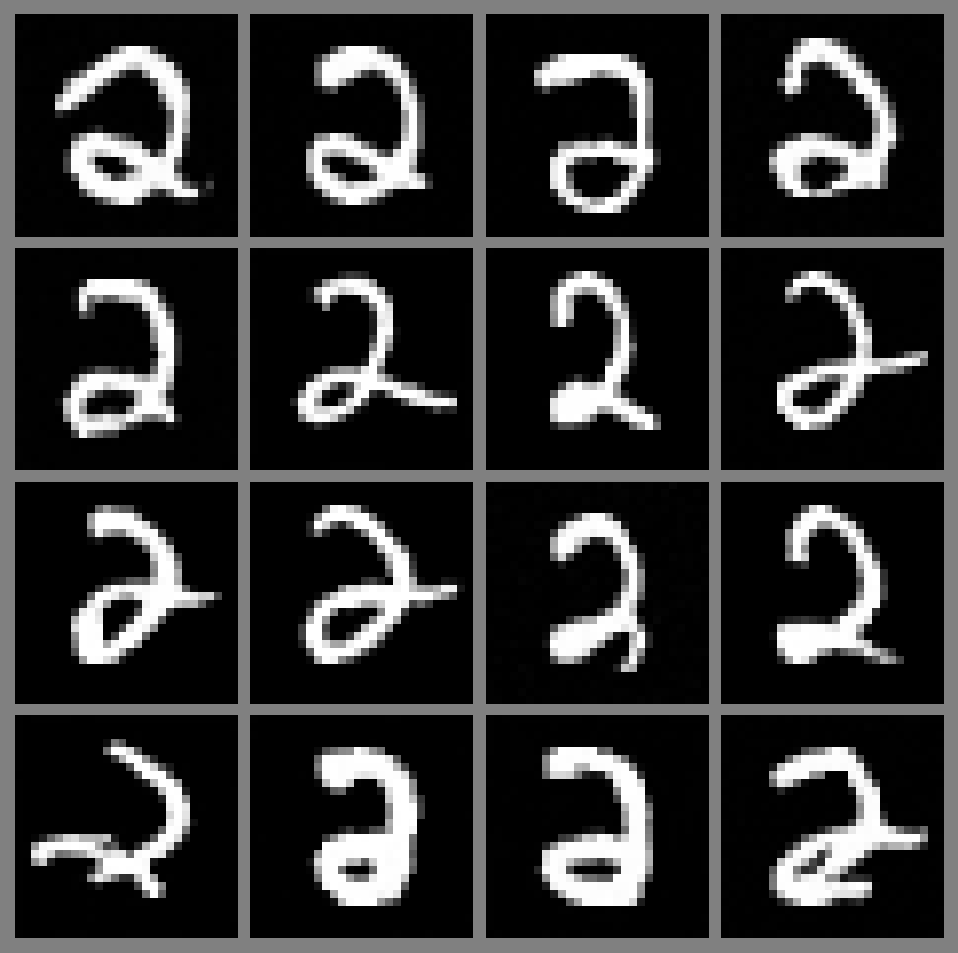} &
\includegraphics[width=0.20\linewidth]{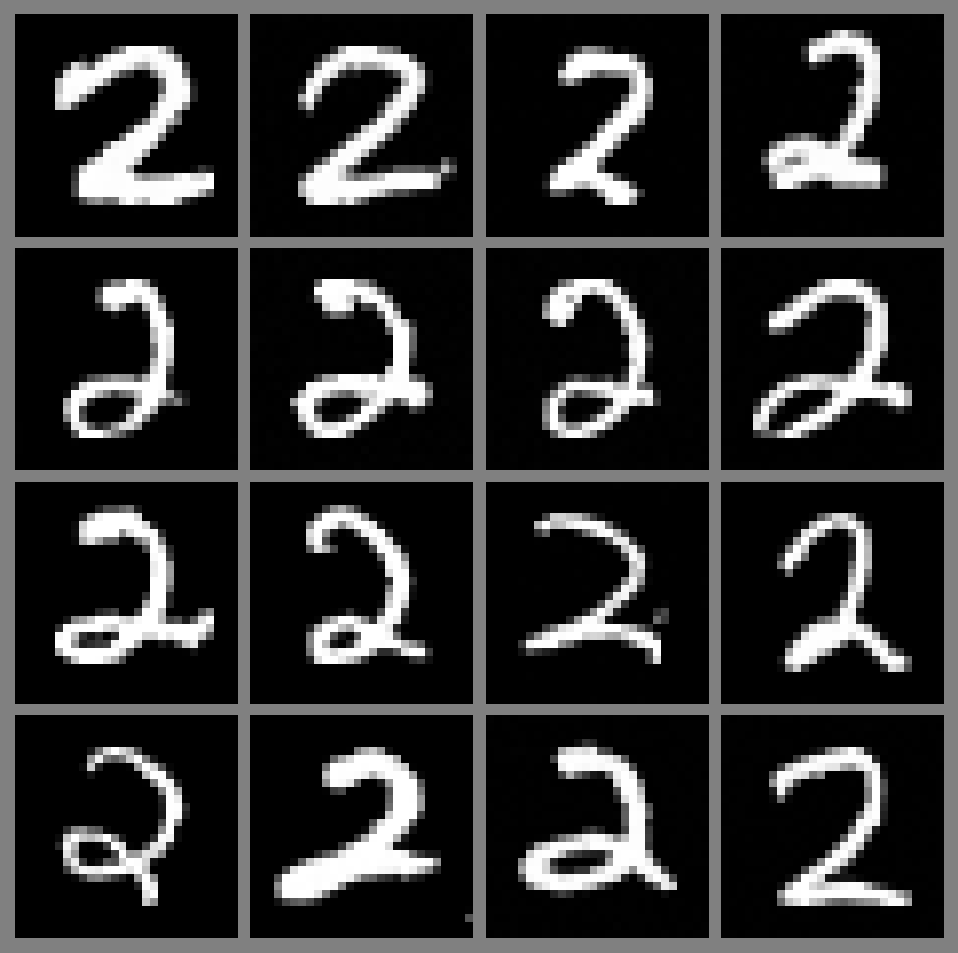} &
\includegraphics[width=0.20\linewidth]{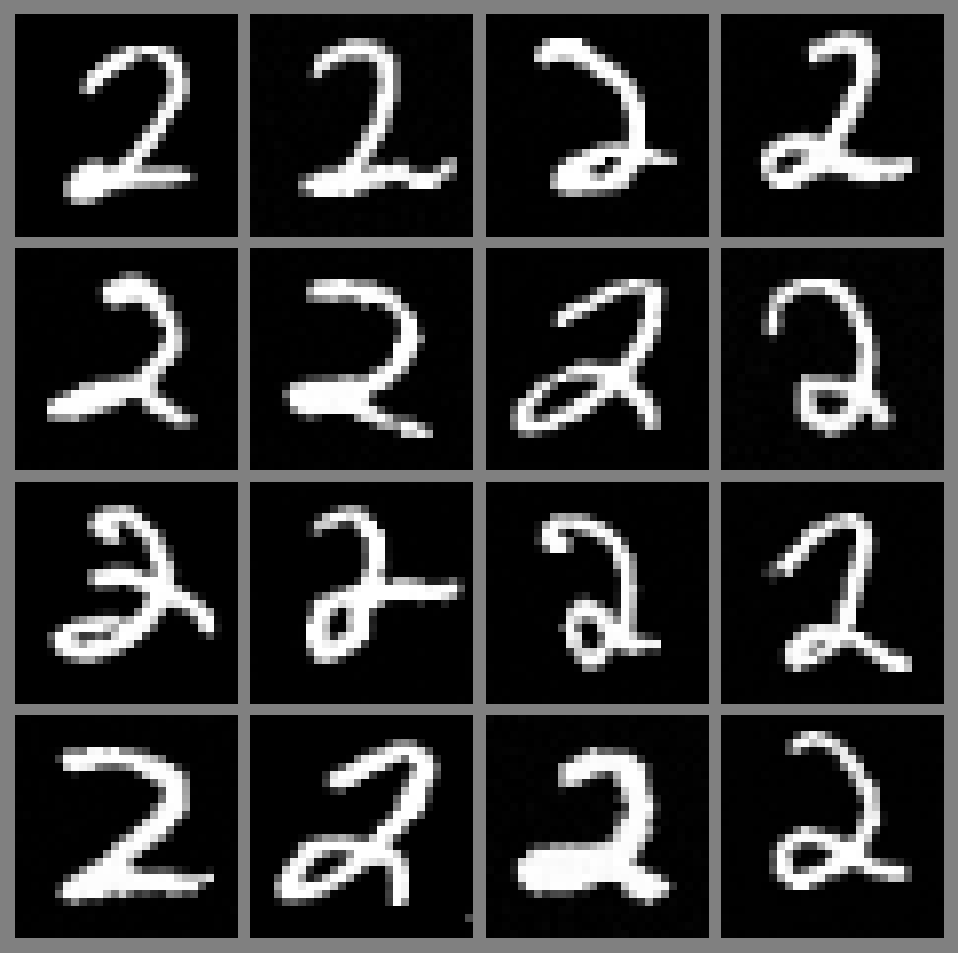} &
\includegraphics[width=0.20\linewidth]{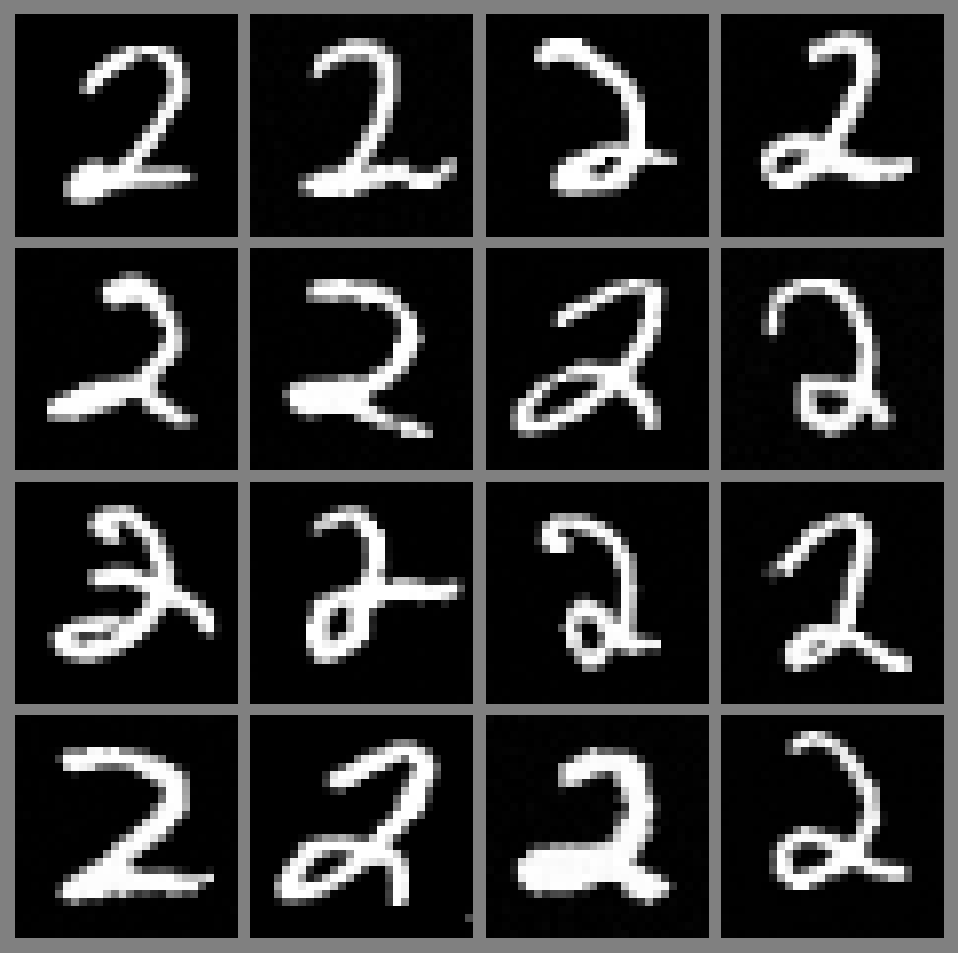} \\
\includegraphics[width=0.20\linewidth]{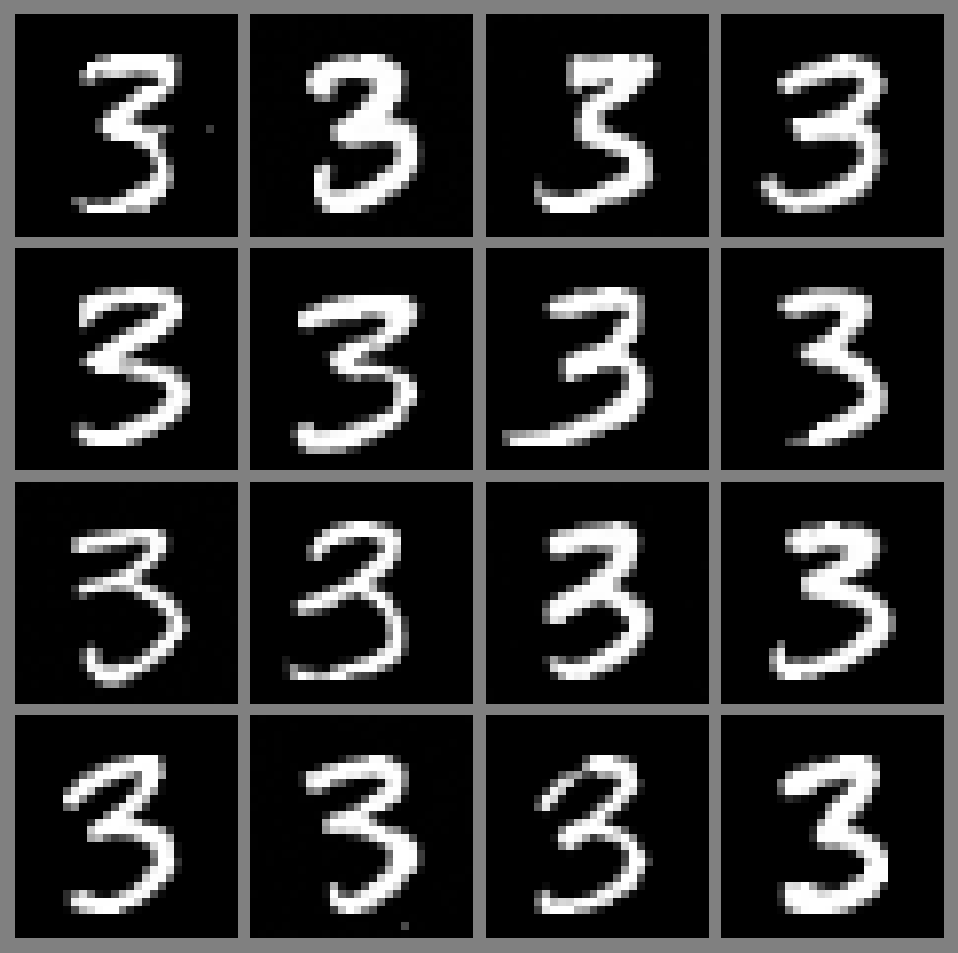} &
\includegraphics[width=0.20\linewidth]{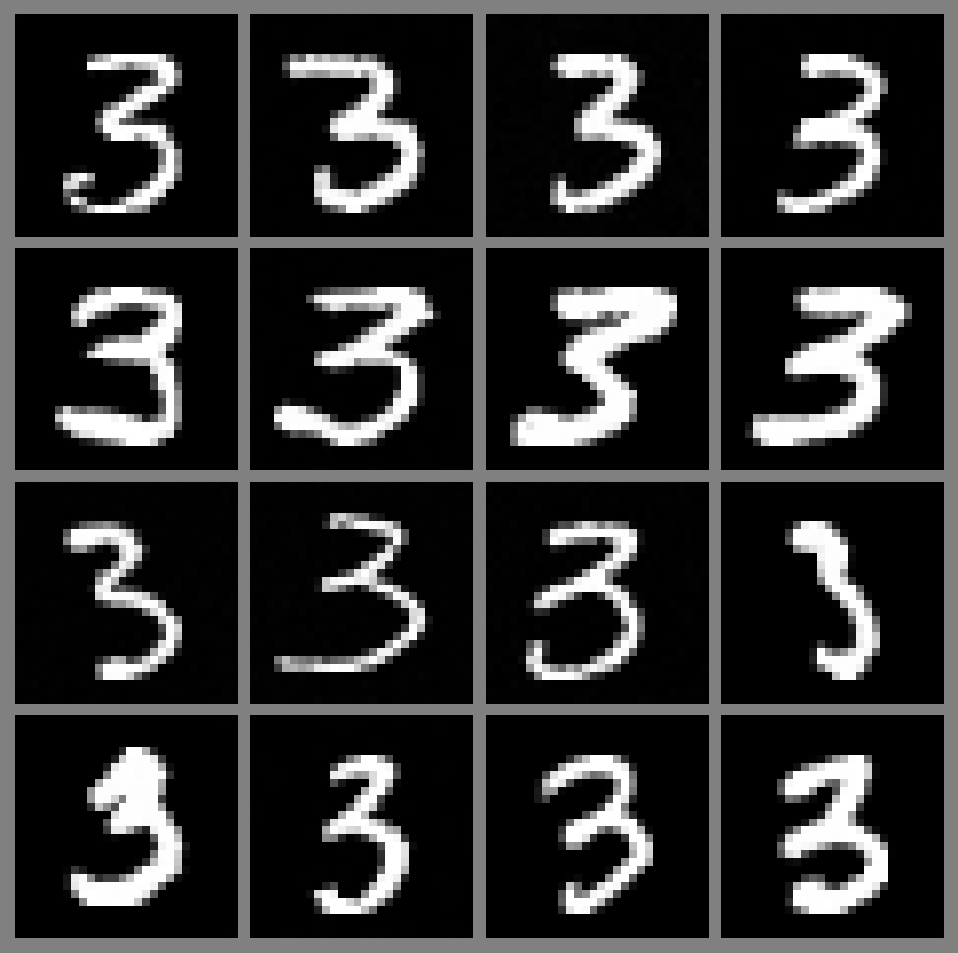} &
\includegraphics[width=0.20\linewidth]{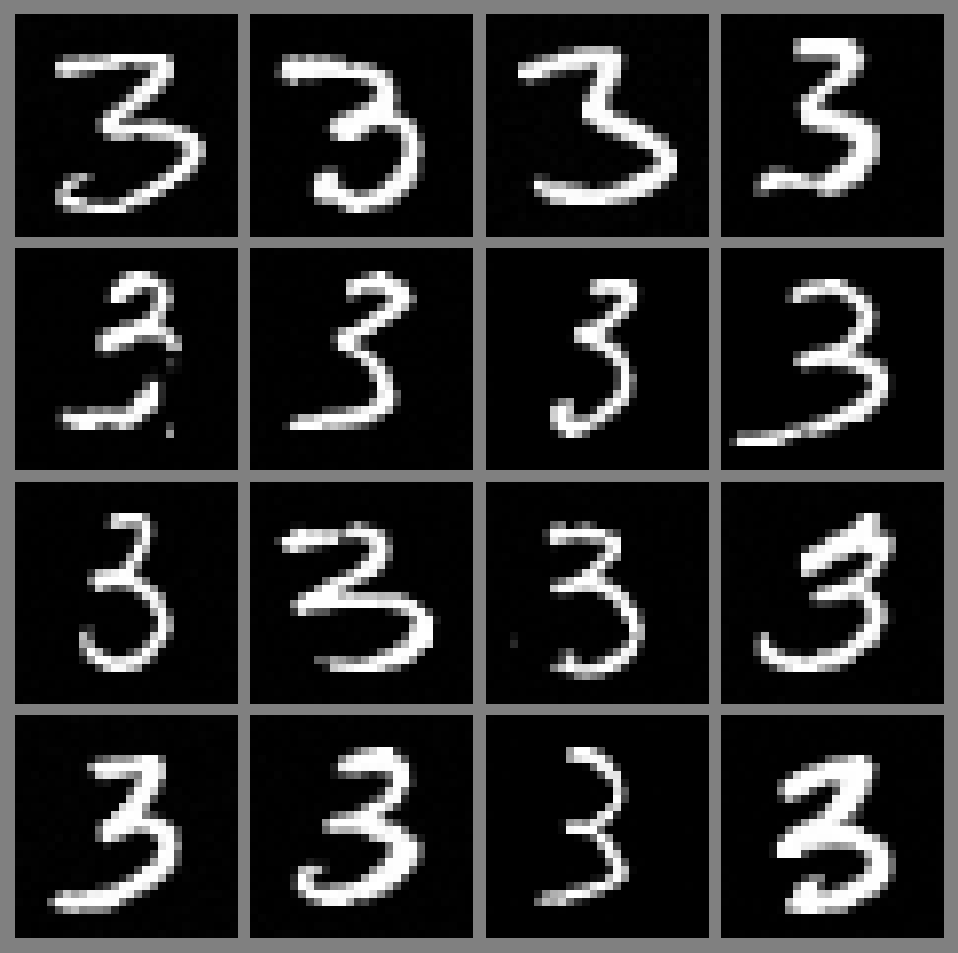} &
\includegraphics[width=0.20\linewidth]{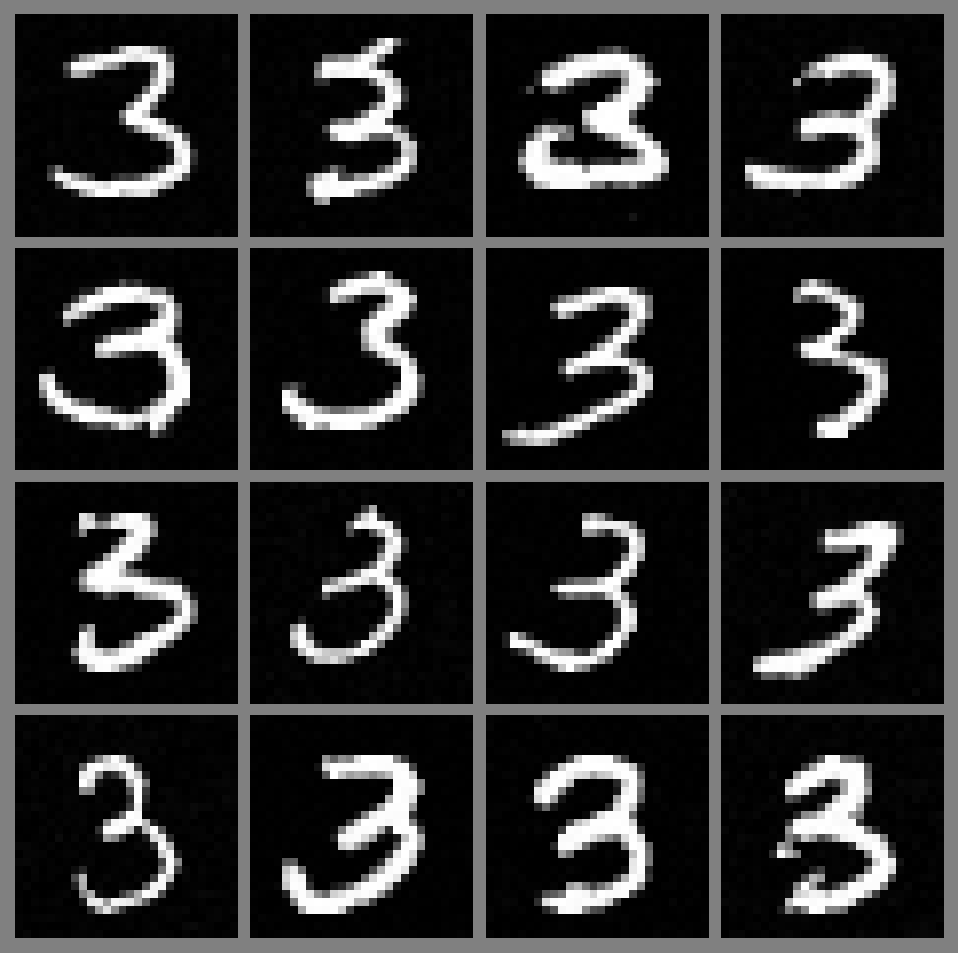} \\
\includegraphics[width=0.20\linewidth]{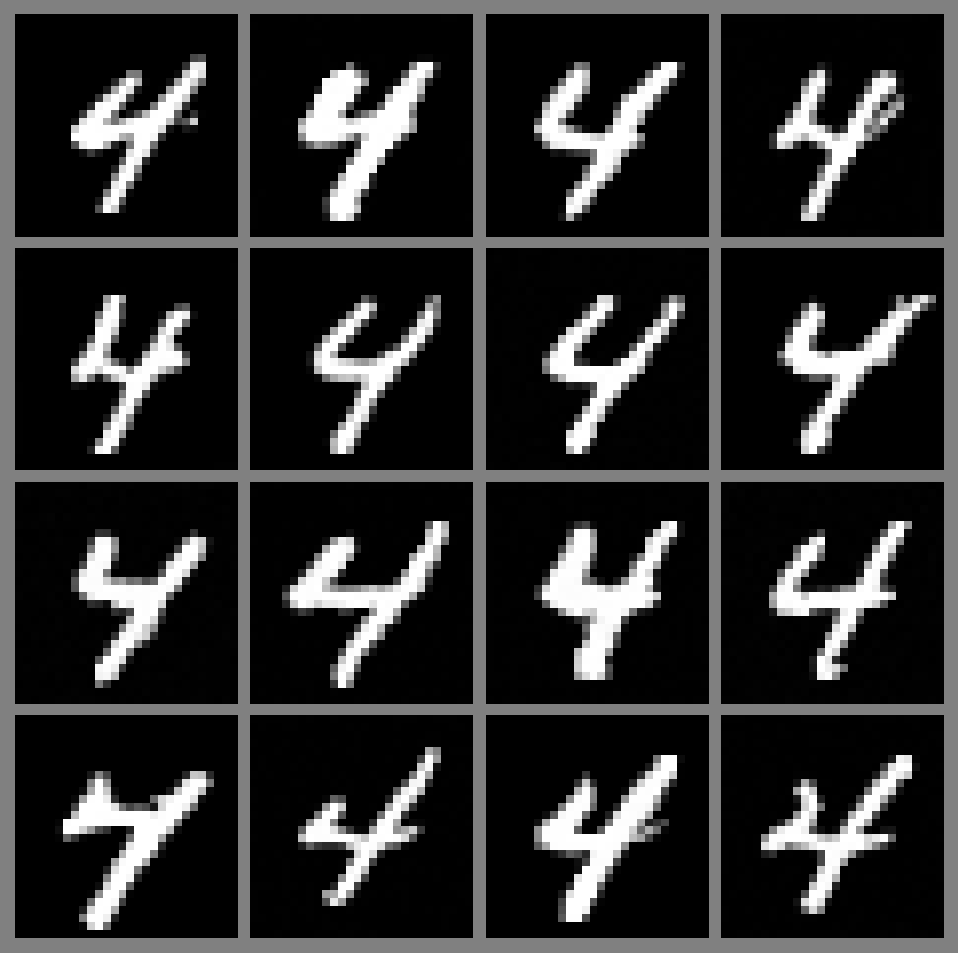} &
\includegraphics[width=0.20\linewidth]{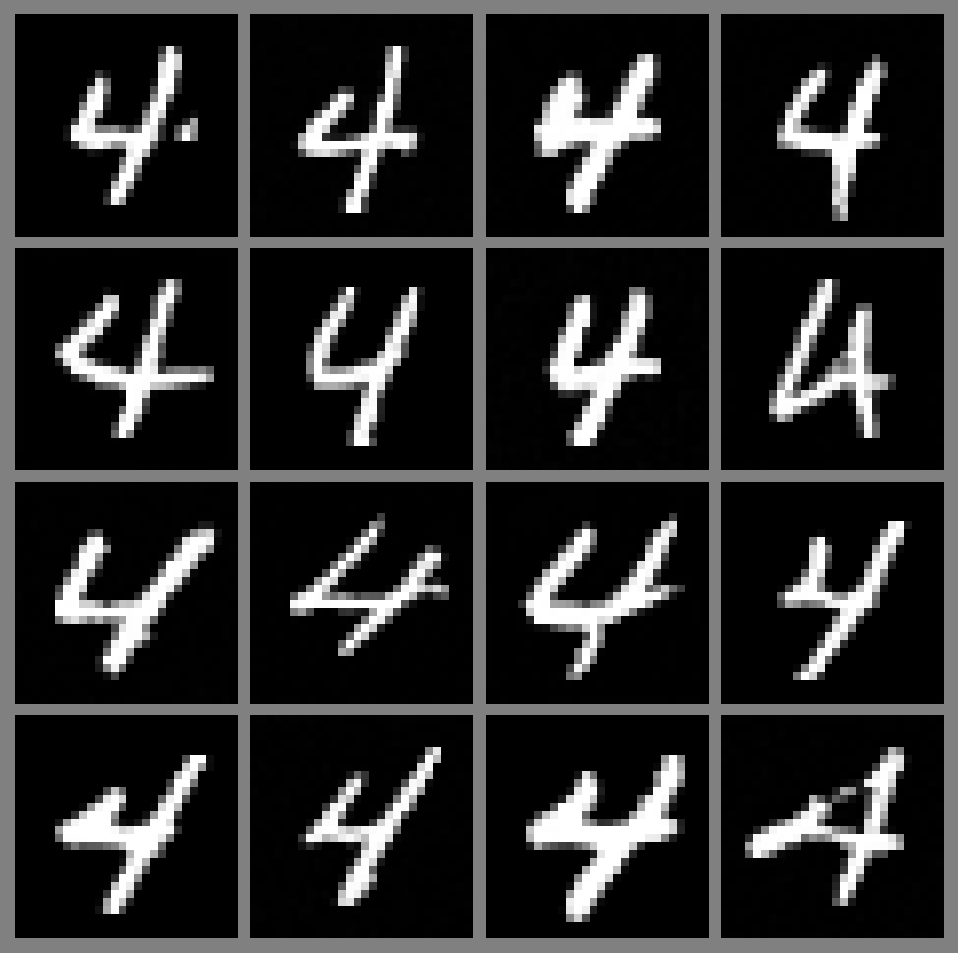} &
\includegraphics[width=0.20\linewidth]{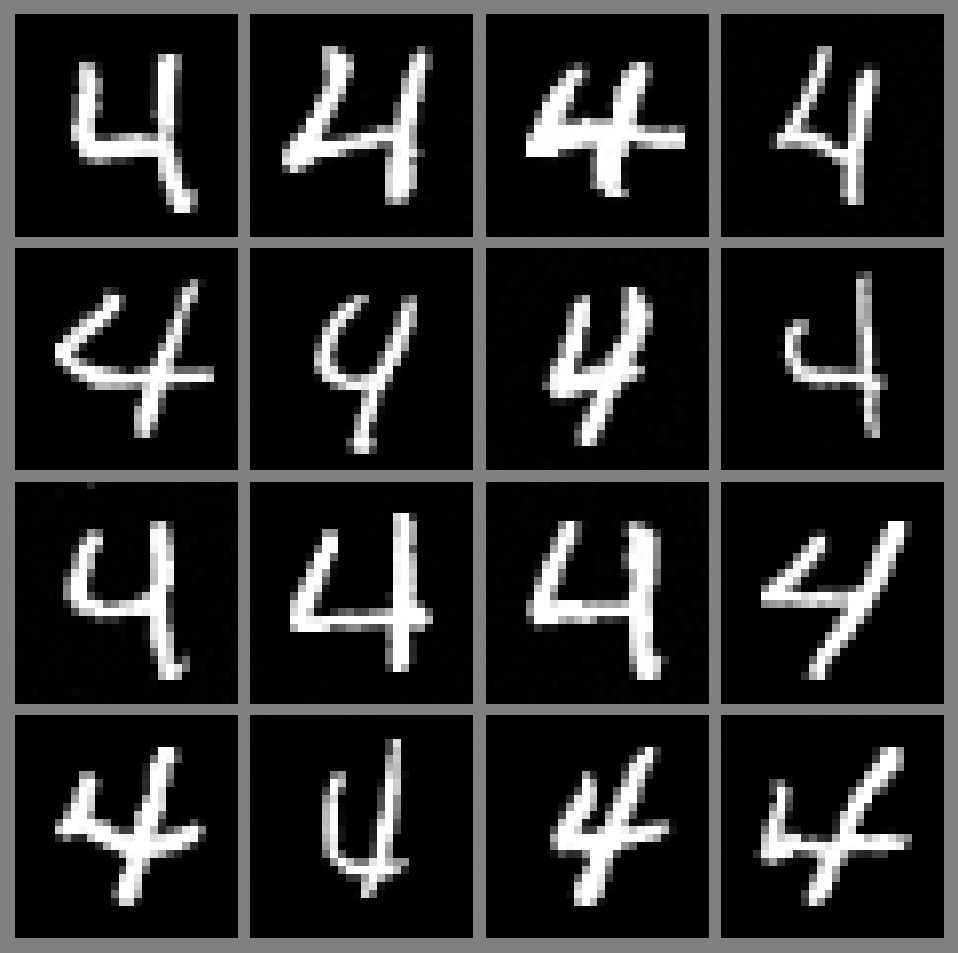} &
\includegraphics[width=0.20\linewidth]{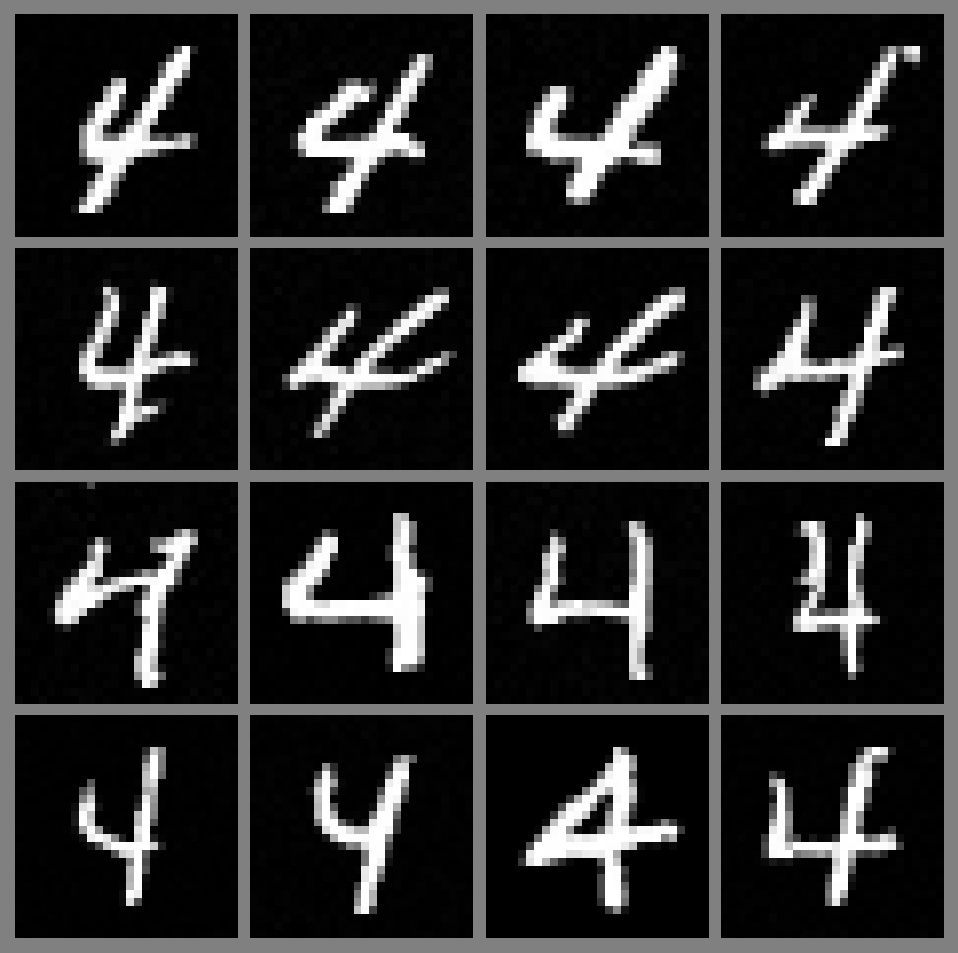} \\
\includegraphics[width=0.20\linewidth]{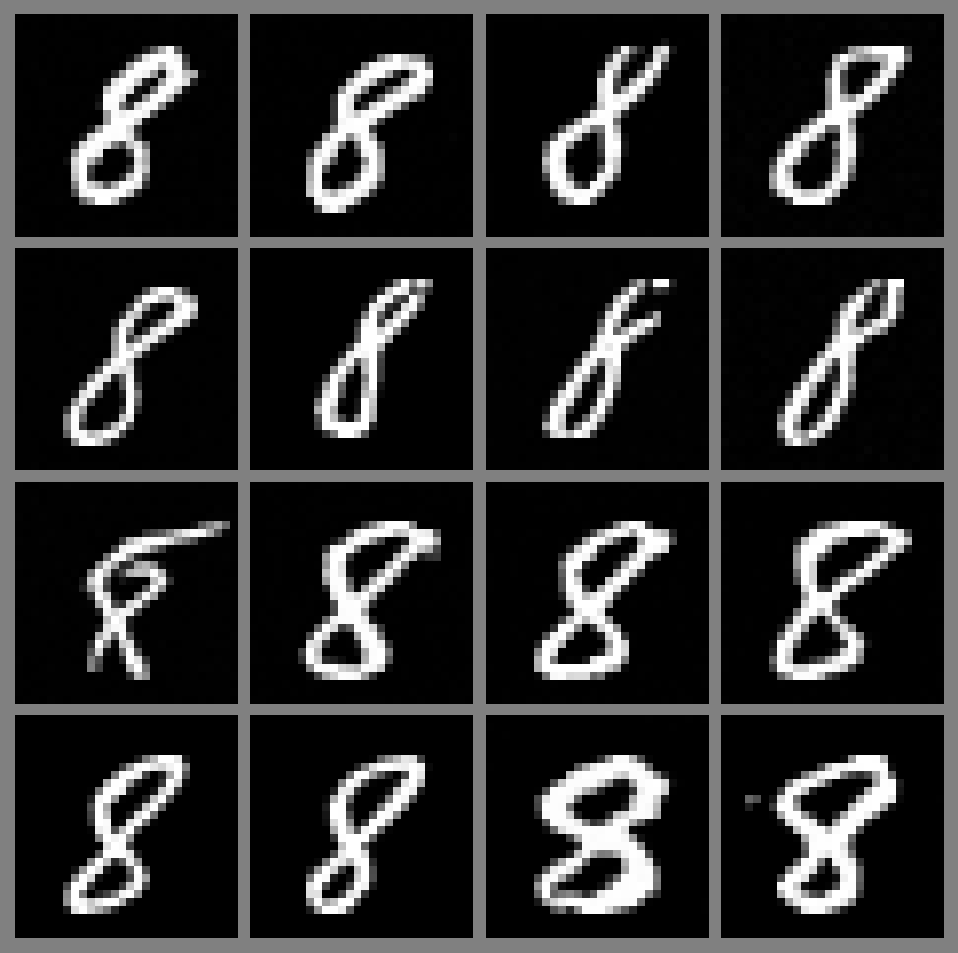} &
\includegraphics[width=0.20\linewidth]{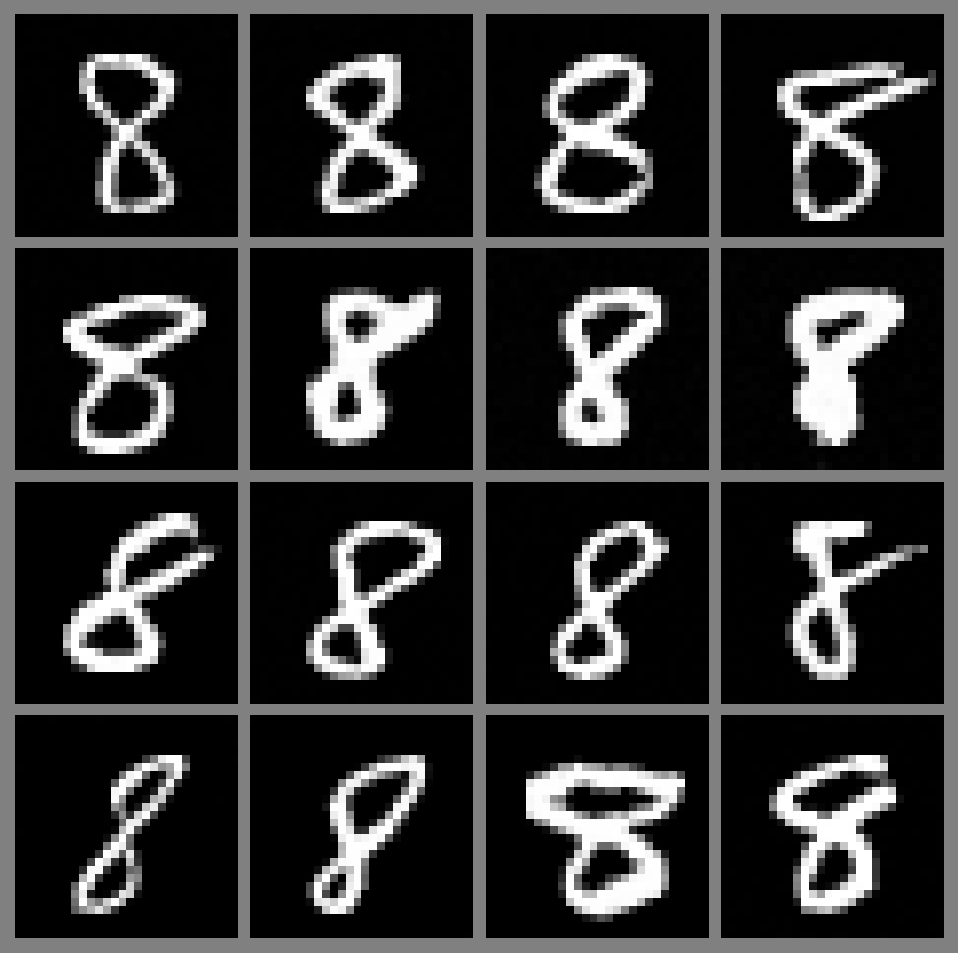} &
\includegraphics[width=0.20\linewidth]{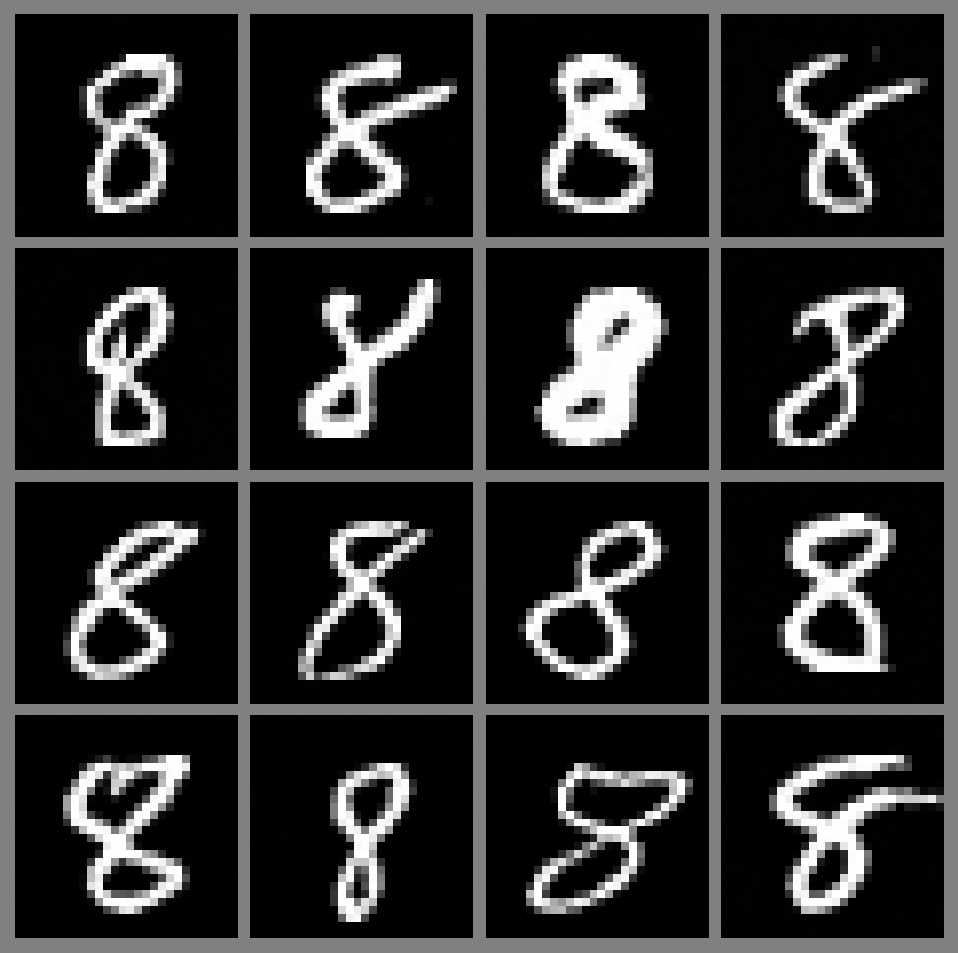} &
\includegraphics[width=0.20\linewidth]{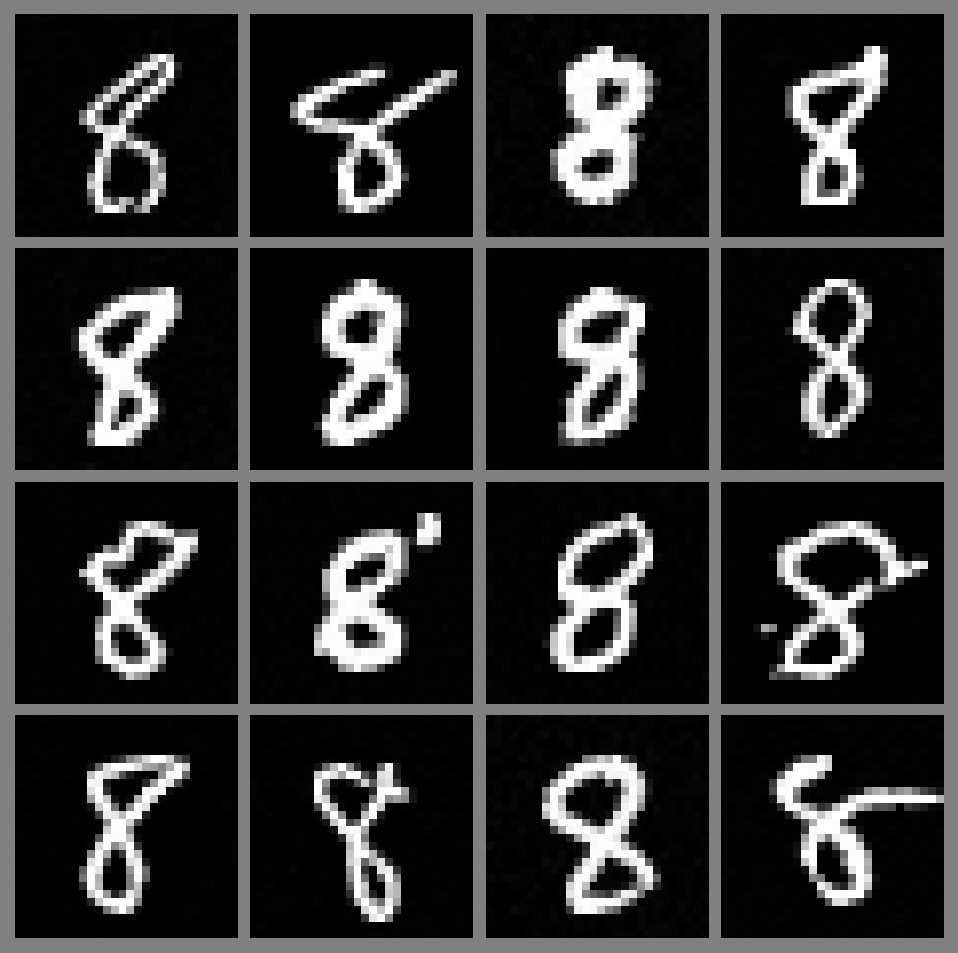} \\
\includegraphics[width=0.20\linewidth]{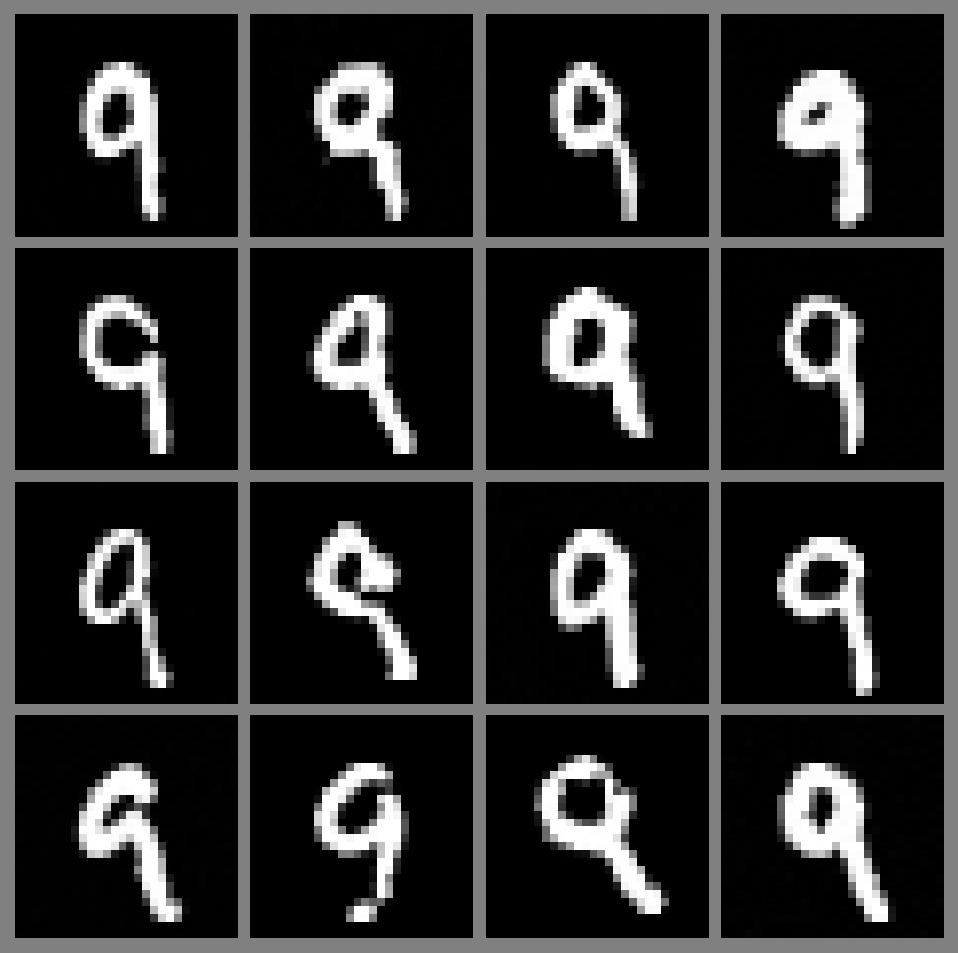} &
\includegraphics[width=0.20\linewidth]{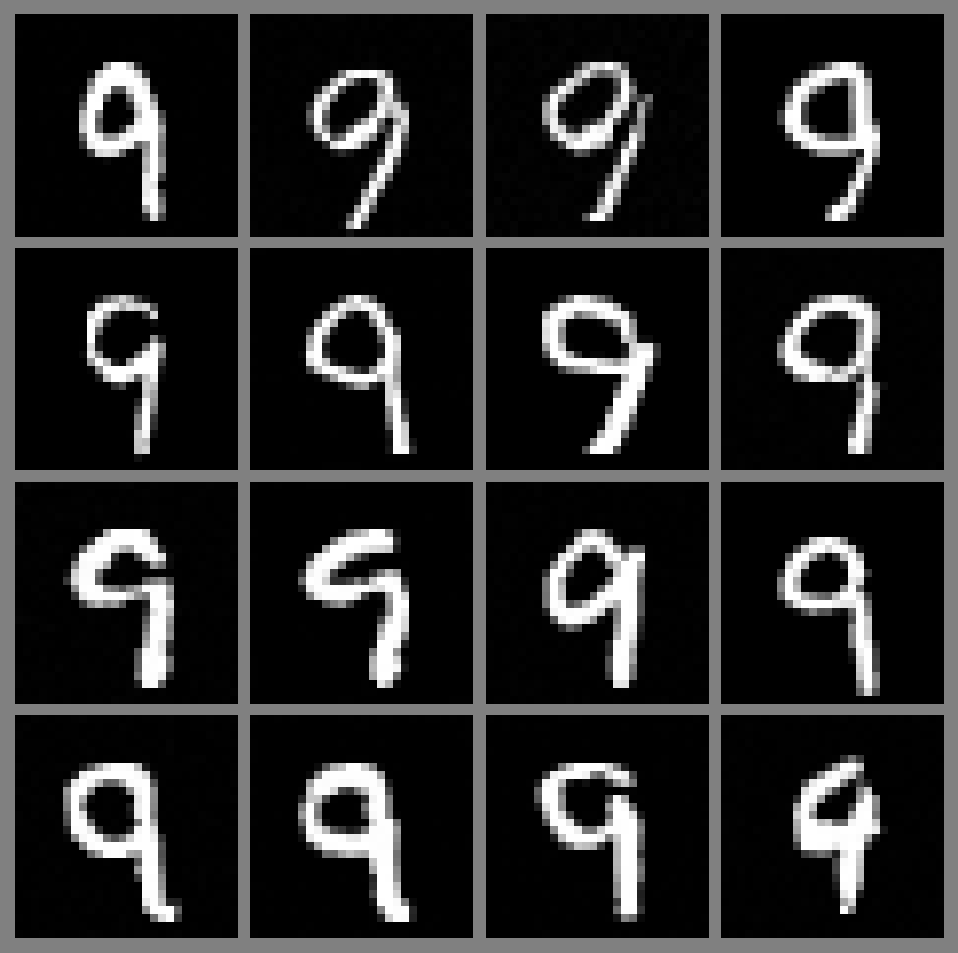} &
\includegraphics[width=0.20\linewidth]{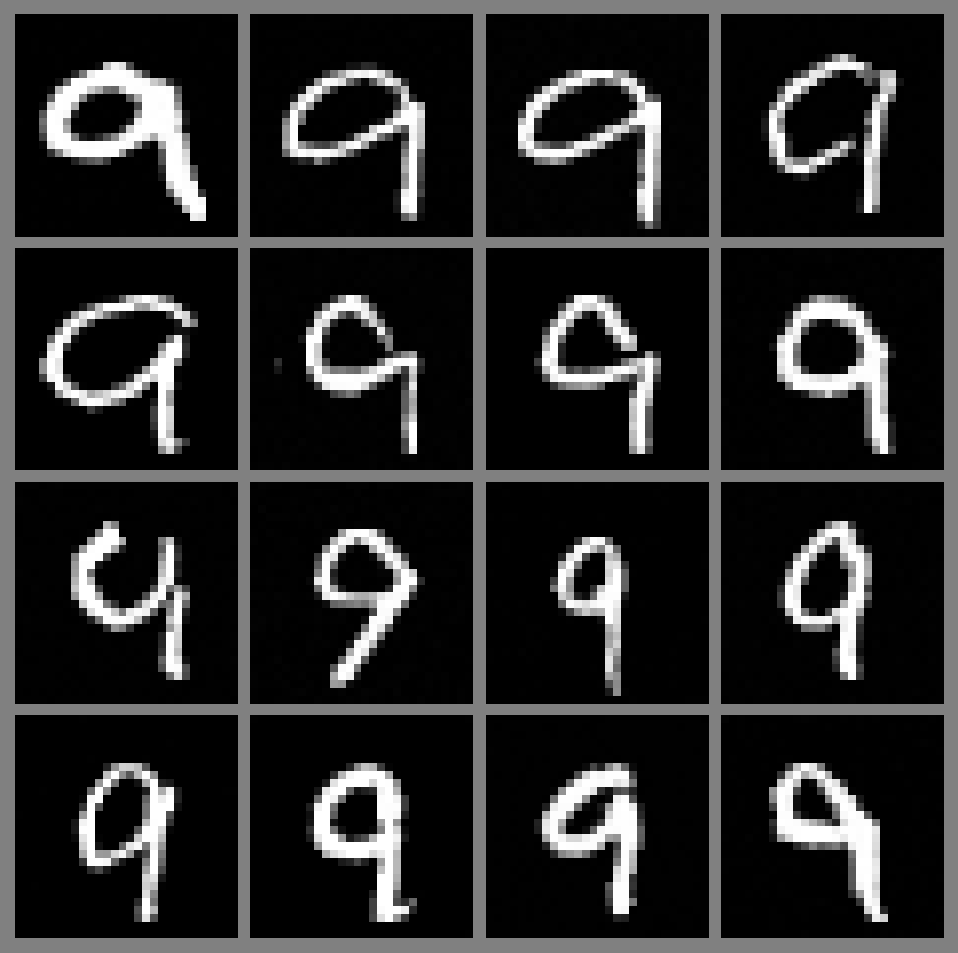} &
\includegraphics[width=0.20\linewidth]{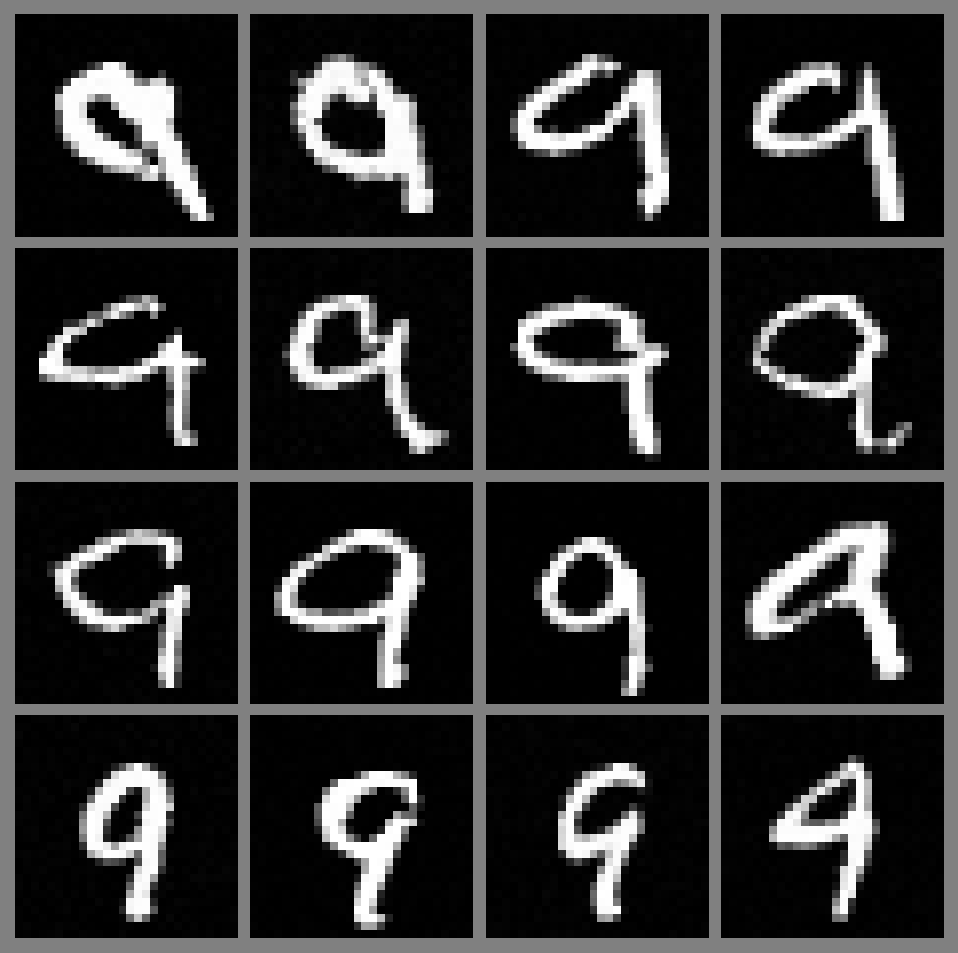} \\
\end{tabular}
\caption{Generated samples on MNIST using TFTF (\cref{alg:tftf}) with $K=16$ under different $\alpha(t)$ configurations. Each column corresponds to a different $\alpha(t) \in \{1/t, 2/t, 4/t, 8/t\}$, while each row shares the same random seed.}
\label{fig:sample_mnist}
\end{figure}

\begin{figure}[t]
    \centering
    \begin{subfigure}[b]{0.24\textwidth}
        \centering
        \includegraphics[width=\textwidth]{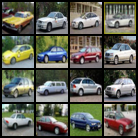}
        \caption{Automobile (ESS: 16.00/16)}
    \end{subfigure}
    \hfill
    \begin{subfigure}[b]{0.24\textwidth}
        \centering
        \includegraphics[width=\textwidth]{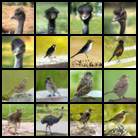}
        \caption{Bird (ESS: 15.01/16)}
    \end{subfigure}
    \hfill
    \begin{subfigure}[b]{0.24\textwidth}
        \centering
        \includegraphics[width=\textwidth]{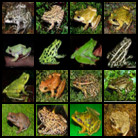}
        \caption{Frog (ESS: 16.00/16)}
    \end{subfigure}
    \hfill
    \begin{subfigure}[b]{0.24\textwidth}
        \centering
        \includegraphics[width=\textwidth]{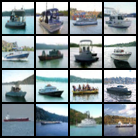}
        \caption{Ship (ESS: 16.00/16)}
    \end{subfigure}
    \caption{Generated samples on CIFAR-10 using TFTF (\cref{alg:tftf}) with $K=16$.}
    \label{fig:sample_cifar}
\end{figure}

\begin{figure*}[t]
    \centering
    \begin{subfigure}[t]{0.32\textwidth}
        \centering
        \includegraphics[width=\textwidth]{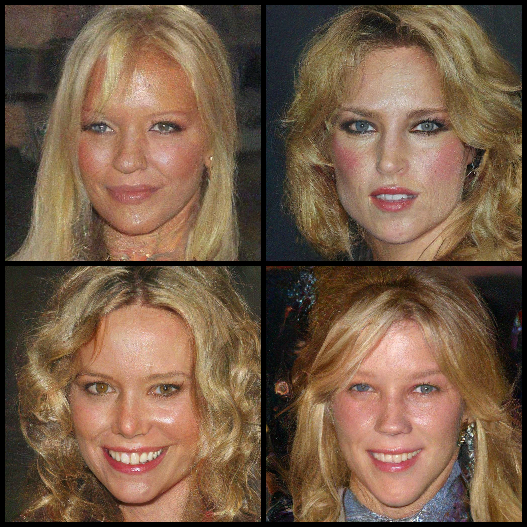}
        \caption*{``a photo of a woman with blonde hair''}
    \end{subfigure}
    \hfill
    \begin{subfigure}[t]{0.32\textwidth}
        \centering
        \includegraphics[width=\textwidth]{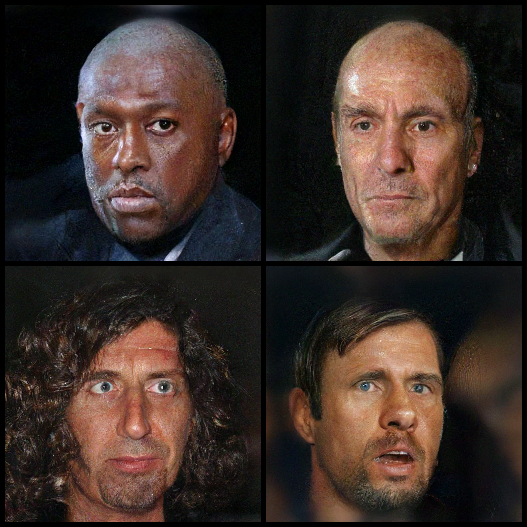}
        \caption*{``a photo of a man with a serious expression''}
    \end{subfigure}
    \hfill
    \begin{subfigure}[t]{0.32\textwidth}
        \centering
        \includegraphics[width=\textwidth]{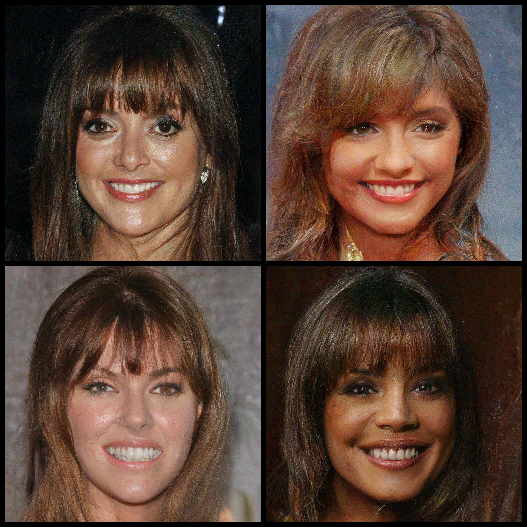}
        \caption*{``a photo of a smiling woman with bangs''}
    \end{subfigure}
    
    \vspace{0.3cm}
    
    \begin{subfigure}[t]{0.32\textwidth}
        \centering
        \includegraphics[width=\textwidth]{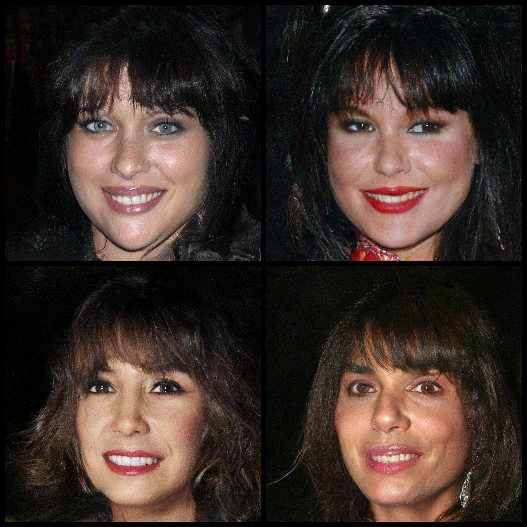}
        \caption*{``a photo of a smiling woman with black hair and bangs''}
    \end{subfigure}
    \hfill
    \begin{subfigure}[t]{0.32\textwidth}
        \centering
        \includegraphics[width=\textwidth]{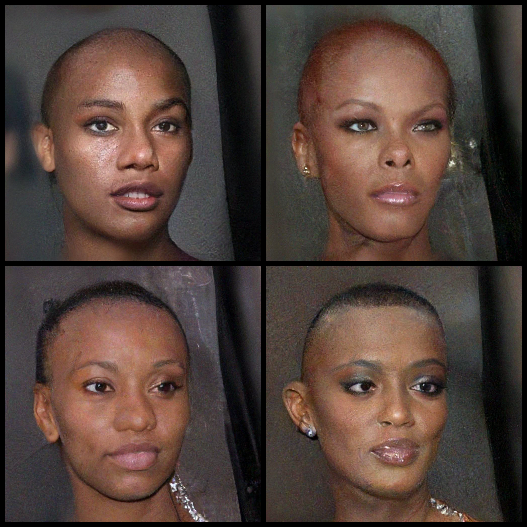}
        \caption*{``a photo of a bald woman''}
    \end{subfigure}
    \hfill
    \begin{subfigure}[t]{0.32\textwidth}
        \centering
        \includegraphics[width=\textwidth]{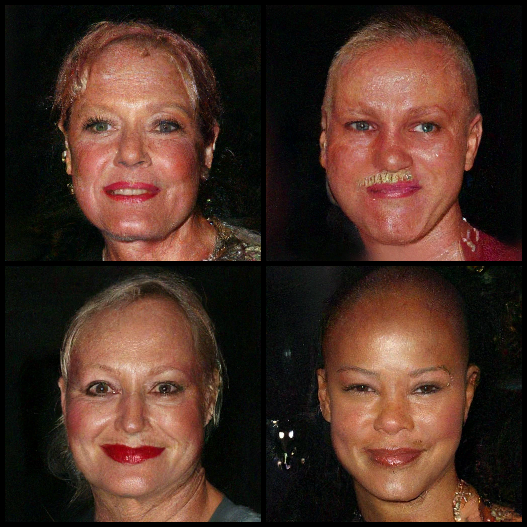}
        \caption*{``a photo of a bald woman'' (failure mode)}
    \end{subfigure}
    
    \caption{Generated samples on CelebA-HQ using TFTF (\cref{alg:tftf}) with $K=25$. Each panel shows four samples drawn from the resulting weighted particle-based approximation for the given text prompt. The bottom-right panel illustrates a failure mode where our method fails to correctly capture the ``bald'' attribute.}
    \label{fig:sample_celeb}
\end{figure*}

\section{Additional Experimental Details}
\label{sec:exp_details}

\subsection{Toy Example}
\label{sec:toy_details}

\textbf{Sampling:} We use $N = 1000$ integration steps with Euler (for ODE) and Euler--Maruyama (for SDE) discretization, corresponding to a step size of $\Delta t = 0.001$. 

\textbf{TFTF:} $\alpha(t) = 1/t$, $\beta(t) = 1$, $[t_{n_1}, t_{n_2}] = [0.4, 0.8]$. In practice, we employ systematic resampling rather than multinomial resampling due to its superior computational efficiency.

\textbf{FMPS~\cite{song2024flow}:} $r = 1$.

\textbf{D-Flow~\cite{ben2024d}:} Adam optimizer with learning rate $0.1$, $20$ optimization iterations, MSE cost function.

\textbf{FlowChef~\cite{patel2025flowchef}:} $s' = \Delta t$, cost function from Equation~(3) of \citet{patel2025flowchef}.

All experiments were conducted on an NVIDIA RTX 3090 GPU.

\subsection{MNIST}
\label{sec:mnist_details}


\textbf{Unconditional Model:} U-Net trained from scratch with batch size $128$, Adam optimizer (learning rate $2 \times 10^{-4}$), for $100$ epochs. Images are converted from PIL format to tensors and normalized to $[-1, 1]$.

\textbf{Likelihood-Specifying Classifier:} ResNet-50~\cite{he2016deep} from \url{https://github.com/VSehwag/minimal-diffusion}, consistent with TDS~\citep{wu2023practical}.

\textbf{External Classifier:} Super Learner model from \url{https://github.com/Curt-Park/handwritten_digit_recognition}.

\textbf{Sampling:} $N = 800$ steps, $\Delta t = 0.00125$.

\textbf{TFTF:} $\alpha(t) = 4/t$ for Table~\ref{tab:mnist_results}; $\alpha(t) \in \{1/t, 2/t, 4/t, 8/t\}$ for ablation. $\beta(t) = 1$, $[t_{n_1}, t_{n_2}] = [3/16, 1/2]$.

\textbf{FMPS:} $r = 0.1$.

\textbf{FlowChef:} $s' = 2\Delta t$, cost function from Equation~(5) of \citet{patel2025flowchef}.

Baseline hyperparameters were determined via grid search. Experiments were conducted on NVIDIA RTX 3090 and RTX 4090 GPUs.

\subsection{CIFAR-10}
\label{sec:cifar_details}


\textbf{Unconditional Model:} Pretrained rectified flow model~\cite{liu2022flow} from \url{https://github.com/gnobitab/RectifiedFlow}.

\textbf{Likelihood-Specifying Classifier:} VGG-13-BN~\cite{simonyan2014very} from \url{https://github.com/huyvnphan/PyTorch_CIFAR10}.

\textbf{External Classifier:} DenseNet-121~\cite{huang2017densely} from \url{https://github.com/huyvnphan/PyTorch_CIFAR10}.

\textbf{Evaluation Metrics:} Inception Score (IS)~\cite{salimans2016improved} uses Inception-v3~\cite{szegedy2016rethinking} from \texttt{torchvision.models.inception\_v3} with $10$ splits. For intra-class Wasserstein-2 distance, since the functional form of $p_{\mathrm{data}}(\cdot|y)$ is unknown, we extract $2048$-dimensional features using Inception-v3 and compute the feature-space $\mathcal{W}_2$ distance between generated samples and dataset images of the corresponding class.

\textbf{Sampling:} $N = 800$ steps, $\Delta t = 0.00125$.

\textbf{TFTF:} $\alpha(t) = 4/t$, $\beta(t) = 1$. For Table~\ref{tab:cifar10_results}, $[t_{n_1}, t_{n_2}] = [7/16, 14/16]$. For sensitivity analysis, we vary $t_{n_1} \in \{5/16, 6/16, 7/16, 8/16, 9/16\}$ with $t_{n_2} = 14/16$ fixed, and $t_{n_2} \in \{10/16, 11/16, 12/16, 13/16, 14/16\}$ with $t_{n_1} = 7/16$ fixed. Outer-level resampling threshold $M_\tau = 800$ for Nested TFTF.

\textbf{FMPS:} $r = 0.025$.

\textbf{FlowChef:} $s' = 5\Delta t$, cost function from Equation~(5) of \citet{patel2025flowchef}.

Baseline hyperparameters were determined via grid search. Experiments were conducted on NVIDIA RTX 3090 and RTX 4090 GPUs.

\subsection{CelebA-HQ}
\label{sec:celeba_details}


\textbf{Unconditional Model:} Pretrained rectified flow model~\citep{liu2022flow} from \url{https://github.com/gnobitab/RectifiedFlow}.

\textbf{Text Prompts:} Constructed based on attribute combinations from \url{https://github.com/BillyXYB/TransEditor}. Each prompt consists of one positive text and multiple negative texts (provided in Supplementary Material).

\textbf{Likelihood-Specifying Model:} ConvNext-XXLarge CLIP~\cite{radford2021learning, liu2022convnet,cherti2023reproducible} from \url{https://github.com/mlfoundations/open_clip}. Likelihood is computed as the softmax (temperature $\tau = 0.01$) of cosine similarity between image features and positive text, against the maximum cosine similarity with negative texts.

\textbf{External Classifier:} Attribute classifiers from \url{https://github.com/BillyXYB/TransEditor}. Due to defects in the blonde hair classifier, we use human evaluation for this attribute. All-attributes accuracy (ACC$_E$) represents the frequency at which all specified attributes are satisfied.

\textbf{Sampling:} $N = 400$ steps, $\Delta t = 0.0025$.

\textbf{TFTF:} $\alpha(t) = 2/t$, $\beta(t) = 1$, $[t_{n_1}, t_{n_2}] = [3/40, 2/5]$, post-resampling diversification interval length $\delta = 1/10$.

\textbf{FMPS:} $r = 0.003$.

\textbf{FlowChef:} $s' = 0.1$, cost function from Equation~(5) of \citet{patel2025flowchef}.

Baseline hyperparameters were determined via grid search. Experiments were conducted on NVIDIA RTX 4090 and RTX 5090 GPUs.

\section{Additional Experiments}
\label{sec:exp_more}

\subsection{Accelerated Variant and Computational Complexity Analysis}
\label{appendix:acc_ver}

\begin{table}[t]
\centering
\caption{Ablation study comparing the original and accelerated versions on CIFAR-10 class-conditional sampling. ACC$_\text{L}$: likelihood-specifying classifier accuracy; ACC$_\text{E}$: external classifier accuracy; IS: Inception Score; $\mathcal{W}_2$: intra-class feature-space Wasserstein-2 distance (averaged across classes); ESS: effective sample size (averaged across classes).}
\label{tab:acc_ablation}
\begin{tabular}{lccccc}
\toprule
Method & ACC$_\text{L}$ & ACC$_\text{E}$ $\uparrow$ & IS $\uparrow$ & $\mathcal{W}_2$ $\downarrow$ & ESS $\uparrow$ \\
\midrule
Nested TFTF & 92.56\% & 92.82\% & \textbf{8.88 \scriptsize{$\pm$0.08}} & 32.61 & 13000.80/16000 \\
Nested TFTF (Accelerated) & 92.47\% & \textbf{92.94\%} & 8.60 \scriptsize{$\pm$0.06} & \textbf{32.33} & \textbf{13428.10/16000} \\
\bottomrule
\end{tabular}
\end{table}

As stated in \cref{subsubsec:construct_vc}, we construct the conditional velocity field $v_c$ by first projecting the noise-corrupted intermediate state $x(t)$ onto a predicted final state $\hat{x}_1(x(t))$ via a single Euler step:
\begin{align}
\hat{x}_1(x(t)) = x(t) + (1-t) \cdot v_{\theta^*}(x(t),t),
\end{align}
and then defining the practical conditional velocity field as
\begin{equation}
\begin{split}
v_c(x(t),t) = v_{\theta^*}(x(t), t) + \frac{1 - t}{t} \cdot \beta(t) \cdot \nabla_{x(t)} \log \tilde{p}(y| \hat{x}_1(x(t))).
\end{split}
\label{eq:restate_vc}
\end{equation}

Applying the chain rule, the gradient term $\nabla_{x(t)} \log \tilde{p}(y| \hat{x}_1(x(t)))$ in \cref{eq:restate_vc} can be expanded as
\begin{equation}
\nabla_{x(t)} \log \tilde{p}(y| \hat{x}_1(x(t))) = \left[ I + (1-t) \frac{\partial v_{\theta^*}(x(t),t)}{\partial x(t)} \right]^\top \nabla_{\hat{x}_1(x(t))} \log \tilde{p}(y| \hat{x}_1(x(t))).
\label{eq:chain_rule_gradient}
\end{equation}
The primary computational burden stems from the Jacobian term $\frac{\partial v_{\theta^*}(x(t),t)}{\partial x(t)}$, which requires backpropagation through the pretrained velocity field. To reduce this cost, we propose omitting the term $(1-t) \frac{\partial v_{\theta^*}(x(t),t)}{\partial x(t)}$ and directly approximating $\nabla_{x(t)} \log \tilde{p}(y| \hat{x}_1(x(t)))$ with $\nabla_{\hat{x}_1(x(t))} \log \tilde{p}(y| \hat{x}_1(x(t)))$. Notably, this simplification preserves the asymptotic accuracy of \cref{alg:tftf,alg:nested_tftf}: since $v_c$ is used solely for constructing proposal distributions, the proofs in \cref{appendix:proof_prop3,appendix:proof_prop4} remain valid for this accelerated variant.

To empirically validate this acceleration technique, we revisit the class-conditional sampling task on CIFAR-10 from \cref{subsec:class_conditional}, applying it to Nested TFTF (\cref{alg:nested_tftf}) while keeping all other hyperparameters unchanged. As shown in \cref{tab:acc_ablation}, the original and accelerated versions achieve comparable classification accuracy, Inception Score, $\mathcal{W}_2$ distance, and ESS. A detailed computational complexity analysis of various methods is provided in \cref{tab:computation_compare}.


\begin{table}[t]
\caption{Computational complexity comparison of different methods. $N$: number of Euler (or Euler--Maruyama) steps; $K$: number of particles; $M$: number of optimization iterations for the initial point; $[t_{n_1}, t_{n_2}]$: resampling interval where $0 < t_{n_1} < t_{n_2} < 1$.}
\label{tab:computation_compare}
\centering
\small
\begin{tabular}{lcccc}
\toprule
\textbf{Method} & \textbf{Forward Eval.} & \textbf{Backward Prop.} & \textbf{Likelihood Grad.} & \textbf{Wall-Clock Time}$^\dagger$ \\
\midrule
D-Flow & $(M+1)NK$ & $MNK$ & $MK$ & --$^a$ \\
TDS & $NK$ & $NK$ & $NK$ & 2min 30s$^b$ \\
FMPS & $NK$ & $NK$ & $NK$ & 1min 46s \\
FlowChef & $NK$ & $0$ & $NK$ & 1min 07s \\
\midrule
TFTF & $NK$ & $NK \cdot (t_{n_2}-t_{n_1})$ & $NK \cdot (t_{n_2}-t_{n_1})$ & 1min 05s$^c$ \\
TFTF (Accelerated) & $NK$ & $0$ & $NK \cdot (t_{n_2}-t_{n_1})$ & 46s$^c$ \\
\bottomrule
\end{tabular}
\vspace{0.5em}
\begin{flushleft}
\footnotesize
$^\dagger$Measured on a single NVIDIA RTX 3090 for CIFAR-10 class-conditional generation with $N=800$ and $K=16$. \\
$^a$D-Flow's wall-clock time depends on the number of optimization iterations $M$. \\
$^b$Assuming the denoising network and velocity-field network have the same architectural complexity. \\
$^c$With $[t_{n_1}, t_{n_2}] = [7/16, 12/16]$.
\end{flushleft}
\end{table}

\subsection{Sensitivity Analysis on the Resampling Interval}
\label{appendix:ab_interval_cifar}

\begin{figure}[t]
    \centering
    \begin{subfigure}[b]{0.48\columnwidth}
        \includegraphics[width=\linewidth]{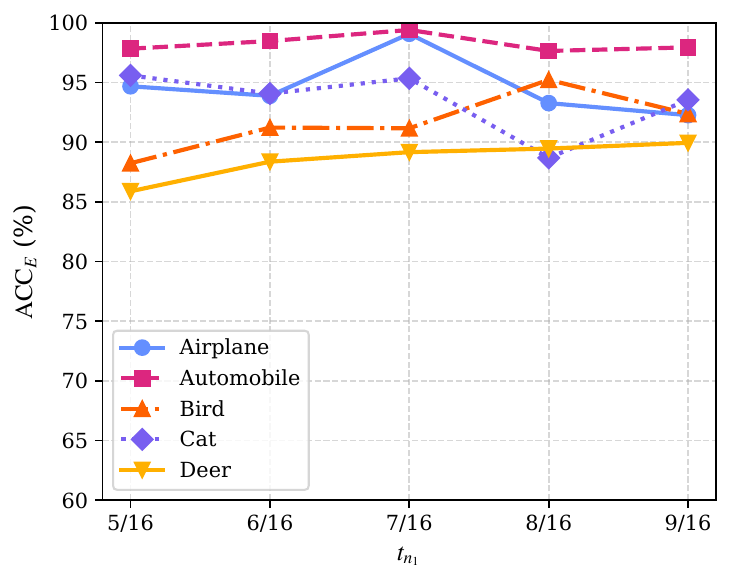}
    \end{subfigure}
    \hfill
    \begin{subfigure}[b]{0.48\columnwidth}
        \includegraphics[width=\linewidth]{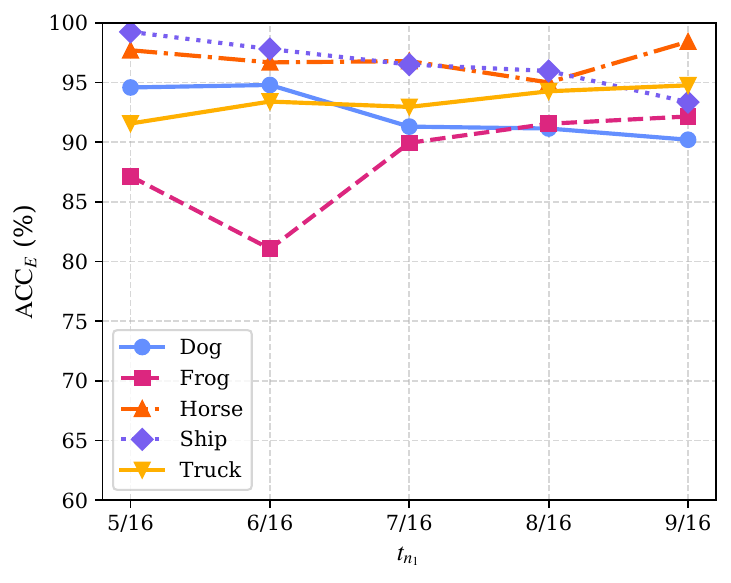}
    \end{subfigure}
    \caption{Classification accuracy across different target classes as a function of $t_{n_1}$, with $t_{n_2} = 14/16$ held fixed.}
    \label{fig:t_n1}
    
    \vspace{1em} 
    
    \begin{subfigure}[b]{0.48\columnwidth}
        \includegraphics[width=\linewidth]{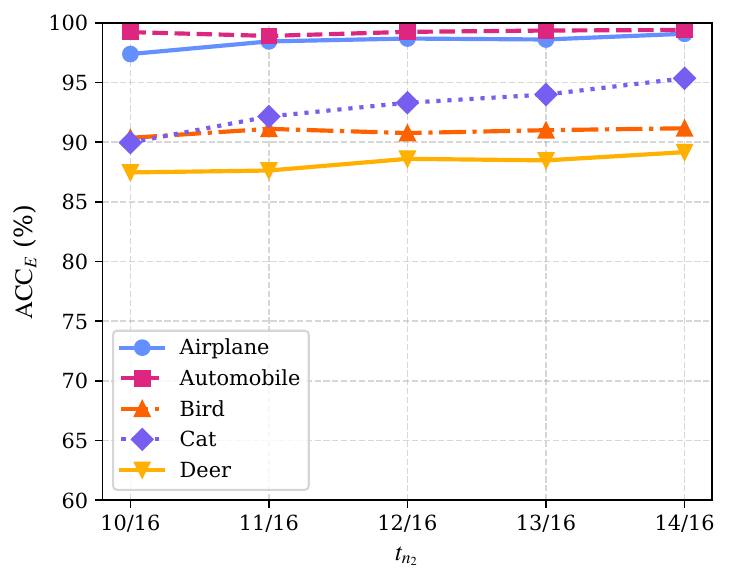}
    \end{subfigure}
    \hfill
    \begin{subfigure}[b]{0.48\columnwidth}
        \includegraphics[width=\linewidth]{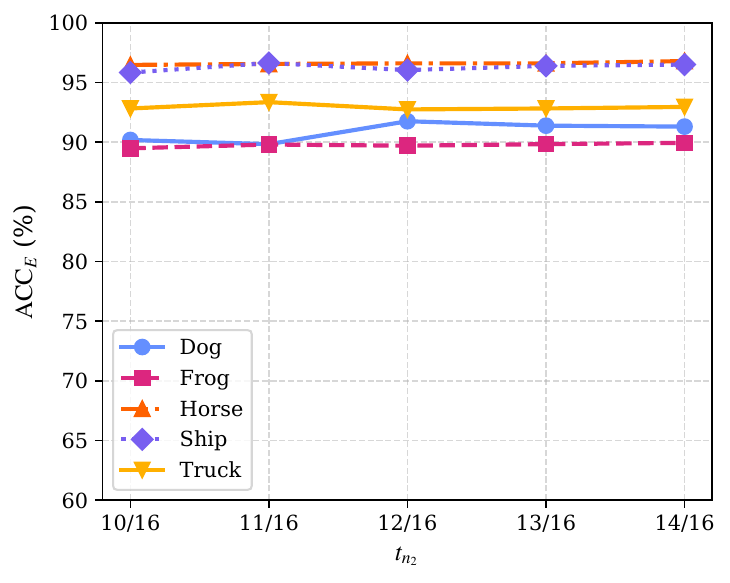}
    \end{subfigure}
    \caption{Classification accuracy across different target classes as a function of $t_{n_2}$, with $t_{n_1} = 7/16$ held fixed.}
    \label{fig:t_n2}
\end{figure}

In this section, we investigate the sensitivity of \cref{alg:tftf} to the choice of resampling interval $[t_{n_1}, t_{n_2}]$ on CIFAR-10, using the accelerated variant proposed in \cref{appendix:acc_ver}. Throughout the experiments, we fix the following hyperparameters: particle count $K = 25$, guidance scale $\beta(t) = 1$, and $\alpha(t) = 4/t$.

\cref{fig:t_n1} illustrates the variation in $\text{ACC}_E$ across different target classes for various values of $t_{n_1}$, with $t_{n_2} = 14/16$ held fixed. \cref{fig:t_n2} presents the corresponding results for varying $t_{n_2}$ values while maintaining $t_{n_1} = 7/16$ constant. As shown, \cref{alg:tftf} exhibits robust performance across different choices of the intermediate interval $[t_{n_1}, t_{n_2}]$, with the mean $\text{ACC}_E$ averaged over all classes consistently exceeding 90\% for all configurations examined.

Furthermore, the results in \cref{fig:t_n1} reveal that initiating resampling earlier does not necessarily yield improved performance. While smaller values of $t_{n_1}$ advance the timing of particle pruning and enable promising particles to amplify themselves at earlier stages, samples in the early phase of the generation process tend to be noisy, and thus the look-ahead function may fail to provide sufficient information about the final state. This inherent trade-off accounts for the observed decline in $\text{ACC}_E$ for the ``cat'' class at $t_{n_1} = 8/16$ and for the ``frog'' class at $t_{n_1} = 6/16$ in \cref{fig:t_n1}. The discrepancy in the timing of these drops further suggests that the emergence of class-discriminative features occurs asynchronously across different classes. Additionally, the results presented in \cref{fig:t_n2} demonstrate that moderately early termination of the resampling procedure can reduce computational overhead without compromising classification accuracy, while avoiding the loss of particle diversity that may result from late-stage resampling.

\end{document}